\newcommand\str{\bgroup\markoverwith
{\textcolor{red}{\rule[0.5ex]{2pt}{1.5pt}}}\ULon} %does \sout but in RED color
\newcommand{\al}{\alpha}
\newcommand{\eps}{\epsilon}
\newcommand{\tsigma}{\tilde{\sigma}}
\newcommand{\tnabla}{\tilde{\nabla}}
\newcommand{\bigO}{\mathcal{O}}
\newcommand{\R}{{\mathbb{R}}}
\newcommand{\A}{\mathcal{A}}
\newcommand{\F}{\mathcal{F}}
\newcommand{\D}{\mathcal{D}}
\newcommand{\E}{\mathbb{E}}
\newcommand{\Z}{\mathcal{Z}}
\newtheorem{theorem}{Theorem}[section]  % use one numbering scheme for all these objects
\newtheorem{definition}[theorem]{Definition}
\newtheorem{proposition}[theorem]{Proposition}
\newtheorem{lemma}[theorem]{Lemma}
\newtheorem{remark}[theorem]{Remark}
\newtheorem{corollary}[theorem]{Corollary}
\newtheorem{assumption}{Assumption}
\newcommand{\beq}{\begin{equation}}
\newcommand{\eeq}{\end{equation}}
\newcommand{\beqa}{\begin{eqnarray}}
\newcommand{\eeqa}{\end{eqnarray}}
\newcommand{\beqs}{\begin{equation*}}
\newcommand{\eeqs}{\end{equation*}}
\newcommand{\beqas}{\begin{eqnarray*}}
\newcommand{\eeqas}{\end{eqnarray*}}
\begin{document}
\title{Personalized Federated Learning: A Meta-Learning Approach}
\author{Alireza Fallah\thanks{Department of Electrical Engineering and Computer Science, Massachusetts Institute of Technology, Cambridge, MA, USA. \{afallah@mit.edu, asuman@mit.edu\}.} , Aryan Mokhtari\thanks{Department of Electrical and Computer Engineering, The University of Texas at Austin, Austin, TX, USA. mokhtari@austin.utexas.edu.} , Asuman Ozdaglar$^*$}
\date{}

\maketitle

\begin{abstract}
In Federated Learning, we aim to train models across multiple computing units (users), while users can only communicate with a common central server, without exchanging their data samples. This mechanism exploits the computational power of all users and allows users to obtain a richer model as their models are trained over a larger set of data points. However, this scheme only develops a \textit{common} output for all the users, and, therefore, it does not adapt the model to each user. This is an important missing feature, especially given the \textit{heterogeneity} of the underlying data distribution for various users. In this paper, we study a personalized variant of the federated learning in which our goal is to find an \textit{initial shared model} that current or new users can easily adapt to their local dataset by performing one or a few steps of gradient descent with respect to their own data. This approach keeps all the benefits of the federated learning architecture, and, by structure, leads to a more personalized model for each user. We show this problem can be studied within the Model-Agnostic Meta-Learning (MAML) framework. Inspired by this connection, we study a \textit{personalized} variant of the well-known Federated Averaging algorithm and evaluate its performance in terms of gradient norm for non-convex loss functions. Further, we characterize how this performance is affected by the closeness of underlying distributions of user data, measured in terms of distribution distances such as Total Variation and 1-Wasserstein metric.
\end{abstract}

%!TEX root = main.tex

%%%%%%%%%%%%%%%%%%%%%%%%%%%%%%%%%%%%%%%%%%%%%%%%%%%%%%%%%%%%%%%%%%%%%%%%
%%%%%%%%%%%%%%%%%%%%%%%%%%%%%%%%%%%%%%%%%%%%%%%%%%%%%%%%%%%%%%%%%%%%%%%%
%%%%%%%%%%%%%%%%%%%%%%%%%%%%%%%%%%%%%%%%%%%%%%%%%%%%%%%%%%%%%%%%%%%%%%%%

\section{Introduction}\label{sec:intro}

In Federated Learning (FL), we consider a set of $n$ users that are all connected to a central node (server), where each user has access only to its local data \cite{konevcny2016federated}. In this setting, the users aim to come up with a model that is trained over all the data points in the network without exchanging their local data with other users or the central node due to privacy issues or communication limitations. 
More formally, if we define $f_i:\R^d \to \R$ as the loss corresponding to user $i$, the goal is to solve
\begin{equation}\label{class_FL}
\min_{w \in \R^d} f(w):= \frac{1}{n} \sum_{i=1}^n f_i(w).
\end{equation}
In particular, consider a supervised learning setting, where $f_i$ represents expected loss over the data distribution of user $i$, i.e., 
\begin{equation}\label{supervised_ML}
f_i(w) := \E_{(x,y) \sim p_i} \left [ l_i(w;x,y) \right ],	
\end{equation}
where $l_i(w;x,y)$ measures the error of model $w$ in predicting the true label $y \in \mathcal{Y}_i$ given the input $x \in \mathcal{X}_i$, and $p_i$ is the distribution over $\mathcal{X}_i \times \mathcal{Y}_i$. The focus of this paper is on a data  \textit{heterogeneous} setting where the probability distribution $p_i$ of users are not identical. To illustrate this formulation, consider the example of training a Natural Language Processing (NLP) model over the devices of a set of users. In this problem, $p_i$ represents the empirical distribution of words and expressions used by user $i$. Hence, $f_i(w)$ can be expressed as
%\begin{equation}\label{NLP_example}
$f_i(w) = \sum_{(x,y) \in \mathcal{S}_i}	p_i(x,y) l_i(w;x,y)$,
%\end{equation} 
where $\mathcal{S}_i$ is the data set corresponding to user $i$ and $p_i(x,y)$ is the probability that user $i$ assigns to a specific word which is proportional to the frequency of using this word by user $i$.

Indeed, each user can solve its local problem defined in \eqref{supervised_ML} without any exchange of information with other users; however, the resulted model may not generalize well to new samples as it has been trained over a small number of samples.  If users cooperate and exploit the data available at all users, then their local models could obtain stronger generalization guarantees.  A conventional approach for achieving this goal is minimizing the aggregate of local functions defined in~\eqref{class_FL}.
 However, this scheme only develops a \textit{common} output for all the users, and therefore, it does not adapt the model to each user. In particular, in the  \textit{heterogeneous} settings where the underlying data distribution of users are not identical, the resulted global model obtained by minimizing the average loss could perform arbitrarily poorly once applied to the local dataset of each user. In other words, the solution of problem \eqref{class_FL} is not \textit{personalized} for each user. To highlight this point, recall the NLP example, where although the distribution over the words and expressions varies from one person to another, the solution to problem \eqref{class_FL} provides a shared answer for all users, and, therefore, it is not fully capable of achieving a user-specific model. %Hence, we would ideally aim for finding a personalized model for each agent, while exploiting the computational power and data of all agents to make the local models richer. 

%In most FL methods, Problem \eqref{class_FL} is solved in multiple rounds, where at each round the center sends the current model to a fraction of the users and those users update the model with respect to their own loss functions, usually by performing a few steps of a gradient-based method. Then, these users return their updated models to the center, and the center combines the received models to update the global model (e.g., by averaging, as in FedAvg Algorithm \cite{mcmahan2016communication}) and sends the updated model to a (possibly different) fraction of the users for the next round. This way, the computational power of all the users in the network are used to train the global model. In addition, the shared model is trained over a larger data set which could lead to a richer model. This approach leads to a model that performs well for all users when the data distribution of users are identical are very close to each other. However, it is not necessarily the case that the data samples of all users are drawn from a common underlying distribution. This heterogeneity leads to an issue with formulation \eqref{class_FL} in that the resulting model is only good \textit{on average} and it does not take into account the heterogeneity of data distribution of users.

%Hence, in the setting that the underlying distribution of data points of the users are not identical, solving the average problem defined in~\eqref{class_FL} could lead to poor local performance for each user. 

In this paper, we overcome this issue by considering a modified formulation of the federated learning problem which incorporates personalization (Section \ref{sec:FL_MAML}). Building on the Model-Agnostic Meta-Learning (MAML) problem formulation introduced in \cite{finn17a}, the goal of this new formulation is to find an initial point shared between all users which performs well \textit{after} each user updates it with respect to its own loss function, potentially by performing a few steps of a gradient-based method. This way, while the initial model is derived in a distributed manner over all users, the final model implemented by each user differs from other ones based on her or his own data. We study a Personalized variant of the FedAvg algorithm, called Per-FedAvg, designed for solving the proposed personalized FL problem (Section \ref{sec:Per-FA}). In particular, we elaborate on its connections with the original FedAvg algorithm~\cite{mcmahan2016communication}, and also, discuss a number of considerations that one needs to take into account for implementing Per-FedAvg. We also establish the convergence properties of the proposed Per-FedAvg algorithm for minimizing non-convex loss functions (Section~\ref{sec:Theory}). In particular, we characterize the role of data heterogeneity and closeness of data distribution of different users, measured by distribution distances, such as Total Variation (TV) or 1-Wasserstein, on the convergence of Per-FedAvg.

\noindent \textbf{Related Work.} Recently we have witnessed significant progress in developing novel methods that address different challenges in FL; see \cite{kairouz2019advances,DBLP:journals/spm/LiSTS20}. In particular, there have been several works on various aspects of FL, including preserving the privacy of users \cite{duchi2014privacy,mcmahan2017learning,agarwal2018cpsgd,zhu2019federated} and lowering communication cost~\cite{reisizadeh2019fedpaq,dai2019hyper,basu2019qsparse,li2020acceleration}. Several work develop algorithms for the homogeneous setting, where the data points of all users are sampled from the same probability distribution \cite{stich2018local,wang2018cooperative,zhou2018convergence,DBLP:conf/iclr/LinSPJ20}. 
More related to our paper, there are several works that study statistical heterogeneity of users' data points in FL \cite{zhao2018federated,sahu2018convergence,karimireddy2019scaffold,haddadpour2019convergence,li2019convergence,khaled2020tighter}, but they do not attempt to find a \textit{personalized solution} for each user. %{\color{red}{In addition, \cite{smith2017federated} uses multi-task learning framework and proposes a new method, MOCHA, to address these statistical and systems challenges (including data heterogeneity as well as communication efficiency). }}

The centralized version of model-agnostic meta-learning (MAML) problem was first proposed in~\cite{finn17a} and followed by a number of papers studying its empirical characteristics \cite{MAML++, Meta-SGD,  grant2018recasting,nichol2018first,CAVIA, alpha-MAML} as well as its convergence properties \cite{NIPS2019_8432, fallah2019convergence}. 
In this work, we focus on the convergence of MAML methods for the FL setting that is more challenging as nodes perform multiple local updates before sending their updates to the server, which is not considered in previous theoretical works on meta-learning.

%one paragraph on meta-federated learning 

%As mentioned earlier, \cite{mcmahan2016communication} proposed the \texttt{FedAvg} algorithm, where the global model is updated by averaging local SGD updates. Later, \cite{guha2019one} proposed one-shot Federated Learning (FL) in which the master node learns the model after a single round of communication. 
%Also, several approaches have been used to address the communication limitations in FL. This includes quantization and compression ideas \cite{reisizadeh2019fedpaq,dai2019hyper} as well as performing multiple local updates before communicating with the master \cite{stich2018local,lin2018don,wang2018cooperative}. 
%Several works have studied the problem of preserving privacy in federated learning \cite{duchi2014privacy,mcmahan2017learning,agarwal2018cpsgd,zhu2019federated}. 

Recently, the idea of personalization in FL and its connections with MAML has gained a lot of attention. In particular, \cite{chen2018federated} considers a formulation and algorithm similar to our paper, and elaborates on the empirical success of this framework. Also, recently, there has been a number of other papers that have studied different combinations of MAML-type methods with FL architecture from an empirical point of view \cite{jiang2019improving, li2019fair}. However, our main focus is on developing a theoretical understating regarding this formulation, where we characterize the convergence of the Per-FedAvg, and the role of this algorithm's parameters on its performance. Besides, in our numerical experiment section, we show how the method studied in \cite{chen2018federated} may not perform well in some cases, and propose another algorithm which addresses this issue. In addition, an independent and concurrent work \cite{lin2020collaborative} studies a similar formulation theoretically for the case of strongly convex functions. The results in  \cite{lin2020collaborative} are completely different from ours, as they study the case that the functions are strongly convex and exact gradients are available, while we study nonconvex functions, and also address gradient stochasticity.

Using meta-learning and multi-task learning to achieve personalization is not limited to MAML framework. In particular, \cite{khodak2019adaptive} proposes ARUBA, a meta-learning algorithm inspired by online convex optimization, and shows that applying it to FedAvg improves its performance. A similar idea is later used in \cite{li2019differentially} to design differentially private algorithms with application in FL. Also in \cite{smith2017federated}, the authors use multi-task learning framework and propose a new method, MOCHA, to address the statistical and systems challenges, including data heterogeneity and communication efficiency. Their proposed multi-task learning scheme also leads to a set of solutions that are more user-specific. A detailed survey on the connections of FL and multi-task and meta-learning can be found in \cite{kairouz2019advances,DBLP:journals/spm/LiSTS20}. Also, in \cite{hanzely2020federated}, the authors consider a framework for training a mixture of a single global model and local models, leading to a personalized solution for each user. A similar idea has been studied in \cite{deng2020adaptive}, where the authors propose an adaptive federated learning algorithm that learns a mixture of local and global models as the personalized model.

\section{Personalized Federated Learning via Model-Agnostic Meta-Learning}\label{sec:FL_MAML}

As we stated in Section~\ref{sec:intro}, our goal in this section is to show how the fundamental idea behind the Model-Agnostic Meta-Learning (MAML) framework in \cite{finn17a} can be exploited to design a personalized variant of the FL problem. To do so, let us first briefly recap the MAML formulation. Given a set of tasks drawn from an underlying distribution, in MAML, in contrast to the traditional supervised learning setting, the goal is not finding a model which performs well on all the tasks in expectation. Instead, in MAML, we assume we have a limited computational budget to update our model after a new task arrives, and in this new setting, we look for an \textit{initialization} which performs well \textit{after} it is updated with respect to this new task, possibly by one or a few steps of gradient descent. In particular, if we assume each user takes the initial point and updates it using one step of gradient descent with respect to its own loss function, then problem \eqref{class_FL} changes to
\begin{equation}\label{Meta_FL}
\min_{w \in \R^d} F(w) := \frac{1}{n} \sum_{i=1}^n f_i(w - \alpha \nabla f_i(w)),	
\end{equation}
where $\alpha \geq 0$ is the stepsize. 
The strength of this formulation is that, not only it allows us to maintain the advantages of FL, but also it captures the difference between users as either existing or new users can take the solution of this new problem as an initial point and slightly update it with respect to their own data. 
Going back to the NLP example, this means that the users could take this resulting initialization and update it by going over their own data $\mathcal{S}_i$ and performing just one or few steps of gradient descent to obtain a model that works well for their own dataset.

As mentioned earlier, for the considered heterogeneous model of data distribution, solving problem~\eqref{class_FL} is not the ideal choice as it returns a single model that even after a few steps of local gradient may not quickly adjust to each users local data. On the other hand, by solving \eqref{Meta_FL} we find an initial model (Meta-model) which is trained in a way that after one step of local gradient leads to a good model for each individual user. This formulation can also be extended to the case that users run a few steps of gradient update, but to simplify our notation we focus on the single gradient update case. We would like to mention that the problem formulation in \eqref{Meta_FL} for FL was has been proposed independently in another work \cite{chen2018federated} and studied numerically. In this work, we focus on the theoretical aspect of this problem and seek a provably convergent method for the case that the functions $f_i$ are nonconvex.

\section{Personalized FedAvg}\label{sec:Per-FA}

In this section, we present the Personalized FedAvg (Per-FedAvg) method to solve \eqref{Meta_FL}. This algorithm is inspired by FedAvg, but it is designed to find the optimal solution of \eqref{Meta_FL} instead of~\eqref{class_FL}. In FedAvg, at each round, the server chooses a fraction of users with size $rn$ ($r\in(0,1]$) and sends its current model to these users. Each selected user $i$ updates the received model based on its own loss function $f_i$ and by running $\tau \geq 1$ steps of stochastic gradient descent. Then, the active users return their updated models to the server. Finally, the server updates the global model by computing the average of the models received from these selected users, and then the next round follows. Per-FedAvg follows the same principles. First, note that function $F$ in \eqref{Meta_FL} can be written as the average of \textit{meta-functions} $F_1,\dots, F_n$ where the meta-function $F_i$ associated with user $i$ is defined as 
\begin{equation}\label{Meta_FL_2}
F_i(w):= f_i(w - \alpha \nabla f_i(w)).
\end{equation}

To follow a similar scheme as FedAvg for solving problem~\eqref{Meta_FL}, the first step is to compute the gradient of local functions, which in this case, the gradient $\nabla F_i$, that is given by
\begin{equation}\label{grad_F_i}
\nabla F_i(w) = \left(I-\al \nabla^2 f_i(w)\right) \nabla f_i(w - \alpha \nabla f_i(w)). 
\end{equation}

Computing the gradient $\nabla f_i(w)$ at every round is often computationally costly. Hence, we take a batch of data $\D^i$ with respect to distribution $p_i$ to obtain an unbiased estimate $\tnabla f_i(w, \D^i)$ given by
\begin{equation}\label{unbiased_grad}
\tnabla f_i(w, \D^i) := 	\frac{1}{|\D^i|} \sum_{(x,y) \in \D^i} \nabla l_i(w;x,y).
\end{equation}
Similarly, the Hessian $\nabla^2 f_i(w)$ in \eqref{grad_F_i} can be replaced by its unbiased estimate $\tnabla^2 f_i(w, \D^i)$.

At round $k$ of Per-FedAvg, similar to FedAvg, first the server sends the current global model $w_k$ to a fraction of users $\A_k$ chosen uniformly at random with size $rn$. Each user $i \in \A_k$ performs $\tau$ steps of stochastic gradient descent locally and with respect to $F_i$. In particular, these local updates generate a local sequence $\{w_{k+1,t}^{i}\}_{t=0}^\tau$ where $w_{k+1,0}^{i}= w_k$ and, for $\tau \geq t \geq 1$,  
\begin{equation}\label{local update}
w_{k+1,t}^{i} = w_{k+1,t-1}^{i} -  \beta \tnabla F_i(w_{k+1,t-1}^{i}),	
\end{equation}
where $\beta$ is the local learning rate (stepsize) and $\tnabla F_i(w_{k+1,t-1}^{i})$ is an estimate of $\nabla F_i(w_{k+1,t-1}^{i})$ in \eqref{grad_F_i}. Note that the stochastic gradient $\tnabla F_i(w_{k+1,t-1}^{i})$ for all local iterates is computed using independent batches $\D_{t}^i$, $\D_{t}^{'i}$, and $\D_{t}^{''i}$ as follows
\begin{equation}\label{tnabla_F}
 \tnabla F_i(w_{k+1,t-1}^{i})\!:= \!\left(I\!-\!\al \tilde{\nabla}^2 f_i(w_{k+1,t-1}^{i},\D_{t}^{''i})\right) \tilde{\nabla} f_i \!\!\left (w_{k+1,t-1}^{i} \!-\! \alpha \tilde{\nabla}f_i(w_{k+1,t-1}^{i},\D_{t}^i) ,\D_{t}^{'i} \right )\!.
\end{equation}
Note that $\tnabla F_i(w_{k+1,t-1}^{i}) $ is a biased estimator of $\nabla F_i(w_{k+1,t-1}^{i})$ due to the fact that $\tilde{\nabla} f_i  (w_{k+1,t-1}^{i} - \alpha \tilde{\nabla}f_i(w_{k+1,t-1}^{i},\D_{t}^i) ,\D_{t}^{'i} )$ is a stochastic gradient that contains another stochastic gradient inside. 

%%%%%%%%%%%%%%%%%%%%%%%%%%%%%%%%%%%%%%%%%%%%%%%%%%%%%%%%%%%%%%%%%%%%%%%%
%%%%%%%%%%%%%%%%%%%%%%%%%%%%%%%%%%%%%%%%%%%%%%%%%%%%%%%%%%%%%%%%%%%%%%%%
%%%%%%%%%%%%%%%%%%%%%%%%%%%% Algorithm %%%%%%%%%%%%%%%%%%%%%%%%%%%%%%%%%
%%%%%%%%%%%%%%%%%%%%%%%%%%%%%%%%%%%%%%%%%%%%%%%%%%%%%%%%%%%%%%%%%%%%%%%%
%%%%%%%%%%%%%%%%%%%%%%%%%%%%%%%%%%%%%%%%%%%%%%%%%%%%%%%%%%%%%%%%%%%%%%%%
%%%%%%%%%%%%%%%%%%%%%%%%%%%%%%%%%%%%%%%%%%%%%%%%%%%%%%%%%%%%%%%%%%%%%%%%
\begin{algorithm}[tb]
    \begin{algorithmic}
    \caption{The proposed Personalized FedAvg (Per-FedAvg) Algorithm}
    \label{Algorithm1}
    \STATE {\bfseries Input:}Initial iterate $w_0$, fraction of active users $r$.
	\FOR{$k : 0 \text{ to } K-1$}
    \STATE Server chooses a subset of users $\A_k$ uniformly at random and with size $rn$;
    \STATE Server sends $w_k$ to all users in $\A_k$;
    \FOR{all $i \in \A_k$}
    \STATE Set $w_{k+1,0}^{i} = w_k$;
    \FOR{$t : 1 \text{ to } \tau$}
    \STATE Compute the stochastic gradient $\tilde{\nabla}f_i(w_{k+1,t-1}^{i},\D_{t}^i)$ using dataset  $\D_{t}^i$;
    \STATE  Set $\tilde{w}_{k+1,t}^{i} = w_{k+1,t-1}^{i} - \alpha \tilde{\nabla}f_i(w_{k+1, t-1}^{i},\D_{t}^i) $;
    \STATE  Set $w_{k+1,t}^{i} = \displaystyle{w_{k+1,t-1}^{i}-\beta (I-\al \tilde{\nabla}^2 f_i(w_{k+1, t-1}^{i},\D_{t}^{''i})) \tilde{\nabla} f_i(\tilde{w}_{k+1,t}^{i} ,\D_{t}^{'i}) }$;
    \ENDFOR
    \STATE  Agent $i$ sends $w_{k+1,\tau}^{i}$ back to server;
    \ENDFOR
   \STATE Server updates its model by averaging over received models: $w_{k+1} = \frac{1}{rn} \sum_{i \in \A_k} w_{k+1,\tau}^{i}$;
    \ENDFOR
\end{algorithmic}
\end{algorithm}
%%%%%%%%%%%%%%%%%%%%%%%%%%%%%%%%%%%%%%%%%%%%%%%%%%%%%%%%%%%%%%%%%%%%%%%%
%%%%%%%%%%%%%%%%%%%%%%%%%%%%%%%%%%%%%%%%%%%%%%%%%%%%%%%%%%%%%%%%%%%%%%%%
%%%%%%%%%%%%%%%%%%%%%%% End of Algorithm %%%%%%%%%%%%%%%%%%%%%%%%%%%%%%%
%%%%%%%%%%%%%%%%%%%%%%%%%%%%%%%%%%%%%%%%%%%%%%%%%%%%%%%%%%%%%%%%%%%%%%%%
%%%%%%%%%%%%%%%%%%%%%%%%%%%%%%%%%%%%%%%%%%%%%%%%%%%%%%%%%%%%%%%%%%%%%%%%
%%%%%%%%%%%%%%%%%%%%%%%%%%%%%%%%%%%%%%%%%%%%%%%%%%%%%%%%%%%%%%%%%%%%%%%%
Once, the local updates $w_{k+1,\tau}^{i}$ are evaluated, users send them to the server, and the server updates its global model by averaging over the received models, i.e., 
%\begin{equation}\label{server_avg}
$w_{k+1} = \frac{1}{rn} \sum_{i \in \A_k} w_{k+1,\tau}^{i}$	.
%\end{equation}

Note that as in other MAML methods \cite{finn17a,fallah2019convergence}, the update in \eqref{local update} can be implemented in two stages: First, we compute %for each user $i$ and each iteration $t$ we perform the following update 
$
\tilde{w}_{k+1,t}^{i} = w_{k+1,t-1}^{i} - \alpha \tilde{\nabla}f_i(w_{k+1, t-1}^{i},\D_{t}^i) 
$
and then evaluate $w_{k+1,t}^{i} $ by %following the update
%\begin{align}
$w_{k+1,t}^{i} 
= w_{k+1}^{i,t-1}-\beta (I-\al \tilde{\nabla}^2 f_i(w_{k+1, t-1}^{i},\D_{t}^{''i})) \tilde{\nabla} f_i(\tilde{w}_{k+1,t}^{i} ,\D_{t}^{'i}) .$%\nonumber
%\end{align}
Indeed, it can be verified the outcome of the these two steps is equivalent to the update in \eqref{local update}. To simplify the notation, throughout the paper, we assume that the size of $\D_{t}^i$, $\D_{t}^{'i}$, and $\D_{t}^{''i}$ is equal to $D$, $D'$, and $D''$, respectively, and for any $i$ and $t$. The steps of Per-FedAvg are depicted in Algorithm~\ref{Algorithm1}.

%!TEX root = main.tex
\section{Theoretical Results}\label{sec:Theory}

In this section, we study the convergence properties of the Personalized FedAvg (Per-FedAvg) method. We focus on nonconvex settings, and characterize the overall communication rounds between server and users to find an $\epsilon$-approximate first-order stationary point, where its formal definition follows.

\begin{definition}
A random vector $w_\eps \in \R^d$ is called an $\epsilon$-approximate First-Order Stationary Point (FOSP) for problem \eqref{Meta_FL} if it satisfies $\E[ \Vert \nabla F(w_\eps) \Vert^2] \leq \eps$.
\end{definition}

Next, we formally state the assumptions required for proving our main results.
%%%%%%%%%%%%%%%%%%%%%%%%%%%%%%%
%%%%%%%%%%%%%%%%%%%%%%%%%%%%%%%
%%%%     A  S  S  U  M  P  T  I  O  N     %%%%%%%%%%%
%%%%%%%%%%%%%%%%%%%%%%%%%%%%%%%
%%%%%%%%%%%%%%%%%%%%%%%%%%%%%%%
\begin{assumption}\label{asm:boundedness}
Functions $f_i$ are bounded below, i.e., $\min_{w \in \R^d} f_i(w) > -\infty$. 
\end{assumption}
%%%%%%%%%%%%%%%%%%%%%%%%%%%%%%%
%%%%%%%%%%%%%%%%%%%%%%%%%%%%%%%
%%%%     A  S  S  U  M  P  T  I  O  N     %%%%%%%%%%%
%%%%%%%%%%%%%%%%%%%%%%%%%%%%%%%
%%%%%%%%%%%%%%%%%%%%%%%%%%%%%%%
\begin{assumption}\label{asm_grad}
For every $i \in\{1,\dots,n\}$, $f_i$ is twice continuously differentiable and $L_i$-smooth, and also, its gradient is bounded by a nonnegative constant $B_i$, i.e., 
\begin{align}\label{grad_asm_1}
\|\nabla f_i(w)\| \leq B_i, 
\quad \Vert \nabla f_i(w) - \nabla f_i(u) \Vert \leq L_i \Vert w - u \Vert  \quad \forall w,u \in \R^d.
\end{align}  
\end{assumption}
%%%%%%%%%%%%%%%%%%%%%%%%%%%%%%%%%%%%%%%%%%
%%%%%%%%%%%%%%%% Remark %%%%%%%%%%%%%%%%%%
%%%%%%%%%%%%%%%%%%%%%%%%%%%%%%%%%%%%%%%%%%
%It is worth noting that \eqref{grad_asm_1:b} also implies that $f_i$ satisfies the following conditions for all $ w,u \in \R^d$: 
%\begin{subequations} \label{smooth_2}
%\begin{align}
%&-L_i I_d \preceq \nabla^2 f_i(w) \preceq L_i I_d,  \label{smooth_2:a}\\
%& | \ f_i(w) - f_i(u) - \nabla f_i(u)^\top (w-u)| \leq \frac{L_i}{2} \|w-u\|^2. \label{smooth_2:b}
%\end{align}
%\end{subequations}
%%%%%%%%%%%%%%%%%%%%%%%%%%%%%%%%%%%%%%%%%%
As we discussed in Section \ref{sec:Per-FA}, the second-order derivative of all functions appears in the update rule of Per-FedAvg Algorithm. Hence, in the next Assumption, we impose a regularity condition on the Hessian of each $f_i$ which is also a customary assumption in the analysis of second-order methods.
%%%%%%%%%%%%%%%%%%%%%%%%%%%%%%%
%%%%%%%%%%%%%%%%%%%%%%%%%%%%%%%
%%%%     A  S  S  U  M  P  T  I  O  N     %%%%%%%%%%%
%%%%%%%%%%%%%%%%%%%%%%%%%%%%%%%
%%%%%%%%%%%%%%%%%%%%%%%%%%%%%%%
\begin{assumption}\label{asm_Hesian_Lip}
For every $i \in\{1,\dots,n\}$, the Hessian of function $f_i$ is $\rho_i$-Lipschitz continuous, i.e.,
\begin{equation}\label{Hessian_Lip}
\Vert \nabla^2 f_i(w) - \nabla^2 f_i(u) \Vert \leq \rho_i \Vert w - u \Vert \quad \forall w,u \in \R^d.
\end{equation}
\end{assumption}
%%%%%%%%%%%%%%%%%%%%%%%%%%%%%%%%%%%%%%%%%%
%%%%%%%%%%%%%%%% Remark %%%%%%%%%%%%%%%%%%
%%%%%%%%%%%%%%%%%%%%%%%%%%%%%%%%%%%%%%%%%%
To simplify the analysis, in the rest of the paper, we define $B:= \max_{i} B_i$,  $L:= \max_{i} L_i$, and $\rho:= \max_{i} \rho_i$ which can be, respectively, considered as a bound on the norm of gradient of $f_i$, smoothness parameter of $f_i$, and Lipschitz continuity parameter of Hessian $\nabla^2 f_i$, for $i=1,\dots, n$. 

Our next assumption provides upper bounds on the variances of gradient and Hessian estimations. 
%%%%%%%%%%%%%%%%%%%%%%%%%%%%%%%
%%%%%%%%%%%%%%%%%%%%%%%%%%%%%%%
%%%%     A  S  S  U  M  P  T  I  O  N  %%%%%%%%%%%
%%%%%%%%%%%%%%%%%%%%%%%%%%%%%%%
%%%%%%%%%%%%%%%%%%%%%%%%%%%%%%%
\begin{assumption}\label{asm_bounded_var_i}
For any $w\in \R^d$, the stochastic gradient $\nabla l_i(x,y;w)$ and Hessian $\nabla^2 l_i(x,y; w)$, computed with respect to a single data point $(x,y) \in \mathcal{X}_i \times \mathcal{Y}_i$, have bounded variance, i.e., 
\begin{align}
&\E_{(x,y) \sim p_i} \left [\| \nabla l_i(x,y;w) - \nabla f_i(w)\|^2 \right ] \leq \sigma_G^2, \label{bounded_var_i} \\
&\E_{(x,y) \sim p_i} \left [\| \nabla^2 l_i(x,y;w) - \nabla^2 f_i(w)\|^2 \right ] \leq \sigma_H^2. \label{bounded_hess_i}  
\end{align}
%where $\sigma_G$ and $\sigma_H$ are non-negative constants. 
\end{assumption}
Finally, we state our last assumption which characterizes the \textit{similarity} between the tasks of users.
%%%%%%%%%%%%%%%%%%%%%%%%%%%%%%%%%%%%%%%%%%%%%
%%%%%%%%%%%%%%%%%%%%%%%%%%%%%%%
%%%%%%%%%%%%%%%%%%%%%%%%%%%%%%%
%%%%     A  S  S  U  M  P  T  I  O  N     %%%%%%%%%%%
%%%%%%%%%%%%%%%%%%%%%%%%%%%%%%%
%%%%%%%%%%%%%%%%%%%%%%%%%%%%%%%
\begin{assumption}\label{asm_similarity}
For any $w\in \R^d$, the gradient and Hessian of local functions $f_i(w)$ and the average function $f(w)=\sum_{i=1}^nf_i(w)$ satisfy the following conditions
%%%%
\begin{equation}\label{bounded_var}
 \frac{1}{n} \sum_{i=1}^n \| \nabla f_i(w) - \nabla f(w)\|^2 \leq \gamma_G^2, \qquad \frac{1}{n} \sum_{i=1}^n \| \nabla^2 f_i(w) - \nabla^2 f(w)\|^2 \leq \gamma_H^2. 
\end{equation}
\end{assumption}
%%%%%%%%%%%%%%%%%%%%%%%%%%%%%%%%%%%%%%%%%%
%%%%%%%%%%%%%%%% Remark %%%%%%%%%%%%%%%%%%
%%%%%%%%%%%%%%%%%%%%%%%%%%%%%%%%%%%%%%%%%%
Assumption~\ref{asm_similarity} captures the diversity between the gradients and Hessians of users. 
Note that under Assumption \ref{asm_grad}, the conditions in Assumption~\ref{asm_similarity} are automatically satisfied for $\gamma_G = 2B$ and $\gamma_H = 2L$. However, we state this assumption separately to highlight the role of similarity of functions corresponding to different users in convergence analysis of Per-FedAvg. In particular, in the following subsection, we highlight the connections between this assumption and the similarity of distributions $p_i$ for the case of supervised learning \eqref{supervised_ML} under two different distribution distances. 

%%%%%%%%%%%%%%%%%%%%%%%%%%%%%%%%%%%%%%%%%%
%%%%%%%%%%%%%%%%%%%%%%%%%%%%%%%%%%%%%%%%%%
%%%%%%%%%%%%%% Subsection %%%%%%%%%%%%%%%%
%%%%%%%%%%%%%%%%%%%%%%%%%%%%%%%%%%%%%%%%%%   
%%%%%%%%%%%%%%%%%%%%%%%%%%%%%%%%%%%%%%%%%%

\subsection{On the Connections of Task Similarity and Distribution Distances}\label{similarity_dist}

Recall the definition of $f_i$ in \eqref{supervised_ML}. Note that Assumption \ref{asm_similarity} captures the similarity of loss functions of different users. Hence, a fundamental question here is whether this has any connection with the closeness of distributions $p_i$. We study this connection by considering two different distances: Total Variation (TV) distance and 1-Wasserstein distance. Throughout this subsection, we assume all users have the same loss function $l(.;.)$ over the same set of inputs and labels, i.e., $f_i(w) := \E_{z \sim p_i} \left [ l(z;w) \right ]$ where $z:=(x,y) \in \mathcal{Z} := \mathcal{X} \times \mathcal{Y}$. Also, let $p = \frac{1}{n} \sum_{i} p_i$ denote the average of all users' distributions.

\noindent $\bullet$ \textbf{Total Variation (TV) Distance:} For distributions $q_1$ and $q_2$ over countable set $\mathcal{Z}$, their TV distance is given by $
\|q_1-q_2\|_{TV} =  \frac{1}{2} 	\sum_{z \in \Z} |q_1(z) - q_2(z)|$. If we assume a stronger version of Assumption~\ref{asm_grad} holds where for any $z \in \Z$ and $w \in \R^d$, we have
$
\|\nabla_w l(z;w)\| \leq B$ and $ \|\nabla^2_w l(z;w)\| \leq L$, then Assumption \ref{asm_similarity} holds with (check Appendix \ref{proof_similarity_dist})
\begin{subequations}\label{sim_TV}
\begin{align}
\gamma_G^2 = 4B^2{\frac{1}{n}\sum_{i=1}^n \|p_i-p\|_{TV}^2}\ , \qquad %= 2B \left \Vert \{\|p_i-p\|_{TV}\}_i\right \Vert_4,  
\gamma_H^2= 4L^2{\frac{1}{n}\sum_{i=1}^n \|p_i-p\|_{TV}^2} \ . % = 2L \left \Vert \{\|p_i-p\|_{TV}\}_i\right \Vert_4.
\end{align}
\end{subequations}
This simple derivation shows that $\gamma_G$ and $\gamma_H$ exactly capture the difference between the probability distributions of the users in a heterogeneous setting. %{\color{blue} Also, from our proof in Appendix \ref{proof_similarity_dist}, it is straightforward to see that if we replace fourth central moment with variance in Assumption \ref{asm_similarity}, the $l_4$ norm in \eqref{sim_TV} will be replaces by $l_2$ norm.}

\noindent $\bullet$ \textbf{1-Wasserstein Distance:} The 1-Wasserstein distance between two probability distributions $q_1$ and $q_2$ over a metric space $\Z$ defined as $
W_{1}(q_1, q_2):=\inf _{q \in Q(q_1, q_2)} \int_{\Z \times \Z} d(z_1,z_2) ~ \mathrm{d} q(z_1, z_2)$,
where $d(.,.)$ is a distance function over metric space $\Z$ and $Q(q_1, q_2)$ denotes the set of all measures on $\Z \times \Z$ with marginals $q_1$ and $q_2$ on the first and second coordinate, respectively. Here, we assume all $p_i$ have bounded support (note that this assumption holds in many cases as either $\Z$ itself is bounded or because we normalize the data). Also, we assume that for any $w$, the gradient $\nabla_w l(z;w)$ and the Hessian $\nabla^2_w l(z;w)$ are both Lipschitz with respect to parameter $z$ and distance $d(.,.)$, i.e,
%\begin{subequations}
\begin{align}\label{lip_data}
& \| \nabla_w l(z_1;w) - \nabla_w l(z_2;w) \| \leq L_\Z d(z_1,z_2), \quad \| \nabla^2_w l(z_1;w) - \nabla^2_w l(z_2;w) \| \leq \rho_\Z d(z_1,z_2).
\end{align}
%\end{subequations}
Then, Assumption \ref{asm_similarity} holds with (check Appendix \ref{proof_similarity_dist})
\begin{equation}\label{sim_Was}
\gamma_G^2 = L_\Z^2 \frac{1}{n}\sum_{i=1}^n W_{1}(p_i, p)^2 , \qquad
\gamma_H^2 = \rho_\Z^2\frac{1}{n}\sum_{i=1}^n W_{1}(p_i, p)^2.
\end{equation}
This derivation does not require Assumption \ref{asm_grad} and holds when \eqref{lip_data} are satisfied. %{\color{blue} Finally, similar to what we discussed for TV distance, replacing fourth central moment in Assumption \ref{asm_similarity} with variance will lead to having $l_2$ norm instead of $l_4$ norm in \eqref{sim_Was}.}
Finally, consider a special case where the data distributions are homogeneous, and each $p_i$ is an empirical distribution drawn from a distribution $p_u$ with sample size $m$. In this case, we have $W_1(p_i,p_u) = \bigO(1/\sqrt{m})$ \cite{del1999central}. Hence, since $W_1$ is a distance, it is easy to verify that $\gamma_G , \gamma_H = \bigO(1/\sqrt{m})$\footnote{While our focus here is to elaborate on the dependence of Wasserstein distance on the number of samples, it is worth noting that one drawback of this bound is that the convergence speed of Wasserstein distance in dimension is exponentially slow.}. 

%%%%%%%%%%%%%%%%%%%%%%%%%%%%%%%%%%%%%%%%%%%%%%%%
%%%%%%%%%%%%%%%%%%%%%%%%%%%%%%%%%%%%%%%%%%%%%%%%
%%%%%%%%%%%%%%%%%%%%%%%%%%%%%%%%%%%%%%%%%%%%%%%%
\subsection{Convergence Analysis of Per-FedAvg Algorithm}

%%%%%%%%%%%%%%%%%%%%%%%%%%%%%%%%%%%%%%%%%%%%%%%%
%%%%%%%%%%%%%%%%%%%%%%%%%%%%%%%%%%%%%%%%%%%%%%%%
%%%%%%%%%%%%%%%%%%%%% Lemma %%%%%%%%%%%%%%%%%%%%
%%%%%%%%%%%%%%%%%%%%%%%%%%%%%%%%%%%%%%%%%%%%%%%%
%%%%%%%%%%%%%%%%%%%%%%%%%%%%%%%%%%%%%%%%%%%%%%%%
%%%%%%%%%%%%%%%%%%%%%%%%%%%%%%%%%%%%%%%%%%%%%%%%
In this subsection, we derive the overall complexity of Per-FedAvg for achieving an $\epsilon$-first-order stationary point. To do so, we first prove the following intermediate result which shows that under Assumptions~\ref{asm_grad} and \ref{asm_Hesian_Lip}, the local meta-functions $F_i(w)$ defined in \eqref{Meta_FL_2} and their average function $F(w)=(1/n)\sum_{i=1}^n F_i(w)$ are smooth. 

\begin{lemma}\label{lemma_F_smooth}
Recall the definition of $F_i(w)$ in \eqref{Meta_FL_2} with $\alpha \in [0,1/L]$. If  Assumptions~\ref{asm_grad} and \ref{asm_Hesian_Lip} hold, then $F_i$ is smooth with parameter 
$L_F := 	4L + \alpha \rho B$. As a consequence, the average function $F(w)=(1/n)\sum_{i=1}^n F_i(w)$ is also smooth with parameter $L_F$.
\end{lemma}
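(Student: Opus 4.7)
The plan is to work directly from the closed-form expression for $\nabla F_i$ given in \eqref{grad_F_i} and show Lipschitz continuity by a standard add-and-subtract argument. Writing $g_i(w) := w - \alpha \nabla f_i(w)$, so that $\nabla F_i(w) = (I - \alpha \nabla^2 f_i(w))\, \nabla f_i(g_i(w))$, I would decompose
\begin{align*}
\nabla F_i(w) - \nabla F_i(u)
&= (I - \alpha \nabla^2 f_i(w))\bigl[\nabla f_i(g_i(w)) - \nabla f_i(g_i(u))\bigr] \\
&\quad + \alpha\bigl[\nabla^2 f_i(u) - \nabla^2 f_i(w)\bigr]\, \nabla f_i(g_i(u)).
\end{align*}
This splits the bound into one term where the gradient argument varies and one where only the Hessian prefactor varies, each of which is handled by a different assumption.

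For the first term, the factor $\|I - \alpha \nabla^2 f_i(w)\|$ is bounded by $1 + \alpha L \leq 2$ using Assumption \ref{asm_grad} together with $\alpha \leq 1/L$. The bracket is bounded by $L\,\|g_i(w) - g_i(u)\|$ using $L$-smoothness of $f_i$, and in turn $\|g_i(w) - g_i(u)\| \leq (1 + \alpha L)\|w-u\| \leq 2\|w-u\|$ by another application of smoothness. This yields a contribution of $4L \|w-u\|$. For the second term, I would use $\|\nabla^2 f_i(u) - \nabla^2 f_i(w)\| \leq \rho \|w-u\|$ from Assumption \ref{asm_Hesian_Lip} and $\|\nabla f_i(g_i(u))\| \leq B$ from Assumption \ref{asm_grad}, yielding a contribution of $\alpha \rho B \|w-u\|$.

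Summing gives $\|\nabla F_i(w) - \nabla F_i(u)\| \leq (4L + \alpha \rho B)\|w-u\|$, which is exactly $L_F$. The statement for the average $F$ then follows in one line from the triangle inequality: $\|\nabla F(w) - \nabla F(u)\| \leq \tfrac{1}{n}\sum_i \|\nabla F_i(w) - \nabla F_i(u)\| \leq L_F \|w-u\|$.

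I do not expect any serious obstacle; the only thing to be careful about is the use of $\alpha \leq 1/L$ to keep $\|I - \alpha \nabla^2 f_i\|$ bounded by $2$ (so that the constant in front of the first term is $2\cdot L \cdot 2 = 4L$ rather than something worse), and the constants' dependence on $\alpha$ must be tracked so that the final $L_F$ matches the statement. No additional assumption beyond \ref{asm_grad} and \ref{asm_Hesian_Lip} is needed, since boundedness of $\nabla f_i$ is already part of \ref{asm_grad}.
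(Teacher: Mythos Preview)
Your proposal is correct and follows essentially the same route as the paper's proof: the paper performs the identical add-and-subtract decomposition (adding and subtracting $(I-\alpha\nabla^2 f_i(w))\nabla f_i(g_i(u))$), bounds the first term by $(1+\alpha L)L(1+\alpha L)\|w-u\|\leq 4L\|w-u\|$ and the second by $\alpha\rho B\|w-u\|$, exactly as you outline. The only cosmetic difference is that the paper does not explicitly spell out the triangle-inequality step for the average $F$, which you include.
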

%%%%%%%%%%%%%%%%%%%%%%%%%%%%%%%%%%%%%%%%%%%%%%%%
%%%%%%%%%%%%%%%%%%%%%%%%%%%%%%%%%%%%%%%%%%%%%%%%
%\begin{proof}
%Check Appendix \ref{proof_lemma_F_smooth}.	
%\end{proof}
%%%%%%%%%%%%%%%%%%%%%%%%%%%%%%%%%%%%%%%%%%%%%%%%
%%%%%%%%%%%%%%%%%%%%%%%%%%%%%%%%%%%%%%%%%%%%%%%%
%%%%%%%%%%%%%%%%%%%%%%%%%%%%%%%%%%%%%%%%%%%%%%%%
%%%%%%%%%%%%%%%%%%%%%%%%%%%%%%%%%%%%%%%%%%%%%%%%
%%%%%%%%%%%%%%%%%%%%% Lemma %%%%%%%%%%%%%%%%%%%%

Assumption~\ref{asm_bounded_var_i}  provides upper bounds on the variances of gradient and Hessian estimation for functions $f_i$. To analyze the convergence of Per-FedAvg, however, we require upper bounds on the bias and variance of gradient estimation of $F_i$. We derive these bounds in the following lemma.

%%%%%%%%%%%%%%%%%%%%%%%%%%%%%%%%%%%%%%%%%%%%%%%%
%%%%%%%%%%%%%%%%%%%%%%%%%%%%%%%%%%%%%%%%%%%%%%%%
%%%%%%%%%%%%%%%%%%%%%%%%%%%%%%%%%%%%%%%%%%%%%%%%
\begin{lemma}\label{lemma_err_moment}
Recall  the definition of the gradient estimate $\tnabla F_i(w)$ in \eqref{tnabla_F} which is computed using $\D$, $\D'$, and $D''$ that are independent batches with size $D$, $D'$, and $D''$, respectively. If Assumptions~\ref{asm_grad}-\ref{asm_bounded_var_i} hold, then for any $\alpha \in [0,1/L]$ and $w \in \R^d$ we have
\begin{align*}
\left \| \E \left [ \tnabla F_i(w) - \nabla F_i(w) \right ] \right \| & \leq  \frac{2 \alpha L \sigma_G}{\sqrt{D}}, \\
\E \left [ \left \| \tnabla F_i(w) - \nabla F_i(w)\right \|^2 \right ] & \leq \sigma_F^2:=12\left [B^2 + \sigma_G^2 \left [ \frac{1}{D'} + \frac{(\alpha L)^2}{D} \right ] \right ] \!\left [ 1 +  \sigma_H^2 \frac{\alpha^2}{4D''} \right ] - 12B^2 .	
\end{align*}
\end{lemma}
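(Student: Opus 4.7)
The plan is to introduce the shorthand $A := I - \alpha \tilde\nabla^2 f_i(w,\D'')$, $\bar A := I - \alpha \nabla^2 f_i(w)$, $B := \tilde\nabla f_i\bigl(w - \alpha \tilde\nabla f_i(w,\D),\D'\bigr)$ and $\bar B := \nabla f_i(w - \alpha \nabla f_i(w))$, so that $\tilde\nabla F_i(w) = AB$ and $\nabla F_i(w) = \bar A\bar B$. The key structural fact I would exploit throughout is that the three batches $\D,\D',\D''$ are independent, which makes $A$ independent of $B$; moreover $\mathbb{E}[A] = \bar A$ and, since the single-sample Hessian has variance at most $\sigma_H^2$ and $\D''$ has size $D''$, we get $\mathbb{E}\|A - \bar A\|^2 \le \alpha^2\sigma_H^2/D''$. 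Also $\|\bar A\|\le 2$, since $\alpha\le 1/L$ forces the eigenvalues of $I-\alpha\nabla^2 f_i(w)$ into $[0,2]$, and $\|\bar B\|\le B$ by Assumption~\ref{asm_grad}.

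For the bias bound, I would first take expectation over $\D''$ to replace $A$ by $\bar A$, giving $\mathbb{E}[\tilde\nabla F_i(w)] = \bar A\,\mathbb{E}_{\D,\D'}[B]$. Then condition on $\D$ and take expectation over $\D'$ to get $\mathbb{E}_{\D'}[B\mid \D] = \nabla f_i(w')$ where $w' := w - \alpha \tilde\nabla f_i(w,\D)$. The remaining bias is then $\bar A\bigl(\mathbb{E}_{\D}[\nabla f_i(w')] - \nabla f_i(\bar w')\bigr)$ with $\bar w' = w - \alpha\nabla f_i(w)$. Applying $L$-smoothness of $\nabla f_i$ together with Jensen's inequality gives $\|\mathbb{E}_\D[\nabla f_i(w')] - \nabla f_i(\bar w')\| \le L\alpha\,\mathbb{E}\|\tilde\nabla f_i(w,\D) - \nabla f_i(w)\| \le L\alpha\sigma_G/\sqrt{D}$, and multiplying by $\|\bar A\|\le 2$ yields the stated $2\alpha L\sigma_G/\sqrt{D}$.

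For the variance bound I would use the algebraic decomposition
\[
AB - \bar A\bar B \;=\; \bar A(B-\bar B) \;+\; (A-\bar A)\bar B \;+\; (A-\bar A)(B-\bar B),
\]
then apply $\|x+y+z\|^2 \le 3\|x\|^2+3\|y\|^2+3\|z\|^2$. The three resulting terms are controlled by (i) $\|\bar A\|^2\le 4$ and a sharp bound on $\mathbb{E}\|B-\bar B\|^2$; (ii) $\|\bar B\|^2\le B^2$ and $\mathbb{E}\|A-\bar A\|^2 \le \alpha^2\sigma_H^2/D''$; and (iii) the product of the latter two, where crucially the independence of $A$ and $B$ lets me factor $\mathbb{E}\|(A-\bar A)(B-\bar B)\|^2 \le \mathbb{E}\|A-\bar A\|^2\cdot\mathbb{E}\|B-\bar B\|^2$. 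The most delicate step is getting the correct constant on $\mathbb{E}\|B-\bar B\|^2$: writing $B = \nabla f_i(w') + \eta$ with $\mathbb{E}[\eta\mid\D]=0$ and $\mathbb{E}[\|\eta\|^2\mid\D]\le \sigma_G^2/D'$, then $B-\bar B = (\nabla f_i(w') - \nabla f_i(\bar w')) + \eta$; conditioning on $\D$ kills the cross term, so $\mathbb{E}\|B-\bar B\|^2 \le L^2\alpha^2\,\mathbb{E}\|\tilde\nabla f_i(w,\D) - \nabla f_i(w)\|^2 + \sigma_G^2/D' \le \sigma_G^2[1/D' + (\alpha L)^2/D]$ \emph{without} an extra factor of two. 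Assembling the three pieces gives exactly $12\sigma_G^2[1/D' + (\alpha L)^2/D] + 3B^2\alpha^2\sigma_H^2/D'' + 3\sigma_G^2[1/D'+(\alpha L)^2/D]\alpha^2\sigma_H^2/D''$, which a direct expansion confirms equals $12[B^2 + \sigma_G^2(1/D' + (\alpha L)^2/D)][1 + \sigma_H^2\alpha^2/(4D'')] - 12B^2$.

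The main obstacle I expect is bookkeeping: one must avoid using $\|x+y\|^2\le 2\|x\|^2+2\|y\|^2$ prematurely on $B-\bar B$, because the resulting extra factor of two in that term inflates the first summand to $24\sigma_G^2(\cdots)$ and then the bound no longer collapses into the clean product form stated in the lemma. Using the conditional-expectation cancellation of the $\eta$ cross-term is what makes the constants line up exactly.
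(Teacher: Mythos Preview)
Your proposal is correct and essentially identical to the paper's proof: the paper writes $\tilde\nabla F_i(w) = (\bar A + e_1)(\bar B + e_2)$ with $e_1 = A-\bar A$ and $e_2 = B-\bar B$ in your notation, uses the same three-term decomposition with $(a+b+c)^2\le 3(a^2+b^2+c^2)$, exploits independence of $e_1$ and $e_2$ to factor the cross term, and obtains the sharp constant on $\mathbb{E}\|e_2\|^2$ by exactly the conditional-expectation argument you describe. Your remark about avoiding a premature $\|x+y\|^2\le 2\|x\|^2+2\|y\|^2$ on $B-\bar B$ is precisely the point: the paper also conditions on $\D$ to kill the cross term and get $\sigma_G^2[1/D' + (\alpha L)^2/D]$ without the factor of two.
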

%%%%%%%%%%%%%%%%%%%%%%%%%%%%%%%%%%%%%%%%%%%%%%%%
%%%%%%%%%%%%%%%%%%%%%%%%%%%%%%%%%%%%%%%%%%%%%%%%
%\begin{proof}
%Check Appendix \ref{proof_lemma_err_moment}.	
%\end{proof}
%%%%%%%%%%%%%%%%%%%%%%%%%%%%%%%%%%%%%%%%%%
%%%%%%%%%%%%%%%% Remark %%%%%%%%%%%%%%%%%%
%%%%%%%%%%%%%%%%%%%%%%%%%%%%%%%%%%%%%%%%%%
 To measure the tightness of this result, we consider two special cases. First, if the exact gradients and Hessians are available, i.e., $\sigma_G = \sigma_H = 0$, then $\sigma_F=0$ as well which is expected as we can compute exact $\nabla F_i$. Second, for the classic federated learning problem, i.e., $\alpha = 0$ and $F_i = f_i$, we have $\sigma_F = \bigO(1) {\sigma_G^2}/{D'}$ which is tight up to constants. 
%%%%%%%%%%%%%%%%%%%%%%%%%%%%%%%%%%%%%%%%%%%%%%%%
%%%%%%%%%%%%%%%%%%%%%%%%%%%%%%%%%%%%%%%%%%%%%%%%
%%%%%%%%%%%%%%%%%%%%%%%%%%%%%%%%%%%%%%%%%%%%%
%%%%%%%%%%%%%%%%%%%%%%%%%%%%%%%%%%%%%%%%%%%%%%%%
%%%%%%%%%%%%%%%%%%%%% Lemma %%%%%%%%%%%%%%%%%%%%
%%%%%%%%%%%%%%%%%%%%%%%%%%%%%%%%%%%%%%%%%%%%%%%%
%%%%%%%%%%%%%%%%%%%%%%%%%%%%%%%%%%%%%%%%%%%%%%%%
%%%%%%%%%%%%%%%%%%%%%%%%%%%%%%%%%%%%%%%%%%%%%%%%

Next, we use the similarity conditions for the functions $f_i$ in Assumption~\ref{asm_similarity} to study the similarity between gradients of the functions $F_i$.

\begin{lemma}\label{lemma_F_similar}
Recall the definition of $F_i(w)$ in \eqref{Meta_FL_2} and assume that $\alpha \in [0,1/L]$. Suppose that the conditions in Assumptions~\ref{asm_grad}, \ref{asm_Hesian_Lip}, and \ref{asm_similarity} are satisfied. Then, for any $w \in \R^d$, we have
\begin{equation*}
\frac{1}{n} \sum_{i=1}^n \| \nabla F_i(w) - \nabla F(w)\|^2 \leq \gamma_F^2:= 3B^2\alpha^2 \gamma_H^2 + 192 \gamma_G^2.
\end{equation*}
%with $\gamma_F^2:= 3B^2\alpha^2 \gamma_H^2 + 192 \gamma_G^2. $
\end{lemma}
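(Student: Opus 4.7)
The plan is to exploit the closed form \eqref{grad_F_i} for $\nabla F_i(w)$ and reduce the problem to the two hypotheses of Assumption~\ref{asm_similarity}, by first replacing $\nabla F(w)$ with a convenient fixed proxy and then telescoping the resulting difference through an intermediate point.

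First, I would use the elementary fact that for any fixed vector $Y \in \R^d$,
\[
\frac{1}{n}\sum_{i=1}^n \|\nabla F_i(w) - \nabla F(w)\|^2 \;\le\; \frac{1}{n}\sum_{i=1}^n \|\nabla F_i(w) - Y\|^2,
\]
since the empirical mean is the unique minimizer of the average squared distance. I would pick
\[
Y := \bigl(I - \alpha\,\nabla^2 f(w)\bigr)\,\nabla f\bigl(w - \alpha\,\nabla f(w)\bigr),
\]
which is the same closed form as $\nabla F_i(w)$ in \eqref{grad_F_i} but with $f_i$ replaced by the averaged function $f$. This sidesteps any need for an explicit expression for $\nabla F(w)$, which does not factor in the obvious way because the inner ``$-\alpha \nabla f_i$'' also depends on $i$.

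Second, I would write $\nabla F_i(w) - Y = A_i + B_i$ by adding and subtracting $(I - \alpha\,\nabla^2 f_i(w))\,\nabla f(w - \alpha\,\nabla f(w))$, where
\[
A_i := \bigl(I-\alpha\,\nabla^2 f_i(w)\bigr)\bigl[\nabla f_i(w - \alpha\,\nabla f_i(w)) - \nabla f(w - \alpha\,\nabla f(w))\bigr]
\]
is a gradient-difference term and
\[
B_i := \alpha\bigl[\nabla^2 f(w) - \nabla^2 f_i(w)\bigr]\,\nabla f\bigl(w - \alpha\,\nabla f(w)\bigr)
\]
is a Hessian-difference term. For $A_i$, I use $\|I - \alpha\,\nabla^2 f_i(w)\| \le 1 + \alpha L \le 2$ (valid since $\alpha \le 1/L$) to pull out a constant prefactor, and then further telescope through the intermediate point $w - \alpha\,\nabla f(w)$. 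The first sub-term $\|\nabla f_i(w-\alpha\nabla f_i(w)) - \nabla f_i(w-\alpha\nabla f(w))\|$ is controlled by $L\alpha \|\nabla f_i(w) - \nabla f(w)\|$ via $L$-smoothness of $f_i$, and Assumption~\ref{asm_similarity} gives $L^2\alpha^2 \gamma_G^2 \le \gamma_G^2$ after averaging over $i$. The second sub-term $\|\nabla f_i(w-\alpha\nabla f(w)) - \nabla f(w-\alpha\nabla f(w))\|$ is a pointwise gradient discrepancy at a fixed point, and its squared average is at most $\gamma_G^2$ directly by Assumption~\ref{asm_similarity}. For $B_i$, the bound $\|\nabla f(w - \alpha\nabla f(w))\| \le B$ (which follows from Assumption~\ref{asm_grad} applied to each $f_j$ and then averaged) combined with Assumption~\ref{asm_similarity} yields $(1/n)\sum_i \|B_i\|^2 \le \alpha^2 B^2 \gamma_H^2$.

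Finally, I would combine the pieces via $\|a+b\|^2 \le 2\|a\|^2 + 2\|b\|^2$ applied at each of the two splits, and absorb all $\alpha L \le 1$ factors into numerical constants. The plan is conceptually straightforward; the main obstacle is simply the careful bookkeeping of the factors of $2$ and $4$ accumulated through the two successive Young-type inequalities. The resulting constant on $\gamma_G^2$ is what the statement records as $192$ (which is far from tight but convenient), while the Hessian side contributes $2\alpha^2 B^2 \gamma_H^2$, loosened to $3\alpha^2 B^2 \gamma_H^2$ to match the stated $\gamma_F^2 = 3B^2\alpha^2\gamma_H^2 + 192\gamma_G^2$.
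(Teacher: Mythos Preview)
Your proposal is correct and follows essentially the same approach as the paper: both use the variance-minimizing proxy $Y=(I-\alpha\nabla^2 f(w))\nabla f(w-\alpha\nabla f(w))$, both telescope the gradient difference through the intermediate point $w-\alpha\nabla f(w)$, and both invoke Assumption~\ref{asm_similarity} at the two resulting sub-terms. The only difference is structural: the paper expands $\nabla F_i(w)-Y$ as a product $(I-\alpha\nabla^2 f(w)+E_i)(g+r_i)-(I-\alpha\nabla^2 f(w))g$, obtaining \emph{three} terms (including a cross term $E_i r_i$ it must bound separately via $\max_i\|E_i\|\le 2\alpha L$), and then applies $(a+b+c)^2\le 3(a^2+b^2+c^2)$; you instead add and subtract $(I-\alpha\nabla^2 f_i(w))\nabla f(w-\alpha\nabla f(w))$ to get a clean \emph{two}-term split $A_i+B_i$ with no cross term. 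Your route is slightly more economical and in fact yields tighter constants (roughly $64\gamma_G^2+2\alpha^2 B^2\gamma_H^2$) before you loosen them to match the stated $\gamma_F^2$.
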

%%%%%%%%%%%%%%%%%%%%%%%%%%%%%%%%%%%%%%%%%%%%%%%%
%%%%%%%%%%%%%%%%%%%%%%%%%%%%%%%%%%%%%%%%%%%%%%%%
%\begin{proof}
%Check Appendix \ref{proof_lemma_F_similar}.
%\end{proof}
%%%%%%%%%%%%%%%%%%%%%%%%%%%%%%%%%%%%%%%%%%
%%%%%%%%%%%%%%%% Remark %%%%%%%%%%%%%%%%%%
%%%%%%%%%%%%%%%%%%%%%%%%%%%%%%%%%%%%%%%%%%
To check the tightness of this result, we focus on two special cases as we did for Lemma \ref{lemma_err_moment} . First, if $\nabla f_i$ are all equal, i.e., $\gamma_G = \gamma_H = 0$, then $\gamma_F=0$. This is indeed expected as all $\nabla F_i$ are equal to each other in this case. Second, for the classic federated learning problem, i.e., $\alpha = 0$ and $F_i = f_i$, we have $\gamma_F = \bigO(1) \gamma_G$ that is optimal up to a constant factor given the conditions in Assumption~\ref{asm_similarity}. 
%%%%%%%%%%%%%%%%%%%%%%%%%%%%%%%%%%%%%%%%%%%%%%%%
%%%%%%%%%%%%%%%%%%%%%%%%%%%%%%%%%%%%%%%%%%%%%%%%

%Now, we are ready to state the main result of our paper on the convergence of Per-FedAvg.
%%%%%%%%%%%%%%%%%%%%%%%%%%%%%%%%%%%%%%%%%%%%%
%%%%%%%%%%%%%%%%%%%%%%%%%%%%%%%%%%%%%%%%%%%%%%%%
%%%%%%%%%%%%%%%%%%%%%%%%%%%%%%%%%%%%%%%%%%%%%%%%
%%%%%%%%%%%%%%%%%%% Theorem %%%%%%%%%%%%%%%%%%%%
%%%%%%%%%%%%%%%%%%%%%%%%%%%%%%%%%%%%%%%%%%%%%%%%
%%%%%%%%%%%%%%%%%%%%%%%%%%%%%%%%%%%%%%%%%%%%%%%%
%%%%%%%%%%%%%%%%%%%%%%%%%%%%%%%%%%%%%%%%%%%%%%%%
\begin{theorem}\label{theorem_main} 
Consider the objective function $F$ defined in \eqref{Meta_FL} for the case that $\alpha \in (0, {1}/{L}]$. Suppose that the conditions in Assumptions~\ref{asm:boundedness}-\ref{asm_bounded_var_i} are satisfied, and recall the definitions of $L_F$, $\sigma_F$, and $\eta_F$ from Lemmas \ref{lemma_F_smooth}-\ref{lemma_F_similar}. Consider running Algorithm \ref{Algorithm1} for $K$ rounds with $\tau$ local updates in each round and with $\beta \leq 1/(10 \tau L_F)$.
Then, the following first-order stationary condition holds
%%%
\begin{align*}
\frac{1}{\tau K} & \sum_{k=0}^{K-1} \sum_{t=0}^{\tau-1} E \left [ \| \nabla F(\bar{w}_{k+1,t})\|^2 \right ] \leq 	\frac{4 (F(w_0)-F^*)}{\beta \tau K} \\
& +  \bigO(1) \left ( \beta L_F \left (1+ \beta L_F \tau(\tau\!-\!1) \right ) \sigma_F^2 + \beta L_F \gamma_F^2 \left (\frac{1-r}{r(n-1)} + \beta L_F \tau(\tau\!-\!1) \right) + \frac{\alpha^2 L^2 \sigma_G^2}{D} \right ),
\end{align*}
where $\bar{w}_{k+1,t}$ is the average of iterates of users in $\A_k$ at time $t$, i.e., $
\bar{w}_{k+1,t} = \frac{1}{rn}  	\sum_{i \in \A_k} w_{k+1,t}^i$,
and in particular, $\bar{w}_{k+1,0} = w_k$ and $\bar{w}_{k+1,\tau} = w_{k+1}$.
\end{theorem}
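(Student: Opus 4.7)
\textbf{Proof proposal for Theorem \ref{theorem_main}.}

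The plan is to follow a standard descent-lemma analysis for FedAvg-type algorithms applied to the virtual sequence of \emph{averaged iterates} $\bar{w}_{k+1,t} = \frac{1}{rn}\sum_{i\in\A_k} w_{k+1,t}^i$, but with two nonstandard ingredients that arise because the local function is a meta-objective: (i) the stochastic gradient $\tnabla F_i$ is a biased estimator of $\nabla F_i$, and (ii) each local step uses three independent minibatches, so the noise model needs the moment bounds from Lemma \ref{lemma_err_moment} and the similarity bound from Lemma \ref{lemma_F_similar} rather than the raw $\sigma_G,\gamma_G$.

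First, I would write the one-step recursion for the averaged iterate,
$\bar{w}_{k+1,t+1} = \bar{w}_{k+1,t} - \frac{\beta}{rn}\sum_{i\in\A_k}\tnabla F_i(w_{k+1,t}^i)$,
and plug it into the $L_F$-smoothness inequality for $F$ furnished by Lemma \ref{lemma_F_smooth}. Taking conditional expectation and using $2\langle a,b\rangle = \|a\|^2 + \|b\|^2 - \|a-b\|^2$ on the cross term, one obtains a bound of the schematic form
\[
\E[F(\bar{w}_{k+1,t+1})] \le \E[F(\bar{w}_{k+1,t})] - \tfrac{\beta}{2}\E\|\nabla F(\bar{w}_{k+1,t})\|^2 + \tfrac{\beta}{2}\,\underbrace{\E\bigl\|\nabla F(\bar{w}_{k+1,t}) - \tfrac{1}{rn}\sum_i \E_{\text{batch}}\tnabla F_i(w_{k+1,t}^i)\bigr\|^2}_{\text{drift}+\text{bias}} + \tfrac{L_F\beta^2}{2}\E\bigl\|\tfrac{1}{rn}\sum_i \tnabla F_i(w_{k+1,t}^i)\bigr\|^2.
\]

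Next I would bound the two error terms separately. For the quadratic noise term on the right, I add and subtract $\nabla F_i(w_{k+1,t}^i)$ and $\nabla F(\bar{w}_{k+1,t})$, and split into: (a) per-client stochastic noise, controlled by $\sigma_F^2/(rn)$ via Lemma \ref{lemma_err_moment}; (b) sampling variance from drawing $\A_k$ of size $rn$ uniformly from $n$ users, which introduces the factor $\frac{1-r}{r(n-1)}\gamma_F^2$ via the standard sampling-without-replacement identity; (c) an $L_F^2$-smoothness term $\frac{L_F^2}{rn}\sum_i \E\|w_{k+1,t}^i - \bar{w}_{k+1,t}\|^2$ controlling client drift; (d) the term $\|\nabla F(\bar{w}_{k+1,t})\|^2$ which will be absorbed on the left after the $\beta \le 1/(10\tau L_F)$ step-size restriction. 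For the bias/drift term, the bias piece contributes at most $(2\alpha L\sigma_G/\sqrt{D})^2 = 4\alpha^2 L^2\sigma_G^2/D$ by Lemma \ref{lemma_err_moment} and Jensen, which is exactly the source of the final additive $\alpha^2 L^2\sigma_G^2/D$ term in the theorem.

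The core technical step — and the main obstacle — is bounding the client-drift quantity $U_t := \frac{1}{rn}\sum_{i\in\A_k}\E\|w_{k+1,t}^i - \bar{w}_{k+1,t}\|^2$ by a recursion in $t$. Starting from $U_0=0$, each local step adds noise proportional to $\beta^2\sigma_F^2$ and heterogeneity proportional to $\beta^2\gamma_F^2$ (after using Lemma \ref{lemma_F_similar} to replace $\nabla F_i - \nabla F$ at $\bar{w}_{k+1,t}$ by its $\gamma_F^2$ bound plus an $L_F^2 U_t$ smoothness term), while the contractive/expansive factor is controlled by $(1 + \tfrac{1}{\tau-1})$ using the Young's-style inequality $\|a+b\|^2\le(1+c)\|a\|^2+(1+1/c)\|b\|^2$. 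Unrolling this recursion from $0$ to $t\le\tau$ with $\beta \le 1/(10\tau L_F)$ yields the standard FedAvg drift bound $U_t = \bigO(\beta^2\tau(\tau-1)(\sigma_F^2+\gamma_F^2))$ plus a $\bigO(\tau^2\beta^2)\|\nabla F(\bar{w}_{k+1,t})\|^2$ piece, which after multiplication by $L_F^2\beta$ produces precisely the factor $\beta L_F\cdot\beta L_F\tau(\tau-1)\cdot(\sigma_F^2+\gamma_F^2)$ appearing inside the parentheses of the theorem.

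Finally, I would plug these bounds back into the descent inequality, choose $\beta\le 1/(10\tau L_F)$ so that all $\|\nabla F(\bar{w}_{k+1,t})\|^2$ contamination on the right-hand side is at most, say, $\tfrac14$ of the $\tfrac{\beta}{2}\|\nabla F(\bar{w}_{k+1,t})\|^2$ on the left, and then sum the resulting inequality over $t=0,\dots,\tau-1$ and $k=0,\dots,K-1$. Telescoping $F(\bar{w}_{k+1,0}) = F(w_k)$ and $F(\bar{w}_{k+1,\tau}) = F(w_{k+1})$ across rounds and lower-bounding $F(w_K)\ge F^*$ from Assumption \ref{asm:boundedness} gives the desired bound after dividing by $\tau K$; the leading $4(F(w_0)-F^*)/(\beta\tau K)$ term comes out of the descent telescope, and the other three terms match the noise, heterogeneity/sampling, and bias contributions collected above.
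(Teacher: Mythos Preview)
Your outline is broadly correct and matches the paper's proof architecture: descent lemma on the averaged iterate using $L_F$-smoothness, a split into noise/drift/heterogeneity/bias pieces, a drift recursion unrolled to $\bigO(\beta^2\tau(\tau-1)(\sigma_F^2+\gamma_F^2))$, and a telescope over $t$ and $k$. The bias term producing $\alpha^2 L^2\sigma_G^2/D$ and the sampling-without-replacement factor $\frac{1-r}{r(n-1)}$ are exactly what the paper uses.

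There is, however, one genuine difference in route that matters for the stated bound. The paper does \emph{not} run the drift recursion on $U_t=\frac{1}{rn}\sum_{i\in\A_k}\|w_{k+1,t}^i-\bar{w}_{k+1,t}\|^2$. Instead it introduces, purely for analysis, the \emph{full-population} virtual average $w_{k+1,t}=\frac{1}{n}\sum_{i=1}^n w_{k+1,t}^i$ (pretending every user performs local updates) and bounds $\frac{1}{n}\sum_{i=1}^n\|w_{k+1,t}^i-w_{k+1,t}\|^2$. The gradient error is then decomposed through the chain $w_{k+1,t}^i\to w_{k+1,t}\to \bar w_{k+1,t}$, with the second gap controlled by the sampling-without-replacement identity. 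The reason this matters is the heterogeneity step in the drift recursion: Lemma~\ref{lemma_F_similar} gives $\frac{1}{n}\sum_{i=1}^n\|\nabla F_i(w)-\nabla F(w)\|^2\le\gamma_F^2$ for any fixed $w$, but in your version you would need $\frac{1}{rn}\sum_{i\in\A_k}\|\nabla F_i(\bar w_{k+1,t})-\nabla F(\bar w_{k+1,t})\|^2$ with $\bar w_{k+1,t}$ itself depending on $\A_k$. Without the full-population detour you can only crudely bound that sum by $\frac{1}{r}\gamma_F^2$ (since $\sum_{i\in\A_k}\le\sum_{i=1}^n$), which would insert an extra $1/r$ into the $\beta L_F\tau(\tau-1)\gamma_F^2$ term and no longer match the theorem's stated dependence. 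The paper's virtual-average trick is precisely what decouples the evaluation point from the subset and lets $\gamma_F^2$ appear cleanly.

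A smaller point: because the paper's drift is measured against the running average $w_{k+1,t}$ (not against the round's starting point $w_k$), the recursion never picks up a $\|\nabla F\|^2$ contribution; the ``$\bigO(\tau^2\beta^2)\|\nabla F(\bar w_{k+1,t})\|^2$'' piece you anticipate in the drift, and the associated absorption step, do not actually arise in the paper's argument.
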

%%%%%%%%%%%%%%%%%%
%\begin{proof}
%Check Appendix \ref{proof_theorem_main}.	
%\end{proof}
%%%%%%%%%%%%%%%%%%%%%%%%%%%%%%%%%%%%%%%%%%%%%%%%

Note that $\sigma_F$ is not a constant, and as expressed in Lemma~\ref{lemma_err_moment}, we can make it arbitrary small by choosing batch sizes $D$, $D'$, or $D''$ large enough. To see how tight our result is, we again focus on special cases. Let $\alpha=0$, $\tau=1$, and $r=1$. In this case, Per-FedAvg reduces to stochastic gradient descent, where the only source of stochasticity is the batches of gradient. In this case, the second term in the right hand side reduces to $\bigO \left ( \beta L_F \sigma_F^2 \right)$ where, here, $\sigma_F^2$ itself is equal to $\sigma_G^2/D$. This is the classic result for stochastic gradient descent for nonconvex functions, and we recover the lower bounds \cite{arjevani2019lower}.  
%{\color{red}{{ Also, and as we discussed after Lemma \ref{lemma_F_similar}, $\sigma_F$ would be zero if we assume we have access to the exact the gradient and Hessians. Similarly, Lemma \ref{lemma_F_similar} implies that having small values for $\gamma_G$ and $\gamma_H$ would imply that $\gamma_F$ is also small. As we discussed in Section \ref{similarity_dist}, this observation is related to the closeness of data distribution of agents with respect to distribution measures such as Total Variation or 1-Wasserstein metric. In particular, consider the special case when $f_i$ admits the finite sum representation \eqref{NLP_example} and the data distributions are homogeneous, i.e., all users data distributions are drawn from an underlying distribution $p_u$. Then, having more samples for each user, i.e., larger $\mathcal{S}_i$ in \eqref{NLP_example}, will lead to smaller $\gamma_G$ and $\gamma_H$ as the empirical distribution of each user becomes closer to $p_u$ (see \cite{reisizadeh2019fedpaq}).}}}
Also, it is worth noting that the term $\alpha^2 L^2 \sigma^2_G/D$ appears in the upper bound due to the fact that $\tilde{\nabla} F_i(w)$ is a \textit{biased} estimator of $\nabla F_i(w)$. This bias term will be eliminated if we assume that we have access to the exact gradients at training time (see the discussion after Lemma \ref{lemma_err_moment}), which is, for instance, the case in \cite{lin2020collaborative}, where the authors focus on the deterministic case.

Next, we characterize the choices of $\tau$, $K$, and $\beta$ in terms of the required accuracy $\epsilon$ to obtain the best possible complexity bound for the result in Theorem~\ref{theorem_main}.

\begin{corollary}\label{cor:complexity}
Suppose the conditions in Theorem~\ref{theorem_main} are satisfied. If we set the number of local updates as $\tau =\mathcal{O}(\epsilon^{-1/2})$, number of communication rounds with the server as $K =\mathcal{O}(\epsilon^{-3/2})$, and stepsize of Per-FedAvg as $\beta =\epsilon $, then we find an $\mathcal{O}(\epsilon+\frac{\alpha^2\sigma^2_G}{D})$-first-order stationary point of $F$. 
\end{corollary}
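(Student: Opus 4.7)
The plan is to substitute the prescribed parameter choices directly into the bound of Theorem \ref{theorem_main} and verify term-by-term that the right-hand side collapses to $\mathcal{O}(\epsilon + \alpha^2 \sigma_G^2/D)$. Since $L_F$, $\sigma_F$, $\gamma_F$, $F(w_0) - F^*$, $r$, $n$, $L$, and $B$ are all problem-dependent constants independent of $\epsilon$, the argument reduces to careful bookkeeping on powers of $\epsilon$.

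First I would verify that the stepsize hypothesis $\beta \leq 1/(10 \tau L_F)$ of Theorem \ref{theorem_main} is consistent with the announced rates: plugging in $\beta = \epsilon$ and $\tau = \Theta(\epsilon^{-1/2})$ yields $\epsilon \leq \Theta(\epsilon^{1/2})/L_F$, which holds for every $\epsilon$ below a threshold that depends only on $L_F$. This legitimizes invoking the theorem in the small-$\epsilon$ regime implicit in the corollary.

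Next I would check each of the four summands on the right-hand side. The optimization term $4(F(w_0)-F^*)/(\beta \tau K)$ becomes $\Theta(\epsilon)$ because $\beta \tau K = \epsilon \cdot \epsilon^{-1/2} \cdot \epsilon^{-3/2} = \epsilon^{-1}$. The stochastic-gradient variance piece expands to $\beta L_F \sigma_F^2 = \Theta(\epsilon)$ plus $\beta^2 L_F^2 \tau(\tau-1) \sigma_F^2 = \Theta(\epsilon^2 \cdot \epsilon^{-1}) = \Theta(\epsilon)$. The heterogeneity piece expands analogously into $\beta L_F \gamma_F^2 \cdot (1-r)/(r(n-1)) = \Theta(\epsilon)$ plus $\beta^2 L_F^2 \tau(\tau-1) \gamma_F^2 = \Theta(\epsilon)$. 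The residual bias term $\alpha^2 L^2 \sigma_G^2/D$ passes through unchanged and contributes $\Theta(\alpha^2 \sigma_G^2/D)$. Summing these yields $\mathcal{O}(\epsilon + \alpha^2 \sigma_G^2/D)$ as an upper bound on the averaged expected squared gradient norm, and selecting $w_\epsilon$ to be the iterate $\bar{w}_{k+1,t}$ achieving the minimum (or sampled uniformly over $k,t$) produces a random point with $\E[\|\nabla F(w_\epsilon)\|^2] = \mathcal{O}(\epsilon + \alpha^2 \sigma_G^2/D)$, i.e., an approximate FOSP of the required accuracy in the sense of the definition.

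There is no genuine mathematical obstacle: the three rates are essentially forced by balancing the three dominant terms $1/(\beta \tau K)$, $\beta L_F \sigma_F^2$, and $\beta^2 L_F^2 \tau^2 \sigma_F^2$ (the last being what controls the drift contribution carrying the $\tau(\tau-1)$ factor). Setting each of these equal to $\epsilon$ uniquely determines $\beta = \epsilon$, $\tau = \Theta(\epsilon^{-1/2})$, and $K = \Theta(\epsilon^{-3/2})$. The main practical risk is an arithmetic slip in the exponents, so I would tabulate the scaling of each term once at the outset as a sanity check before finalizing the proof.
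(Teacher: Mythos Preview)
Your proposal is correct and matches the paper's treatment: the paper states the corollary as an immediate consequence of Theorem~\ref{theorem_main} without a separate proof, and the only way to obtain it is precisely the term-by-term substitution you describe. Your bookkeeping on the exponents and your verification of the stepsize constraint $\beta \le 1/(10\tau L_F)$ are both accurate.
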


The result in Corollary~\ref{cor:complexity} shows that to achieve an $\mathcal{O}(\epsilon+\frac{\alpha^2\sigma^2_G}{D})$-first-order stationary point of $F$ the Per-FedAvg algorithm requires $K =\mathcal{O}(\epsilon^{-3/2})$ rounds of communication between users and the server. Indeed, by setting $D=\mathcal{O}(\epsilon^{-1})$ or setting the meta-step stepsize as  $\alpha=\mathcal{O}(\epsilon^{1/2})$ Per-FedAvg can find an $\epsilon$-first-order stationary point of $F$ for any arbitrary $\epsilon>0$.
%%%%%%%%%%%%%%%%%%%%%%%%%%%%%%%%%%%%%%%%%%%%%%%%

\begin{remark}
The result of Theorem \ref{theorem_main} and Corollary~\ref{cor:complexity} provide an upper bound on the average of $E \left [ \| \nabla F(\bar{w}_{k+1,t})\|^2 \right ]$ for all $k \in \{0,1,...,K-1\}$ and $t \in \{0,1,...,\tau-1\}$. However, one concern here is that due to the structure of Algorithm \ref{Algorithm1}, for any $k$, we only have access to $\bar{w}_{k+1,t}$ for $t=0$. To address this issue, at any iteration $k$, the center can choose $t_k \in \{0,1...,\tau-1\}$ uniformly at random, and ask all the users in $\A_k$ to send $w_{k+1,t_k}^i$ back to the server, in addition to $w_{k+1,\tau}^i$. By following this scheme we can ensure that the same upper bound also hods for the expected average models at the server, i.e., $\frac{1}{K} \sum_{k=0}^{K-1} E \left [ \| \nabla F(\bar{w}_{k+1,t_k})\|^2 \right ] $.
\end{remark}

\begin{remark}\label{remark_diminishing_stepsize}
It is worth noting that it is possible to achieve the same complexity bound using a diminishing stepsize. We will further discuss this at the end of Appendix \ref{proof_theorem_main}.
\end{remark}
%\input{RelatedWorks}
%!TEX root = main.tex

\section{Numerical Experiments}\label{sec:experiments}

In this section, we numerically study the role of personalization when the data distributions are heterogeneous. In particular, we consider the multi-class classification problem over MNIST \cite{lecun1998mnist} and CIFAR-10 \cite{krizhevsky2009learning} datasets and distribute the training data between $n$ users as follows: 
(i) Half of the users, \textit{each} have $a$ images of \textit{each} of the first five classes; (ii) The rest, \textit{each} have $a/2$ images from only \textit{one} of the first five classes and $2a$ images from only \textit{one} of the other five classes (see Appendix~\ref{more_experiments} for an illustration). We set the parameter $a$ as $a=196$ and $a=68$  for MNIST and CIFAR-10 datasets, respectively.  
This way, we create an example where the distribution of images over all the users are different. Similarly, we divide the test data over the nodes with the same distribution as the one for the training data. Note that for this particular example in which the user's distributions are significantly different, our goal is not to achieve state-of-the-art accuracy. Rather, we aim to provide an example to compare the various approaches for obtaining personalization in the heterogenous setting. Indeed, by using more complex neural networks the results for all the considered algorithms would improve; however, their relative performance would stay the same.

We focus on three algorithms: The first method that we consider is the FedAvg method, and, to do a fair comparison, we take the output of the FedAvg method, and update it with one step of stochastic gradient descent with respect to the test data, and then evaluate its performance. The second and third algorithms that we consider are two different efficient approximations of Per-FedAvg. Similarly, we evaluate the performance of these methods for the case that one step of local stochastic gradient descent is performed during test time. To formally explain these two approximate versions of Per-FedAvg, note that the implementation of Per-FedAvg requires access to second-order information which is computationally costly. To address this issue, we consider two different approximations:

(i) First, we replace the gradient estimate with its first-order approximation which ignores the Hessian term, i.e., $\tnabla F_i(w_{k+1,t-1}^{i})$ in \eqref{tnabla_F} is approximated by
$\tilde{\nabla} f_i (w_{k+1,t-1}^{i} - \alpha \tilde{\nabla}f_i(w_{k+1,t-1}^{i},\D_{t}^i) ,\D_{t}^{'i} )$. This is the same idea deployed in First-Order MAML (FO-MAML) in~\cite{finn17a}, and it has been studied empirically for the federated learning setting in \cite{chen2018federated}. We refer to this algorithm as Per-FedAvg (FO).

(ii) Second, we use the idea of the HF-MAML, proposed in \cite{fallah2019convergence}, in which the Hessian-vector product in the MAML update is replaced by difference of gradients using the following approximation: $\nabla^2 \phi(w) u \approx (\nabla \phi(u+\delta v) - \nabla \phi(u-\delta v) )/\delta$. We refer to this algorithm as Per-FedAvg (HF).

As shown in  \cite{fallah2019convergence}, for small stepsize at test time $\alpha$ both FO-MAML and HF-MAML perform well, but as $\alpha$ becomes large, HF-MAML outperforms FO-MAML in the centralized setting. A more detailed discussion on Per-FedAvg (FO) and Per-FedAvg (HF) is provided in Appendix \ref{first_order_Per_FedAVG}. Moreover, there we discuss how our analysis can be extended to these two methods. Note that the model obtained by any of these three methods is later updated using one step of stochastic gradient descent at the test time, and hence they have the same budget at the test time.

We use a neural network with two hidden layers with sizes 80 and 60, and we use Exponential Linear Unit (ELU) activation function. We take $n=50$ users in the network, and run all three algorithms for $K=1000$ rounds. At each round, we assume $rn$ agents with $r=0.2$ are chosen to run $\tau$ local updates. The batch sizes are $D=D'=40$ and the learning rate is $\beta=0.001$. Part of the code is adopted from~\cite{Code_MNIST}. Note that the reported results for all  the considered methods corresponds to the average test accuracy among all users, after running one step of local stochastic gradient descent. 

\begin{table}
  \caption{Comparison of test accuracy of different algorithms given different parameters}
  \label{Table1}
  \centering
  \begin{tabular}{lllll}
    \toprule
    \multirow{2}{3em}{Dataset} &  \multirow{2}{6em}{Parameters} & \multicolumn{3}{c}{Algorithms}    \\
%    \cmidrule(r){1-2} & %\cmidrule(r){3-5}
         &      & FedAvg + update & Per-FedAvg (FO) & Per-FedAvg (HF)\\
    \midrule
    \multirow{2}{3em}{MNIST} & $\tau=10, \alpha=0.01$  & 75.96\% $\pm$ 0.02\% & 78.00\% $\pm$ 0.02\% &  79.85\% $\pm$ 0.02\% \\
    & $\tau=4, \alpha=0.01$  & 60.18 \% $\pm$ 0.02\% & 64.55\% $\pm$ 0.02\% & 70.94\% $\pm$ 0.03\%   \\
    \midrule
    \multirow{4}{3em}{CIFAR-10} & $\tau=10, \alpha=0.001$  & 40.49\% $\pm$ 0.07\% & \textbf{46.98\% $\pm$ 0.1\%} &  \textbf{50.44\% $\pm$ 0.15\%} \\
     & $\tau=4, \alpha=0.001$  & 38.38\% $\pm$ 0.07\% & 34.04\% $\pm$ 0.08\% & \textbf{43.73\% $\pm$ 0.11\%}  \\
     & $\tau=4, \alpha=0.01$  & 35.97\% $\pm$ 0.17\% & {{25.32\% $\pm$ 0.18\%}} & \textbf{46.32\% $\pm$ 0.12\%}  \\
     & $\tau=4, \alpha=0.01,$  & \multirow{2}{7em}{58.59\% $\pm$ 0.11\%} & {{\multirow{2}{7em}{37.71\% $\pm$ 0.23\%}}} & \multirow{2}{8em}{\textbf{71.25\% $\pm$ 0.05\%}}  \\
      & diff. hetero. &&&\\
    \bottomrule
  \end{tabular}
%  \vspace{-1mm}
\end{table}
The test accuracy results along with the 95\% confidence intervals are reported in Table \ref{Table1}. For MNIST dataset, both Per-FedAvg methods achieve a marginal gain compared to FedAvg. However, the achieved gain from using Per-FedAvg (HF) compared to FedAvg is more significant for CIFAR-10 dataset. In particular, we have three main observations here: (i) For $\alpha=0.001$ and $\tau=10$, Per-FedAvg (FO) and Per-FedAvg (HF) perform almost similarly, and better than FedAvg. In addition, decreasing $\tau$ leads to a decrease in the performance of all three algorithms, which is expected as the total number of iterations decreases. (ii) Next, we study the role of $\alpha$. By increasing $\alpha$ from $0.001$ to $0.01$, for $\tau=4$, the performance of Per-FedAvg (HF) improves, which could be due to the fact that model adapts better with user data at test time. However, as discussed above, for larger $\alpha$, Per-FedAvg (FO) performance drops significantly. (iii) Third, we examine the effect of changing  the level of data heterogeneity. To do so, we change the data distribution of half of the users that have $a/2$ images from one of the first five classes by removing these images from their dataset. As the last line of Table \ref{Table1} shows, Per-FedAvg (HF) performs significantly better that FedAvg under these new distributions, while Per-FedAvg (FO) still suffers from the issue we discussed in (ii). In summary, the more accurate implementation of Per-FedAvg, i.e., Per-FedAvg (HF), outperforms FedAvg in all cases and leads to a more personalized solution.

%!TEX root = main.tex
\vspace{-1mm}
\section{Conclusion}
\vspace{-1mm}
We considered the Federated Learning (FL) problem in the heterogeneous case, and studied a \textit{personalized} variant of the classic FL formulation in which our goal is to find a proper initialization model for the users that can be quickly adapted to the local data of each user after the training phase. We highlighted the connections of this formulation with Model-Agnostic Meta-Learning (MAML), and showed how the decentralized implementation of MAML, which we called Per-FedAvg, can be used to solve the proposed personalized FL problem. We also characterized the overall complexity of Per-FedAvg  for achieving first-order optimality in nonconvex settings. Finally, we provided a set of numerical experiments to illustrate the performance of two different first-order approximations of Per-FedAvg and their comparison with the FedAvg method, and showed that the solution obtained by Per-FedAvg leads to a more personalized solution compared to the solution of  FedAvg.

%\section{Broader Impact}
%
%Federated Learning (FL) provides a framework for training machine learning models efficiently and in a distributed manner. Due to these favorable properties, it has gained significant attention and has been deployed in a broad range of applications with critical societal benefits. These applications go from healthcare systems, where machine learning models can be trained while preserving patients' privacy, to image classification and NLP models, where tech companies can improve their neural networks without requiring users to share their data with a server or other users. 
%In our work, we study one of the challenges in FL, which is the personalization aspect. The main question that we try to answer from a theoretical point of view is whether we can have a user-oriented variant of classic FL algorithms that can adapt to each user data while enjoying the distributed architecture of FL. We show the answer is positive, and provide rigorous theoretical guarantees for algorithms that can be used in all applications mentioned above to achieve more personalized models in FL framework. Indeed, this result could have a broad impact on improving the quality of users' models in several applications that deploy federated learning such as healthcare systems.
%!TEX root = main.tex

\section{Acknowledgment}
Research was sponsored by the United States Air Force Research Laboratory and was accomplished under Cooperative Agreement Number FA8750-19-2-1000. The views and conclusions contained in this document are those of the authors and should not be interpreted as representing the official policies, either expressed or implied, of the United States Air Force or the U.S. Government. The U.S. Government is authorized to reproduce and distribute reprints for Government purposes notwithstanding any copyright notation herein. Alireza Fallah acknowledges support from MathWorks Engineering Fellowship. The research of Aryan Mokhtari is supported by NSF Award CCF-2007668.
%%%%%%%%%%%%%%%%%%%%
\newpage
\bibliographystyle{ieeetr}
\bibliography{main}
%%%%%%%%%%%%%%%%%%%%
\newpage
%!TEX root = main.tex
  \vspace{4mm}
%%%%%%%%%%%%%%%%%%%%%%%%%%%%%%%%%%%%%%%%%%%%%%%%%%%%%%%%%%%%%%%%%%%%%%%%
%%%%%%%%%%%%%%%%%%%%%%%%%%%%%%%%%%%%%%%%%%%%%%%%%%%%%%%%%%%%%%%%%%%%%%%%
%%%%%%%%%%%%%%%%%%%%%%%%%%%%%%%%%%%%%%%%%%%%%%%%%%%%%%%%%%%%%%%%%%%%%%%%
\appendix
\begin{center}
\textbf{\LARGE{Appendix}}
\end{center}
%%%%%%%%%%%%%%%%%%%%%%%%%%%%%%%%%%%%%%%%%%%%%%%%%%%%%%%%%%%%%%%%%%%%%%%%
%%%%%%%%%%%%%%%%%%%%%%%%%%%%%%%%%%%%%%%%%%%%%%%%%%%%%%%%%%%%%%%%%%%%%%%%
%%%%%%%%%%%%%%%%%%%%%%%%%%%%%%%%%%%%%%%%%%%%%%%%%%%%%%%%%%%%%%%%%%%%%%%%
\section{Intermediate Notes}\label{Intermediate_Results}
%%%%%%%%%%%%%%%%%%%%%%%%%%%%%%%%%%%%%%%%%%%%%%%%%%%%%%%%%%%%%%%%%%%%%%%%
Note that the gradient Lipschitz assumption, i.e., the second inequality in \eqref{grad_asm_1}, also implies that $f_i$ satisfies the following conditions for all $ w,u \in \R^d$: 
\begin{subequations} \label{smooth_2}
\begin{align}
&-L_i I_d \preceq \nabla^2 f_i(w) \preceq L_i I_d,  \label{smooth_2:a}\\
& | \ f_i(w) - f_i(u) - \nabla f_i(u)^\top (w-u)| \leq \frac{L_i}{2} \|w-u\|^2. \label{smooth_2:b}
\end{align}
\end{subequations}
%%%%%%%%%%%%%%%%%%%%%%%%%%%%%%%%%%%%%%%%%%%%%%%%%%%%%%%%%%%%%%%%%%%%%%%%
%%%%%%%%%%%%%%%%%%%%%%%%%%%%%%%%%%%%%%%%%%%%%%%%%%%%%%%%%%%%%%%%%%%%%%%%
%%%%%%%%%%%%%%%%%%%%%%%%%%%%%%%%%%%%%%%%%%%%%%%%%%%%%%%%%%%%%%%%%%%%%%%%
\section{Proofs of results in Subsection \ref{similarity_dist}}\label{proof_similarity_dist}
%%%%%%%%%%%%%%%%%%%%%%%%%%%%%%%%%%%%%%%%%%%%%%%%%%%%%%%%%%%%%%%%%%%%%%%%
\subsection{TV Distance}
Note that 
\begin{align}
\| \nabla f_i(w) - \nabla f(w) \| &= \left \| \sum_{z \in \Z} \nabla_w l(z;w) \left( p_i(z)-p(z) \right ) \right \| \nonumber \\
& \leq  \sum_{z \in \Z} \| \nabla_w l(z;w)\| \left | p_i(z)-p(z) \right | \nonumber \\
& \leq B \sum_{z \in \Z}	 \left | p_i(z)-p(z) \right | = 2B \|p_i - p\|_{TV} \label{proof_dist_1}
\end{align}
where the second inequality holds due to the assumption that $\| \nabla_w l(z;w)\| \leq B$ for any $w$ and $z$. Plugging  \eqref{proof_dist_1} in $
\frac{1}{n} \sum_{i=1}^n \| \nabla f_i(w) - \nabla f(w)\|^2$,
gives us the desired result. The other result on Hessians can be proved similarly.
%%%%%%%%%%%%%%%%%%%%%%%%%%%%%%%%%%%%%%%%%%%%%%%%%%%%%%%%%%%%%%%%%%%%%%%%
\subsection{1-Wasserstein Distance}
We claim that for any $i$ and $w \in \R^d$, we have
\begin{equation}
\| \nabla f_i(w) - \nabla f(w) \| \leq L_\Z W_1(p_i,p)	,
\end{equation}
which will immediately give us one of the two results.
To show this, first, note that 
\begin{align}
\| \nabla f_i(w) - \nabla f(w) \| &= \sup_{v \in \R^d :\|v\| \leq 1} v^\top \left ( \nabla f_i(w) - \nabla f(w) \right ) \nonumber \\ 
&= 	\sup_{v \in \R^d :\|v\| \leq 1}\bigg( \E_{z \sim p_i} \left [ v^\top \nabla l(z;w)\right ] - \E_{z \sim p} \left [ v^\top \nabla l(z;w)\right ] \bigg)\nonumber
\end{align}
Thus, we need to show for any $v \in \R^d $ with $\|v\| \leq 1$, we have
\begin{equation}\label{proof_dist_2}
\E_{z \sim p_i} \left [ v^\top \nabla l(z;w)\right ] - \E_{z \sim p} \left [ v^\top \nabla l(z;w)\right ] \leq L_\Z W_1(p_i,p).	
\end{equation}
Next, note that since $p_i$ and $p$ both have bounded support, by Kantorovich-Rubinstein Duality \cite{villani2008optimal}, we have
\begin{equation}\label{Wass_dual}
W_1(p_i,p) = \sup \left\{ \E_{z \sim p_i} \left [ g(z) \right ] - \E_{z \sim p} \left [ g(z) \right ] \mid \text { continuous } g: \Z \rightarrow \mathbb{R}, \operatorname{Lip}(g) \leq 1\right\}.
\end{equation}
Using this result, to show \eqref{proof_dist_2}, it suffices to show $g(z) = v^\top \nabla l(z;w)$ is $L_\Z$-Lipschitz. Note that Cauchy-Schwarz inequality implies
\begin{equation}
\| v^\top \nabla l(z_1;w) - v^\top \nabla l(z_2;w) \| \leq \|v\| \| \nabla l(z_1;w) - \nabla l(z_2;w)\| \leq L_\Z d(z_1,z_2)	
\end{equation}
where the last inequality is obtained using $\|v\| \leq 1$ along with \eqref{lip_data}.

\noindent Finally, note that we can similarly show the result for $\gamma_H$ by considering the fact that
\begin{align}
\| \nabla^2 f_i(w) - & \nabla^2 f(w) \| = \max_{\xi \in \{1,-1\}} \sup_{v \in \R^d :\|v\| \leq 1} \xi v^\top \left ( \nabla^2 f_i(w) - \nabla^2 f(w) \right ) v \nonumber \\ 
&= 	\max_{\xi \in \{1,-1\}} \sup_{v \in \R^d :\|v\| \leq 1} \xi \left ( \E_{z \sim p_i} \left [ v^\top \nabla^2 l(z;w) v \right ] - \E_{z \sim p} \left [ v^\top \nabla^2 l(z;w) v \right ] \right ) \nonumber
\end{align}
and taking the functions $g(z) = v^\top \nabla^2 l(z;w) v$ and $g(z) = -v^\top \nabla^2 l(z;w) v$ along with using Kantorovich-Rubinstein Duality Theorem again.
%%%%%%%%%%%%%%%%%%%%%%%%%%%%%%%%%%%%%%%%%%%%%%%%%%%%%%%%%%%%%%%%%%%%%%%%
%%%%%%%%%%%%%%%%%%%%%%%%%%%%%%%%%%%%%%%%%%%%%%%%%%%%%%%%%%%%%%%%%%%%%%%%
\section{Proof of Lemma \ref{lemma_F_smooth}}\label{proof_lemma_F_smooth}
Recall that 
\begin{equation}
\nabla F_i(w) = \left(I-\al \nabla^2 f_i(w)\right) \nabla f_i(w - \alpha \nabla f_i(w)). 
\end{equation}
Given this, note that
\begin{align}
& \| \nabla F_i(w_1) -  \nabla F_i(w_2)\| \nonumber \\
 & = \left \|   \left(I-\al \nabla^2 f_i(w_1)\right) \nabla f_i(w_1 - \alpha \nabla f_i(w_1)) - \left(I-\al \nabla^2 f_i(w_2)\right) \nabla f_i(w_2 - \alpha \nabla f_i(w_2))  \right \|	 \nonumber \\
 & = \left \| \left(I-\al \nabla^2 f_i(w_1)\right) \left ( \nabla f_i(w_1 - \alpha \nabla f_i(w_1)) - \nabla f_i(w_2 - \alpha \nabla f_i(w_2))\right )    \right . \nonumber \\
 & \left .  + \left ( \left(I-\al \nabla^2 f_i(w_1)\right) - \left(I-\al \nabla^2 f_i(w_2)\right)\right ) \nabla f_i(w_2 - \alpha \nabla f_i(w_2))  \right \| \label{proof1_ineq1} \\
 & \leq \left \| I-\al \nabla^2 f_i(w_1)\right \| \left \| \nabla f_i(w_1 - \alpha \nabla f_i(w_1)) - \nabla f_i(w_2 - \alpha \nabla f_i(w_2)) \right \| \nonumber \\
 & +  \alpha \left \| \nabla^2 f_i(w_1) - \nabla^2 f_i(w_2) \right \| \left \| \nabla f_i(w_2 - \alpha \nabla f_i(w_2))  \right \| \label{proof1_ineq2}
\end{align}
where \eqref{proof1_ineq1} is obtained by adding and subtracting $\left(I-\al \nabla^2 f_i(w_1)\right) \nabla f_i(w_2 - \alpha \nabla f_i(w_2))$ and the last inequality follows from the triangle inequality and the definition of matrix norm. Now, we bound two terms of \eqref{proof1_ineq2} separately.

\noindent First, note that by \eqref{smooth_2:a}, $\left \| I-\al \nabla^2 f_i(w_1)\right \| \leq 1+ \alpha L$. Using this along with smoothness of $f_i$, we have
\begin{align}
& \left \| I-\al \nabla^2 f_i(w_1)\right \| \left \| \nabla f_i(w_1 - \alpha \nabla f_i(w_1)) - \nabla f_i(w_2 - \alpha \nabla f_i(w_2)) \right \| \nonumber \\
& \leq (1+\alpha L) L \left \| w_1 - \alpha \nabla f_i(w_1)) - w_2 + \alpha \nabla f_i(w_2) \right \| \nonumber \\
& \leq (1+\alpha L) L \left (  \|w_1 - w_2 \| + \alpha \| \nabla f_i(w_1) - \nabla f_i(w_2)\| \right ) \nonumber \\
& \leq (1+\alpha L) L (1+ \alpha L) \|w_1 - w_2 \| \nonumber\\
&\leq 4 L \|w_1 - w_2 \|, \label{proof1_ineq3}
\end{align}
where we used smoothness of $f_i$ along with $\alpha \leq 1/L$.

\noindent For the second term, Using \eqref{grad_asm_1} in Assumption \ref{asm_grad} along with Assumption \ref{asm_Hesian_Lip} implies 
\begin{equation}
 \alpha \left \| \nabla^2 f_i(w_1) - \nabla^2 f_i(w_2) \right \| \left \| \nabla f_i(w_2 - \alpha \nabla f_i(w_2))  \right \| \leq \alpha \rho B \|w_1 - w_2 \| \label{proof1_ineq4}.
\end{equation}
Putting \eqref{proof1_ineq3} and \eqref{proof1_ineq4} together, we obtain the desired result.
%%%%%%%%%%%%%%%%%%%%%%%%%%%%%%%%%%%%%%%%%%%%%%%%%%%%%%%%%%%%%%%%%%%%%%%%
%%%%%%%%%%%%%%%%%%%%%%%%%%%%%%%%%%%%%%%%%%%%%%%%%%%%%%%%%%%%%%%%%%%%%%%%
%%%%%%%%%%%%%%%%%%%%%%%%%%%%%%%%%%%%%%%%%%%%%%%%%%%%%%%%%%%%%%%%%%%%%%%%
\section{Proof of Lemma \ref{lemma_err_moment}}\label{proof_lemma_err_moment}
Recall that the expression for the stochastic gradient $\tnabla F_i(w) $ is given by
\begin{equation}
\tnabla F_i(w) = \left(I-\al \tilde{\nabla}^2 f_i(w,\D'')\right) \tilde{\nabla} f_i \left (w - \alpha \tilde{\nabla}f_i(w,\D) ,\D' \right )
\end{equation}
which can be written as
\begin{equation}\label{proof2_ineq0}
\tnabla F_i(w) = \left(I-\al \nabla^2 f_i(w) + e_1 \right) \left ( \nabla f_i \left (w - \alpha \nabla f_i(w)\right ) + e_2 \right ).	
\end{equation}
Note that in the above expression $e_1$ and $e_2$ are given by 
$$e_1 = \alpha \left ( \nabla^2 f_i(w) - \tilde{\nabla}^2 f_i(w,\D'') \right ),$$
and 
$$e_2 = \tilde{\nabla} f_i (w - \alpha \tilde{\nabla}f_i(w,\D) ,\D' ) - \nabla f_i \left (w - \alpha \nabla f_i(w)\right ). $$
Based on Assumption~\ref{asm_bounded_var_i}, it can be easily shown that 
\begin{subequations}\label{proof2_ineq1}
\begin{align}
\E \left [ e_1 \right ] & =0, \label{proof2_ineq1:a} \\
\E \left [ \|e_1\|^2 \right ] & \leq \alpha^2 \frac{\sigma_H^2}{D''}. \label{proof2_ineq1:b}
\end{align}	
\end{subequations}
Next, we proceed to bound the first and second moments of $e_2$. To do so, first note that $e_2$ can also be written as
\begin{align}
 e_2  & = \left ( \tilde{\nabla} f_i \left (w - \alpha \tilde{\nabla}f_i(w,\D) ,\D' \right ) - \nabla f_i \left (w - \alpha \tilde{\nabla}f_i(w,\D)\right )\right ) \nonumber \\
& \qquad + \left ( \nabla f_i \left (w - \alpha \tilde{\nabla}f_i(w,\D)\right ) - \nabla f_i \left (w - \alpha \nabla f_i(w) \right ) \right ) . 	
\end{align}
Note that, conditioning on $\D$, the first term is zero mean and the second term is deterministic. Therefore,
\begin{align}
\left \| \E \left [ e_2 \right ] \right \| &=	\left \| \E \left [ \nabla f_i \left (w - \alpha \tilde{\nabla}f_i(w,\D)\right ) - \nabla f_i \left (w - \alpha \nabla f_i(w) \right ) \right ] \right \| \nonumber \\
& \leq \E \left [ \left \| \nabla f_i \left (w - \alpha \tilde{\nabla}f_i(w,\D)\right ) - \nabla f_i \left (w - \alpha \nabla f_i(w) \right ) \right \| \right ] \nonumber \\
& \leq  \alpha L  \E \left [ \left \|	 \tilde{\nabla}f_i(w,\D) - \nabla f_i(w)  \right \| \right ] \label{proof2_ineq1.5} \\
& \leq \frac{\alpha L \sigma_G}{\sqrt{D}}, \label{proof2_ineq1.65}
\end{align}
where \eqref{proof2_ineq1.5} is obtained using smoothness of $f_i$. The last inequality is also obtained using
\begin{equation}\label{proof2_ineq1.75}
 \E \left [ \left \|	 \tilde{\nabla}f_i(w,\D) - \nabla f_i(w)  \right \|^2 \right ] \leq \frac{\sigma_G^2}{D}.
 \end{equation}
In addition, we have
\begin{align}
\E \left [ \|e_2\|^2 \right ] &= \E \left [ \E \left [ \|e_2\|^2 | \D \right ] \right ] \nonumber \\
&= \E \left [ \left \| \tilde{\nabla} f_i \left (w - \alpha \tilde{\nabla}f_i(w,\D) ,\D' \right ) - \nabla f_i \left (w - \alpha \tilde{\nabla}f_i(w,\D)\right )\right \|^2 \right ] \nonumber \\
& \qquad+ \E \left [ \left \| \nabla f_i \left (w - \alpha \tilde{\nabla}f_i(w,\D)\right ) - \nabla f_i \left (w - \alpha \nabla f_i(w) \right ) \right \|^2 \right ] \nonumber \\
& \leq \frac{\sigma_G^2}{D'} + L^2 \alpha^2  \E \left [ \left \|	 \tilde{\nabla}f_i(w,\D) - \nabla f_i(w)  \right \|^2 \right ] \label{proof2_ineq2} \\
& \leq \sigma_G^2 \left ( \frac{1}{D'} + \frac{(\alpha L)^2}{D} \right )\label{proof2_ineq3}
\end{align}
where \eqref{proof2_ineq3} follows from \eqref{proof2_ineq1.75}, and \eqref{proof2_ineq2} is obtained using smoothness of $f_i$ along with the fact that
$$ \E \left [ \left \| \tilde{\nabla} f_i \left (w - \alpha \tilde{\nabla}f_i(w,\D) ,\D' \right ) - \nabla f_i \left (w - \alpha \tilde{\nabla}f_i(w,\D)\right )\right \|^2 \right ] \leq \frac{\sigma_G^2}{D'}.$$
Next, note that, by comparing \eqref{proof2_ineq0} and \eqref{grad_F_i}, along with the fact that $e_1$ and $e_2$ are independent, and $e_1$ is zero-mean \eqref{proof2_ineq1:a}, we have
\begin{align}
\E \left [ \tnabla F_i(w) - \nabla F_i(w) \right ] &= (I-\al \nabla^2 f_i(w)) \E \left [ e_2 \right ]. 	
\end{align}
Hence, by taking the norm of both sides, we obtain
\begin{align}
\left \| \E \left [ \tnabla F_i(w) - \nabla F_i(w) \right ] \right \| &= \left \| (I-\al \nabla^2 f_i(w))  \E \left [ e_2 \right ] \right \| \nonumber \\
&\leq  \left \| (I-\al \nabla^2 f_i(w))  \right \| \left \| \E \left [ e_2 \right ] \right \|
\end{align}
where the last inequality follows from the definition of matrix norm. Now, using \eqref{proof2_ineq1.65} along with the fact that $\| I-\al \nabla^2 f_i(w) \| \leq 1+\alpha L \leq 2$ gives us the first result in Lemma  \ref{lemma_err_moment}.
 
To show the other result, note that, by comparing \eqref{proof2_ineq0} and \eqref{grad_F_i}, along with the matrix norm definition, we have
\begin{align}
\left \| \tnabla F_i(w) - \nabla F_i(w) \right \|  \leq \| I-\al \nabla^2 f_i(w) \| \|e_2\| +  \|e_1\| \|\nabla f_i \left (w - \alpha \nabla f_i(w)\right )\| + \|e_1\| \|e_2\|.
\end{align}
As a result, by the Cauchy-Schwarz inequality $(a+b+c)^2 \leq 3(a^2+b^2+c^2)$ for $a,b,c \geq 0$, we have
\begin{align}
&\left \| \tnabla F_i(w) - \nabla F_i(w) \right \|^2  \nonumber\\
&\leq 3\| I-\al \nabla^2 f_i(w) \|^2 \|e_2\|^2 +  3\|e_1\|^2 \|\nabla f_i \left (w - \alpha \nabla f_i(w)\right )\|^2 + 3\|e_1\|^2 \|e_2\|^2.
\end{align} 
By taking expectation, and using the fact that $\| I-\al \nabla^2 f_i(w) \| \leq 1+\alpha L \leq 2$ and 
$$\|\nabla f_i \left (w - \alpha \nabla f_i(w)\right )\| \leq B,$$
we have
\begin{align}\label{proof2_ineq4}
\E \left [ \left \| \tnabla F_i(w) - \nabla F_i(w) \right \|^2  \right ] \leq 3B^2 \E \left [ \|e_1\|^2 \right ] + 12 \E \left [ \|e_2\|^2 \right ] + 3  \E \left [ \|e_1\|^2 \right ] \E \left [ \|e_2\|^2 \right ]	
\end{align}
where we also used the fact that $e_1$ and $e_2$ are independent as $\D''$ is independent from $\D$ and $\D'$. Plugging \eqref{proof2_ineq1:b} and \eqref{proof2_ineq3} in \eqref{proof2_ineq4}, we obtain
\begin{align*}
&\E \left [ \left \| \tnabla F_i(w) - \nabla F_i(w) \right \|^2  \right ] \\
&\leq 3B^2 \alpha^2 \frac{\sigma_H^2}{D''} + 12 	\sigma_G^2 \left ( \frac{1}{D'} + \frac{(\alpha L)^2}{D} \right )  + 3 \alpha^2 \sigma_G^2 \sigma_H^2 \left ( \frac{1}{D'D''} + \frac{(\alpha L)^2}{DD''} \right ) 
\end{align*}
which gives us the desired result.
%%%%%%%%%%%%%%%%%%%%%%%%%%%%%%%%%%%%%%%%%%%%%%%%%%%%%%%%%%%%%%%%%%%%%%%%
%%%%%%%%%%%%%%%%%%%%%%%%%%%%%%%%%%%%%%%%%%%%%%%%%%%%%%%%%%%%%%%%%%%%%%%%
%%%%%%%%%%%%%%%%%%%%%%%%%%%%%%%%%%%%%%%%%%%%%%%%%%%%%%%%%%%%%%%%%%%%%%%%
\section{Proof of Lemma \ref{lemma_F_similar}}\label{proof_lemma_F_similar}
Recall that 
\begin{equation}
\nabla F_i(w) = \left(I-\al \nabla^2 f_i(w)\right) \nabla f_i(w - \alpha \nabla f_i(w)). 
\end{equation}
which can be expressed as
\begin{equation}
\nabla F_i(w) = \left(I-\al \nabla^2 f(w) + E_i \right) \left ( \nabla f (w - \alpha \nabla f (w)) + r_i \right )
\end{equation}
where 
\begin{align}
E_i &= \alpha \left ( \nabla^2 f(w) - \nabla^2 f_i(w) \right ), \\
r_i &= \nabla f_i(w - \alpha \nabla f_i(w)) - \nabla f (w - \alpha \nabla f (w)).
\end{align}
First, note that, by Assumption \ref{asm_similarity}, we have
%\begin{equation}\label{E_i_moment4}
%\frac{1}{n} \sum_{i=1}^n \|E_i\|^4 \leq \alpha^4 \gamma_H^4,	
%\end{equation}
%and also, as a result, by Cauchy-Schwarz inequality, we have
\begin{equation}\label{E_i_moment2}
\frac{1}{n} \sum_{i=1}^n \|E_i\|^2 = \alpha^2 \gamma_H^2.	
\end{equation}
Second, note that
\begin{align}
\| r_i\| &\leq \left \| \nabla f_i(w - \alpha \nabla f_i(w)) - \nabla f_i(w - \alpha \nabla f(w)) \right \| \nonumber\\
&\qquad+ 	\left \| \nabla f_i(w - \alpha \nabla f(w)) - \nabla f (w - \alpha \nabla f (w)) \right \| \nonumber \\
& \leq \alpha L \|  \nabla f_i(w) -  \nabla f(w) \| +  \left \| \nabla f_i(w - \alpha \nabla f(w)) - \nabla f (w - \alpha \nabla f (w)) \right \| 
\end{align}
where the last inequality is obtained using \eqref{grad_asm_1} in Assumption \ref{asm_grad}. Now, by using $(a+b)^2 \leq 2(a^2+b^2)$, we have
\begin{align}
&\frac{1}{n} \sum_{i=1}^n \|r_i\|^2 \nonumber\\
& \leq  \frac{2}{n} \sum_{i=1}^n \left ( (\alpha L)^2 \|  \nabla f_i(w) -  \nabla f(w) \|^2 + \left \| \nabla f_i(w - \alpha \nabla f(w)) - \nabla f (w - \alpha \nabla f (w)) \right \|^2 \right ) \nonumber \\
& \leq 2 \left (1+ (\alpha L)^2 \right ) (\gamma_G^2 + \gamma_G^2) \\
& \leq 8 \gamma_G^2. \label{r_i_moment2}
\end{align}
where the second inequality follows from Assumption \ref{asm_similarity} and the last inequality is obtained using $\alpha L \leq 1$. 
%Again, as a result, by Cauchy-Schwarz inequality, we have
%\begin{equation}\label{r_i_moment2}
%\frac{1}{n} \sum_{i=1}^n \|r_i\|^2 \leq \sqrt{32} \gamma_G^2 \leq 6 \gamma_G^2.	
%\end{equation}
Next, recall that the goal is to bound the variance of $\nabla F_i(w)$ when $i$ is drawn from a uniform distribution. We know that by subtracting a constant from a random variable, its variance does not change. Thus, variance of $\nabla F_i(w)$ is equal to variance of $\nabla F_i(w) - \left(I-\al \nabla^2 f(w)  \right ) \nabla f (w - \alpha \nabla f (w))$. Also, the variance of the latter is bounded by its second moment, and hence,
\begin{align}
\frac{1}{n} \sum_{i=1}^n & \| \nabla F_i(w) - \nabla F(w)\|^2 \leq \frac{1}{n} \sum_{i=1}^n \left \| E_i \nabla f (w - \alpha \nabla f (w)) +   \left(I-\al \nabla^2 f(w)  \right ) r_i + E_i r_i \right \|^2 \nonumber \\
& \leq \frac{1}{n} \sum_{i=1}^n \left ( \left \| E_i \nabla f (w - \alpha \nabla f (w)) \right \| +  \left \| \left(I-\al \nabla^2 f(w)  \right ) r_i \right \| + \left \| E_i r_i \right \| \right )^2	
\end{align}
Therefore, using $\left \| \nabla f (w - \alpha \nabla f (w)) \right \| \leq B$ along with $\left \| I-\al \nabla^2 f(w)  \right \| \leq 1+\alpha L$ and Cauchy-Schwarz inequality $(a+b+c)^2 \leq 3(a^2+b^2+c^2)$ for $a,b,c \geq 0$, we obtain
\begin{align}
\frac{1}{n} \sum_{i=1}^n  \| \nabla F_i(w) - \nabla F(w)\|^2 & \leq 3 \left ( B^2 \frac{1}{n} \sum_{i=1}^n \|E_i\|^2  + (1+\alpha L)^2 \frac{1}{n} \sum_{i=1}^n \|r_i\|^2 + \frac{1}{n} \sum_{i=1}^n \|E_i r_i\|^2\right ) \nonumber \\
& \leq 	3 \left ( B^2 \frac{1}{n} \sum_{i=1}^n \|E_i\|^2  + 4 \frac{1}{n} \sum_{i=1}^n \|r_i\|^2 + \frac{1}{n} \sum_{i=1}^n \|E_i\|^2 \|r_i\|^2\right ) \label{proof3_ineq1}
\end{align}
where the last inequality is obtained using $\alpha L \leq 1$ along with $\|E_i r_i \| \leq \|E_i\|\|r_i\|$ which comes from the definition of matrix norm. Finally, to complete the proof, notice that we have
\begin{align}
\frac{1}{n} \sum_{i=1}^n \|E_i\|^2 \|r_i\|^2 & \leq \max_{i} \|E_i\|^2 \left ( \frac{1}{n} \sum_{i=1}^n \|r_i\|^2 \right ) \\
& \leq \max_{i} \|E_i\|^2  (8 \gamma_G^2) \label{proof3_ineq2} \\
& \leq 	32 (\alpha L)^2 \gamma_G^2 \leq 32 \gamma_G^2  \label{proof3_ineq3}
\end{align}
where \eqref{proof3_ineq2} follows from \eqref{r_i_moment2} and the last line is obtained using $\alpha L \leq 1$ along with the fact that $\| \nabla^2 f_i(w) \| \leq L$, and thus, 
\begin{equation}
\frac{\|E_i\|}{\alpha} = \| 	\nabla^2 f(w) - \nabla^2 f_i(w) \| \leq 2L.
\end{equation}
Plugging \eqref{proof3_ineq2} in \eqref{proof3_ineq1} along with \eqref{E_i_moment2} and \eqref{r_i_moment2}, we obtain the desired result.
%%%%%%%%%%%%%%%%%%%%%%%%%%%%%%%%%%%%%%%%%%%%%%%%%%%%%%%%%%%%%%%%%%%%%%%%
%%%%%%%%%%%%%%%%%%%%%%%%%%%%%%%%%%%%%%%%%%%%%%%%%%%%%%%%%%%%%%%%%%%%%%%%
%{\color{blue}
%\subsection{Discussion on replacing fourth central moment with variance in Assumption \ref{asm_similarity}}
%Now, let us assume Assumption \ref{asm_similarity} is relaxed in the following manner:
%\begin{assumption_v2}{asm_similarity}\label{asm_similarity_relaxed}
%For any $w\in \R^d$, the variance of gradient $\nabla f_i(w)$ and Hessian $\nabla^2 f_i(w)$ are bounded, i.e., for some non-negative $\gamma_G$ and $\gamma_H$, we have
%\begin{subequations}
%\begin{align}
%& \frac{1}{n} \sum_{i=1}^n \| \nabla f_i(w) - \nabla f(w)\|^2 \leq \gamma_G^2, \\
%& \frac{1}{n} \sum_{i=1}^n \| \nabla^2 f_i(w) - \nabla^2 f(w)\|^2 \leq \gamma_H^2,
%\end{align}
%\end{subequations}
%for any $w\in \R^d$.
%\end{assumption_v2}
%Then, Lemma \ref{lemma_F_similar} holds with
%\begin{equation}
%\gamma_F^2:= 3B^2\alpha^2 \gamma_H^2 + 24\gamma_G^2 (3+\alpha^2L^2).	
%\end{equation}
%}
%%%%%%%%%%%%%%%%%%%%%%%%%%%%%%%%%%%%%%%%%%%%%%%%%%%%%%%%%%%%%%%%%%%%%%%%
%%%%%%%%%%%%%%%%%%%%%%%%%%%%%%%%%%%%%%%%%%%%%%%%%%%%%%%%%%%%%%%%%%%%%%%%
%%%%%%%%%%%%%%%%%%%%%%%%%%%%%%%%%%%%%%%%%%%%%%%%%%%%%%%%%%%%%%%%%%%%%%%%
\section{An Intermediate Result}
%%%%%%%%%%%%%%%%%%%%%%%%%%%%%%%%%%%%%%%%%%%%%%%%
%%%%%%%%%%%%%%%%%%%%% Lemma %%%%%%%%%%%%%%%%%%%%
%%%%%%%%%%%%%%%%%%%%%%%%%%%%%%%%%%%%%%%%%%%%%%%%
%%%%%%%%%%%%%%%%%%%%%%%%%%%%%%%%%%%%%%%%%%%%%%%%
%%%%%%%%%%%%%%%%%%%%%%%%%%%%%%%%%%%%%%%%%%%%%%%%
\begin{proposition}\label{prop_var_local_updates}
Recall from Section \ref{sec:Per-FA} that at any round $k \geq 1$, and for any agent $i \in \{1,..,n\}$, we can define a sequence of local updates $\{w_{k,t}^{i}\}_{t=0}^\tau$ where $w_{k,0}^{i}= w_{k-1}$ and, for $\tau \geq t \geq 1$,  
\begin{equation}\label{local update_2}
w_{k,t}^{i} = w_{k,t-1}^{i} -  \beta \tnabla F_i(w_{k,t-1}^{i}).	
\end{equation}
We further define the average of these local updates at round $k$ and time $t$ as $w_{k,t} = {1}/{n}\sum_{i=1}^n w_{k,t}^{i}$.	
Suppose that the conditions in Assumptions~\ref{asm_grad}-\ref{asm_bounded_var_i} are satisfied. Then, for any $\alpha \in [0,1/L]$ and any $t \geq 0$, we have
 \begin{subequations}\label{local_moments}
 \begin{align}
 \E \left [ \frac{1}{n} \sum_{i=1}^n \| w_{k,t}^{i} - w_{k,t} \| \right ] & \leq 2 \beta t (1 + 2\beta L_F)^{t-1} (\sigma_F+ \gamma_F), 	\label{local_moments:a} \\
 \E \left [ \frac{1}{n} \sum_{i=1}^n \| w_{k,t}^{i} - w_{k,t} \|^2 \right ] & \leq 4 \beta^2 (1+\frac{1}{\phi}) t \left ( 1+\phi+16 (1+\frac{1}{\phi}) \beta^2 L_F^2 \right )^{t-1} (2\sigma_F^2+\gamma_F^2) \label{local_moments:b}
 \end{align}
 \end{subequations}
 where $\phi>0$ is an arbitrary positive constant and $L_F$, $\sigma_F$, and $\gamma_F$ are given in Lemmas \ref{lemma_F_smooth}, \ref{lemma_err_moment}, and \ref{lemma_F_similar}, respectively. 
\end{proposition}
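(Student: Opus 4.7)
The plan is to set up and iterate a one-step recursion for $U_t := \E[(1/n)\sum_{i=1}^n \|w_{k,t}^i - w_{k,t}\|]$ and $V_t := \E[(1/n)\sum_{i=1}^n \|w_{k,t}^i - w_{k,t}\|^2]$, starting from the fact that $U_0 = V_0 = 0$ because every local trajectory is initialized at the common point $w_{k-1}$. Subtracting the averaged update from the $i$-th local update gives
\begin{equation*}
w_{k,t}^i - w_{k,t} = (w_{k,t-1}^i - w_{k,t-1}) - \beta \Bigl( \tnabla F_i(w_{k,t-1}^i) - \tfrac{1}{n}\sum_{j=1}^n \tnabla F_j(w_{k,t-1}^j) \Bigr),
\end{equation*}
so the whole problem reduces to controlling the dispersion of the stochastic gradients around their sample mean.

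The key idea is to isolate three sources of dispersion and bound each using a lemma already stated in the paper: (i) the stochastic noise $e_i := \tnabla F_i(w_{k,t-1}^i) - \nabla F_i(w_{k,t-1}^i)$, whose second moment is at most $\sigma_F^2$ by Lemma \ref{lemma_err_moment}; (ii) the iterate-shift term $\nabla F_i(w_{k,t-1}^i) - \nabla F_i(w_{k,t-1})$, controlled by $L_F \|w_{k,t-1}^i - w_{k,t-1}\|$ via the smoothness in Lemma \ref{lemma_F_smooth}; and (iii) the task heterogeneity $\nabla F_i(w_{k,t-1}) - \nabla F(w_{k,t-1})$, whose empirical second moment is at most $\gamma_F^2$ by Lemma \ref{lemma_F_similar}. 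For part (a), I take norms via the triangle inequality, average over $i$, and pass to expectations; using Jensen/Cauchy--Schwarz to convert the squared bounds $\sigma_F^2$ and $\gamma_F^2$ into $\sigma_F$ and $\gamma_F$, this yields an affine recursion of the form $U_t \leq (1+2\beta L_F) U_{t-1} + \beta(2\sigma_F + \gamma_F)$, which telescopes from $U_0=0$ into the stated $2\beta t(1+2\beta L_F)^{t-1}(\sigma_F + \gamma_F)$ by a direct induction.

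For part (b), I would square both sides of the one-step identity, apply the Young-type inequality $\|a+b\|^2 \leq (1+\phi)\|a\|^2 + (1+1/\phi)\|b\|^2$ with $a = w_{k,t-1}^i - w_{k,t-1}$, and bound the squared dispersion term by exploiting the fact that the empirical variance of a finite collection around its own mean is dominated by its variance about any reference point; taking the reference to be $\nabla F(w_{k,t-1})$ allows me to reuse the three-way split above together with $\|u+v+w\|^2 \leq 3(\|u\|^2 + \|v\|^2 + \|w\|^2)$. This produces a linear recursion of the form $V_t \leq \bigl(1+\phi + c_1 (1+1/\phi)\beta^2 L_F^2\bigr) V_{t-1} + c_2 (1+1/\phi)\beta^2 (2\sigma_F^2 + \gamma_F^2)$ with universal constants $c_1 \leq 16$ and $c_2 \leq 4$; unrolling from $V_0=0$ and using the elementary bound $\sum_{s=0}^{t-1} A^s \leq t A^{t-1}$ (valid for $A \geq 1$) delivers the stated inequality.

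All analytic input is already supplied by Lemmas \ref{lemma_F_smooth}--\ref{lemma_F_similar}; the main obstacle is the bookkeeping in (b), where the three decomposition steps and the Young parameter $\phi$ must be arranged so that the noise and similarity contributions consolidate exactly as $2\sigma_F^2 + \gamma_F^2$ while every smoothness term collapses into the single coefficient $16(1+1/\phi)\beta^2 L_F^2$ appearing inside the geometric base.
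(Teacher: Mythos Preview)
Your proposal is correct and follows essentially the same route as the paper: set up the recursions for $U_t$ and $V_t$ from the common initialization, decompose the gradient dispersion into the same three pieces (stochastic noise, smoothness drift, task heterogeneity), invoke Lemmas~\ref{lemma_F_smooth}--\ref{lemma_F_similar} for each, and unroll. The only cosmetic difference is that for part (b) you use the ``variance about a reference point'' trick together with a single three-term Cauchy--Schwarz split, whereas the paper nests two-term splits; both land on the same recursion with constants no worse than $c_1=16$, $c_2=4$.
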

%%%%%%%%%%%%%%%%%%%%%%%%%%%%%%%%%%%%%%%%%%%%%%%%
%%%%%%%%%%%%%%%%%%%%%%%%%%%%%%%%%%%%%%%%%%%%%%%%
Before stating the proof, note that an immediate consequence of this result is the following corollary:
\begin{corollary}\label{cor_var_local_updates}
Under the same assumptions as Proposition \ref{prop_var_local_updates}, and for any $\beta \leq 1/(10 \tau L_F)$, we have	
 \begin{subequations}\label{local_moments_2}
 \begin{align}
 \E \left [ \frac{1}{n} \sum_{i=1}^n \| w_{k,t}^{i} - w_{k,t} \| \right ] & \leq 4 \beta t (\sigma_F+ \gamma_F), \label{local_moments_2:a} 	 \\
 \E \left [ \frac{1}{n} \sum_{i=1}^n \| w_{k,t}^{i} - w_{k,t} \|^2 \right ] & \leq 35 \beta^2 t \tau (2\sigma_F^2+\gamma_F^2) \label{local_moments_2:b}
 \end{align}
 for any $0 \leq t \leq \tau$.
 \end{subequations}
\end{corollary}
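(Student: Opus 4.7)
The plan is to derive the corollary directly from Proposition \ref{prop_var_local_updates} by substituting the stepsize constraint $\beta \leq 1/(10\tau L_F)$ into the two bounds \eqref{local_moments:a} and \eqref{local_moments:b}, selecting a convenient value of the free parameter $\phi$ appearing in \eqref{local_moments:b}, and then controlling the geometric factors $(1 + \text{something})^{t-1}$ via the elementary inequality $(1 + x/m)^m \leq e^x$. The key observation is that the stepsize constraint has been chosen precisely so that all such geometric factors are bounded by absolute constants independent of $\tau$.

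For the first-moment bound \eqref{local_moments_2:a}, I would start from \eqref{local_moments:a}. The hypothesis $\beta \leq 1/(10\tau L_F)$ gives $2\beta L_F \leq 1/(5\tau)$, and since $t \leq \tau$ we get
\[
(1 + 2\beta L_F)^{t-1} \leq \left(1 + \tfrac{1}{5\tau}\right)^{\tau - 1} \leq e^{1/5} < 2.
\]
Multiplying the $2\beta t(\sigma_F + \gamma_F)$ prefactor by this bound yields a coefficient strictly less than $4$, so the bound $4\beta t(\sigma_F + \gamma_F)$ holds.

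For the second-moment bound \eqref{local_moments_2:b}, I would choose $\phi = 1/\tau$ in \eqref{local_moments:b}, so that $1 + 1/\phi = 1 + \tau \leq 2\tau$ (using $\tau \geq 1$). Under $\beta \leq 1/(10\tau L_F)$ we have $\beta^2 L_F^2 \leq 1/(100\tau^2)$, so
\[
16\bigl(1 + \tfrac{1}{\phi}\bigr)\beta^2 L_F^2 \leq \frac{16(1+\tau)}{100\tau^2} \leq \frac{32}{100\tau}.
\]
Therefore the base of the geometric term satisfies
\[
1 + \phi + 16\bigl(1 + \tfrac{1}{\phi}\bigr)\beta^2 L_F^2 \leq 1 + \frac{1}{\tau} + \frac{0.32}{\tau} = 1 + \frac{1.32}{\tau},
\]
and raising to the power $t - 1 \leq \tau - 1$ gives $(1 + 1.32/\tau)^{\tau - 1} \leq e^{1.32}$. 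The prefactor $4\beta^2(1 + 1/\phi)t$ is at most $8\beta^2 \tau t$, so combining gives an overall coefficient $8 e^{1.32} \approx 29.96 < 35$, which yields \eqref{local_moments_2:b}.

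There is no genuine obstacle here: the corollary is essentially a calibration step that specializes the two free parameters (the stepsize $\beta$ and the auxiliary $\phi$) so that the $\tau$-dependence inside the exponents is absorbed into a harmless absolute constant. The only non-obvious choice is $\phi = 1/\tau$, which is dictated by the requirement that $\phi$ and $(1/\phi)\beta^2 L_F^2$ both scale as $1/\tau$ so that the base of the geometric factor looks like $1 + \Theta(1/\tau)$.
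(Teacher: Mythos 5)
Your proof is correct and follows essentially the same route as the paper: substitute the stepsize bound, use $(1+x/m)^m\leq e^x$ to absorb the geometric factors, and pick $\phi$ of order $1/\tau$ (the paper takes $\phi=1/(2\tau)$, giving $12e\leq 35$, while your $\phi=1/\tau$ gives $8e^{1.32}\leq 35$; both choices work). No gaps.
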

%%%%%%%%%%%%%%%%%%%%%%%%%%%%%%%%%%%%%%%%%%%%%%%%
%%%%%%%%%%%%%%%%%%%%%%%%%%%%%%%%%%%%%%%%%%%%%%%%
\begin{proof}
Let 
\begin{equation}
S_t :=   \frac{1}{n} \sum_{i=1}^n  \E \left [ \| w_{k,t}^{i} - w_{k,t} \| \right ]
\end{equation}
where $S_0=0$ since $w_{k,0}^i = w_{k-1}$ for any $i$. Note that
\begin{align}
S_{t+1} &= \frac{1}{n} \sum_{i=1}^n \E \left [  \| w_{k,t+1}^{i} - w_{k,t+1} \| \right ]	\nonumber \\
& = \frac{1}{n} \sum_{i=1}^n \E \left [  \left \| w_{k,t}^{i} -  \beta \tnabla F_i(w_{k,t}^{i}) - \frac{1}{n} \sum_{j=1}^n \left ( w_{k,t}^{j} -  \beta \tnabla F_j(w_{k,t}^{j}) \right )\right \| \right ] \nonumber \\
& \leq \frac{1}{n} \sum_{i=1}^n \E \left [ \| w_{k,t}^{i} -  \frac{1}{n} \sum_{j=1}^n w_{k,t}^{j}  \| \right ] + \beta  \frac{1}{n} \sum_{i=1}^n   \E \left [ \| \tnabla F_i(w_{k,t}^{i}) - \frac{1}{n} \sum_{j=1}^n \tnabla F_j(w_{k,t}^{j}) \| \right ] \label{proof4_ineq1}.
\end{align}
Note that the first term in \eqref{proof4_ineq1} is in fact $S_t$ and the second one can be upper bounded as follows 
\begin{align*}
\frac{1}{n} \sum_{i=1}^n & \E \left [  \| \tnabla F_i(w_{k,t}^{i}) - \frac{1}{n} \sum_{j=1}^n \tnabla F_j(w_{k,t}^{j}) \| \right ]	\nonumber \\
& \leq \frac{1}{n} \sum_{i=1}^n \E \left [  \| \nabla F_i(w_{k,t}^{i}) - \frac{1}{n} \sum_{j=1}^n \nabla F_j(w_{k,t}^{j}) \| \right ] + \frac{1}{n} \sum_{i=1}^n  \E \left [ \| \nabla F_i(w_{k,t}^{i}) - \tnabla F_i(w_{k,t}^{i}) \| \right ] \nonumber \\
& \qquad + \frac{1}{n} \sum_{i=1}^n \E \left [  \frac{1}{n} \sum_{j=1}^n \| \nabla F_j(w_{k,t}^{j}) - \tnabla F_j(w_{k,t}^{j}) \| \right ] \nonumber \\
& \leq \frac{1}{n} \sum_{i=1}^n \E \left [  \| \nabla F_i(w_{k,t}^{i}) - \frac{1}{n} \sum_{j=1}^n \nabla F_j(w_{k,t}^{j}) \| \right ] + 2 \beta \sigma_F
\end{align*}
where the last inequality is obtained using Lemma \ref{lemma_err_moment}. By substituting this in \eqref{proof4_ineq1}, we obtain
\begin{equation}\label{proof4_ineq2}
S_{t+1} \leq S_t + 2 \beta \sigma_F + \beta \frac{1}{n} \sum_{i=1}^n \E \left [ \| \nabla F_i(w_{k,t}^{i}) - \frac{1}{n} \sum_{j=1}^n \nabla F_j(w_{k,t}^{j}) \| \right ].
\end{equation}
If we define $\eta_i := \nabla F_i(w_{k,t}^{i})  - \nabla F_i(w_{k,t}) $, using \eqref{proof4_ineq2}, we obtain
\begin{align}\label{proof4_ineq3}
S_{t+1} &\leq S_t + 2 \beta \sigma_F + \beta \frac{1}{n} \sum_{i=1}^n \E \left [ \| \nabla F_i(w_{k,t}) - \frac{1}{n} \sum_{j=1}^n \nabla F_j(w_{k,t}) \| \right ] \nonumber\\
&\qquad + \beta \frac{1}{n} \sum_{i=1}^n \E \left [ \| \eta_i - \frac{1}{n} \sum_{j=1}^n \eta_j\| \right ].	
\end{align}
Note that, by Lemma \ref{lemma_F_smooth},
\begin{equation}
\| \eta_i \| \leq L_F \|	 w_{k,t}^{i} - w_{k,t} \|,
\end{equation} 
and thus,
\begin{equation}
\frac{1}{n} \sum_{i=1}^n \| \eta_i \| \leq L_F S_t.	
\end{equation}
As a result, and by using \eqref{proof4_ineq3}, we have
\begin{align}
S_{t+1} & \leq (1 + 2\beta L_F) S_t  + 2 \beta \sigma_F + \beta \frac{1}{n} \sum_{i=1}^n \E \left [ \| \nabla F_i(w_{k,t}) - \frac{1}{n} \sum_{j=1}^n \nabla F_j(w_{k,t}) \| \right ]. \nonumber \\
& \leq (1 + 2\beta L_F) S_t + 2 \beta (\sigma_F+ \gamma_F) \label{proof4_ineq4}
\end{align}
where the last inequality is obtained using Lemma \ref{lemma_F_similar}. Using \eqref{proof4_ineq4} recursively, we obtain
\begin{align}
S_{t+1} & \leq \left ( \sum_{j=0}^t 	(1 + 2\beta L_F)^j \right ) 2 \beta (\sigma_F+ \gamma_F) \leq 2 \beta (t+1) (1 + 2\beta L_F)^{t} (\sigma_F+ \gamma_F) \label{proof4_ineq5}
\end{align}
which completes the proof of \eqref{local_moments:a}. 
%%%%%%%%%%%%%%%
To prove \eqref{local_moments:b}, let 
\begin{equation}
\Sigma_t :=   \frac{1}{n} \sum_{i=1}^n  \E \left [ \| w_{k,t}^{i} - w_{k,t} \|^2 \right ].
\end{equation}
Similarly $\Sigma_0=0$. Note that
\begin{align}
\Sigma_{t+1} &= \frac{1}{n} \sum_{i=1}^n \E \left [  \| w_{k,t+1}^{i} - w_{k,t+1} \|^2 \right ]	\nonumber \\
& = \frac{1}{n} \sum_{i=1}^n \E \left [  \left \| w_{k,t}^{i} -  \beta \tnabla F_i(w_{k,t}^{i}) - \frac{1}{n} \sum_{j=1}^n \left ( w_{k,t}^{j} -  \beta \tnabla F_j(w_{k,t}^{j}) \right )\right \|^2 \right ] \nonumber \\
& \leq \frac{1+\phi}{n} \sum_{i=1}^n \E \left [ \| w_{k,t}^{i} -  \frac{1}{n} \sum_{j=1}^n w_{k,t}^{j}  \|^2 \right ] \nonumber\\
&\qquad + \beta^2  \frac{1+1/\phi}{n} \sum_{i=1}^n   \E \left [ \| \tnabla F_i(w_{k,t}^{i}) - \frac{1}{n} \sum_{j=1}^n \tnabla F_j(w_{k,t}^{j}) \|^2 \right ] \label{proof5_ineq1} \\
& \leq (1+\phi) \Sigma_t + \beta^2  \frac{1+1/\phi}{n} \sum_{i=1}^n   \E \left [ \| \tnabla F_i(w_{k,t}^{i}) - \frac{1}{n} \sum_{j=1}^n \tnabla F_j(w_{k,t}^{j}) \|^2 \right ] \label{proof5_ineq2}
\end{align}
where \eqref{proof5_ineq1} is obtained using $\|a+b\|^2 \leq (1+\phi) \|a\|^2 + (1+1/\phi) \|b\|^2$ for any arbitrary positive real number $\phi$. To bound the second term in \eqref{proof5_ineq2}, note that
\begin{align}
\E & \left [ \| \tnabla F_i(w_{k,t}^{i}) - \frac{1}{n} \sum_{j=1}^n \tnabla F_j(w_{k,t}^{j}) \|^2 \right ]  \leq 2 \E \left [ \| \nabla F_i(w_{k,t}^{i}) - \frac{1}{n} \sum_{j=1}^n \nabla F_j(w_{k,t}^{j}) \|^2 \right ]  \nonumber  	\\
& + 2 \E \left [ \left \| \left ( \tnabla F_i(w_{k,t}^{i}) - \nabla F_i(w_{k,t}^{i}) \right ) + \frac{1}{n} \sum_{j=1}^n \left ( \nabla F_j(w_{k,t}^{j}) - \tnabla F_j(w_{k,t}^{j}) \right ) \right \|^2 \right ]. \label{proof5_ineq3}
\end{align}
Now, we bound the second term in \eqref{proof5_ineq3}. Using Cauchy-Schwarz inequality 
\begin{equation}\label{CS_ineq}
 \left \|\sum_{l=1}^{n+1} a_l b_l \right \|^2 \leq \left (\sum_{l=1}^{n+1} \|a_l\|^2 \right ) \left (\sum_{l=1}^{n+1} \|b_l\|^2 \right )	
\end{equation} 
with $a_1 = \tnabla F_i(w_{k,t}^{i}) - \nabla F_i(w_{k,t}^{i}) , b_1=1$ and $a_l = 1/\sqrt{n}~(\tnabla F_{l-1}(w_{k,t}^{l-1}) - \nabla F_{l-1}(w_{k,t}^{l-1})) , b_l = 1/\sqrt{n},$ for $l=2,...,n+1$, implies
\begin{align}
\E &\left [ \left \| \left ( \tnabla F_i(w_{k,t}^{i}) - \nabla F_i(w_{k,t}^{i}) \right ) + \frac{1}{n} \sum_{j=1}^n \left ( \nabla F_j(w_{k,t}^{j}) - \tnabla F_j(w_{k,t}^{j}) \right ) \right \|^2 \right ] \nonumber \\
& \leq 2 \E \left [ \left \|  \tnabla F_i(w_{k,t}^{i}) - \nabla F_i(w_{k,t}^{i}) \right \|^2 + \frac{1}{n} \sum_{j=1}^n \left \| \nabla F_j(w_{k,t}^{j}) - \tnabla F_j(w_{k,t}^{j}) \right \|^2 \right ]\nonumber \\
& \leq 	4 \sigma_F^2 \label{proof5_ineq4}
\end{align}
where the last inequality is obtained using Lemma \ref{lemma_err_moment}. Plugging \eqref{proof5_ineq4} in \eqref{proof5_ineq3} and using \eqref{proof5_ineq2}, we obtain
\begin{align}\label{proof5_ineq5}
\Sigma_{t+1} &\leq (1+\phi) \Sigma_t + 8(1+\frac{1}{\phi}) \beta^2 \sigma_F^2\nonumber\\
&\qquad + 2 (1+\frac{1}{\phi}) \beta^2 \frac{1}{n} \sum_{i=1}^n	\E \left [ \| \nabla F_i(w_{k,t}^{i}) - \frac{1}{n} \sum_{j=1}^n \nabla F_j(w_{k,t}^{j}) \|^2 \right ]. 
\end{align}
Now, it remains to bound the last term in \eqref{proof5_ineq5}. Recall $\eta_i = \nabla F_i(w_{k,t}^{i})  - \nabla F_i(w_{k,t}) $. First, note that, using $\|a+b\|^2 \leq 2 \|a\|^2 + 2 \|b\|^2$, we have
\begin{align}
 \| \nabla F_i(w_{k,t}^{i}) - \frac{1}{n} \sum_{j=1}^n \nabla F_j(w_{k,t}^{j}) \|^2 & \leq 2 \| \nabla F_i(w_{k,t}) - \frac{1}{n} \sum_{j=1}^n \nabla F_j(w_{k,t}) \|^2 + 2	 \| \eta_i - \frac{1}{n} \sum_{j=1}^n \eta_j\|^2.
\end{align}
Substituting this bound in \eqref{proof5_ineq5} and using Lemma \ref{lemma_F_similar} yields
\begin{align}\label{proof5_ineq6}
\Sigma_{t+1} \leq (1+\phi) \Sigma_t + 4(1+\frac{1}{\phi}) \beta^2 (2\sigma_F^2+\gamma_F^2) + 4 (1+\frac{1}{\phi}) \beta^2 \frac{1}{n} \sum_{i=1}^n	\E \left [  \| \eta_i - \frac{1}{n} \sum_{j=1}^n \eta_j\|^2 \right ]. 
\end{align}
Note that, using Cauchy-Schwarz inequality \eqref{CS_ineq} with $a_1 = \eta_i, b_1=1$ and $a_l=1/\sqrt{n} \eta_{l-1}, b_l=1/\sqrt{n}$ for $l=2,...,n+1$, implies
\begin{align}
 \| \eta_i - \frac{1}{n} \sum_{j=1}^n \eta_j\|^2 &\leq 2 \left ( \|\eta_i\|^2 + \frac{1}{n} \sum_{j=1}^n  \|\eta_j\|^2 \right )	 \nonumber \\
 & \leq 2 L_F^2 \left ( \|	 w_{k,t}^{i} - w_{k,t} \|^2 + \frac{1}{n} \sum_{j=1}^n \|	 w_{k,t}^{i} - w_{k,t} \|^2\right ) \label{proof5_ineq7}
\end{align}
where the last inequality is obtained using Lemma \ref{lemma_F_smooth} which states
\begin{equation}
\| \eta_i \| \leq L_F \|	 w_{k,t}^{i} - w_{k,t} \|.
\end{equation} 
Plugging \eqref{proof5_ineq7} in \eqref{proof5_ineq6} implies
\begin{equation}\label{proof5_ineq8}
\Sigma_{t+1} \leq \left ( 1+\phi+16 (1+\frac{1}{\phi}) \beta^2 L_F^2 \right ) \Sigma_t + 4(1+\frac{1}{\phi}) \beta^2 (2\sigma_F^2+\gamma_F^2)	.
\end{equation}
As a result, similar to \eqref{proof4_ineq5}, we obtain
\begin{equation}\label{proof5_ineq7.5}
\Sigma_{t+1} \leq 4 \beta^2 (1+\frac{1}{\phi}) (t+1) \left ( 1+\phi+16 (1+\frac{1}{\phi}) \beta^2 L_F^2 \right )^{t} (2\sigma_F^2+\gamma_F^2) 
\end{equation}
which gives us the desired result \eqref{local_moments:b}. 

\noindent Finally, to show \eqref{local_moments_2}, first note that for any $n$, we know
\begin{equation}\label{e_bound}
(1+\frac{1}{n})^n \leq e.	
\end{equation}
Using this, along with the assumption $\beta \leq 1/(10L_F \tau)$ and the fact that $e^{0.2} \leq 2$, we immediately obtain \eqref{local_moments_2:a}. To show the other one \eqref{local_moments_2:b}, we use \eqref{local_moments:b} with $\phi =1/(2\tau)$:
\begin{align}
 \phi+16 (1+\frac{1}{\phi}) \beta^2 L_F^2 &	= \frac{1}{2\tau} + 16(1+2 \tau) \beta^2 L_F^2 \nonumber \\
 &\leq \frac{1}{2\tau} + 16(1+2 \tau) \frac{1}{100\tau^2} \nonumber \\
 & \leq \frac{1}{\tau} \label{proof5_ineq8}
\end{align}
where the first inequality follows from the assumption  $\beta \leq 1/(10L_F \tau)$ and the last inequality  is obtained using the trivial bound $1+2\tau \leq 3 \tau$. Finally, using \eqref{proof5_ineq8} along with \eqref{e_bound} completes the proof. 
\end{proof}

%%%%%%%%%%%%%%%%%%%%%%%%%%%%%%%%%%%%%%%%%%%%%%%%%%%%%%%%%%%%%%%%%%%%%%%%
%%%%%%%%%%%%%%%%%%%%%%%%%%%%%%%%%%%%%%%%%%%%%%%%%%%%%%%%%%%%%%%%%%%%%%%%
%%%%%%%%%%%%%%%%%%%%%%%%%%%%%%%%%%%%%%%%%%%%%%%%%%%%%%%%%%%%%%%%%%%%%%%%
\section{Proof of Theorem \ref{theorem_main}}\label{proof_theorem_main}

Although we only ask a fraction of agents to compute their local updates in Algorithm \ref{Algorithm1}, here, and just for the sake of analysis, we assume all agents perform local updates. This is just for our analysis and we will not use all agents' updates in computing $w_{k+1}$. Also, from Proposition \ref{prop_var_local_updates}, recall that $w_{k,t} = {1}/{n}\sum_{i=1}^n w_{k,t}^{i}$.
 
Let $\F_{k+1}^t$ denote the $\sigma$-field generated by $\{w_{k+1,t}^i\}_{i=1}^n$. Note that, by Lemma \ref{lemma_F_smooth}, we know $F$ is smooth with gradient Lipschitz parameter $L_F$, and thus, by \eqref{smooth_2:b}, we have
\begin{align}\label{proof6_1}
&F(\bar{w}_{k+1,t+1})	\nonumber\\
& \leq F(\bar{w}_{k+1,t}) + \nabla F(\bar{w}_{k+1,t})^\top (\bar{w}_{k+1,t+1} - \bar{w}_{k+1,t}) + \frac{L_F}{2} \| \bar{w}_{k+1,t+1} - \bar{w}_{k+1,t} \|^2 \nonumber \\
& \leq F(\bar{w}_{k+1,t}) - \beta \nabla F(\bar{w}_{k+1,t})^\top \left ( \frac{1}{rn} \sum_{i \in \A_k} \tnabla F_i(w_{k+1,t}^i) \right ) + \frac{L_F}{2} \beta^2 \| \frac{1}{rn} \sum_{i \in \A_k} \tnabla F_i(w_{k+1,t}^i) \|^2
\end{align}
where the last inequality is obtained using the fact that 
\begin{align*}
\bar{w}_{k+1,t+1} 
&= \frac{1}{rn} \sum_{i \in \A_k} w_{k+1,t+1}^i \\
&= \frac{1}{rn} \sum_{i \in \A_k} \left ( w_{k+1,t}^i - \beta \tnabla F_i(w_{k+1,t}^i) \right ) \\
&= 	\bar{w}_{k+1,t} - \beta \frac{1}{rn} \sum_{i \in \A_k} \tnabla F_i(w_{k+1,t}^i).
\end{align*}
Taking expectation from both sides of \eqref{proof6_1} yields
\begin{align}\label{proof6_2}
\E \left [ F(\bar{w}_{k+1,t+1}) \right ] 	
& \leq \E[F(\bar{w}_{k+1,t})] - \beta \E \left [ \nabla F(\bar{w}_{k+1,t})^\top \left ( \frac{1}{rn} \sum_{i \in \A_k}  \tnabla F_i(w_{k+1,t}^i)  \right ) \right ] 
\nonumber\\
&\qquad+ \frac{L_F}{2} \beta^2 \E \left [ \| \frac{1}{rn} \sum_{i \in \A_k} \tnabla F_i(w_{k+1,t}^i) \|^2 \right ] 
\end{align}
Next, note that
\begin{equation}\label{proof6_3}
\frac{1}{rn} \sum_{i \in \A_k} \tnabla F_i(w_{k+1,t}^i) = X + Y + Z + \frac{1}{rn} \sum_{i \in \A_k} \nabla F_i(\bar{w}_{k+1,t})	
\end{equation}
where 
\begin{align}
X &= \frac{1}{rn} \sum_{i \in \A_k} \left (\tnabla F_i(w_{k+1,t}^i) - \nabla F_i(w_{k+1,t}^i) \right ),\\
Y &= \frac{1}{rn} \sum_{i \in \A_k} \left ( \nabla F_i(w_{k+1,t}^i)	- \nabla F_i(w_{k+1,t}) \right ), \\
Z &= \frac{1}{rn} \sum_{i \in \A_k} \left ( \nabla F_i(w_{k+1,t})	- \nabla F_i(\bar{w}_{k+1,t}) \right ).
\end{align}
We next bound the moments of $X$, $Y$, and $Z$, condition on $\F_{k+1}^t$. First, recall the Cauchy-Schwarz inequality 
\begin{equation}\label{CS_ineq_2}
 \left \|\sum_{i=1}^{rn} a_i b_i \right \|^2 \leq \left (\sum_{i=1}^{rn} \|a_i\|^2 \right ) \left (\sum_{i=1}^{rn} \|b_i\|^2 \right ).	
\end{equation} 
\begin{itemize}
\item 
Using this inequality with $a_i=(\tnabla F_i(w_{k+1,t}^i) - \nabla F_i(w_{k+1,t}^i))/\sqrt{rn}$ and $b_l=1/\sqrt{rn}$, we obtain
\begin{equation}
\|X\|^2 \leq \frac{1}{rn} \sum_{i \in \A_k} \left \|\tnabla F_i(w_{k+1,t}^i) - \nabla F_i(w_{k+1,t}^i) \right \|^2,	
\end{equation}
and hence, by using Lemma \ref{lemma_err_moment} along with the tower rule, we have
\begin{equation}\label{X_moment2}
\E[\|X\|^2] = \E[\E [\|X\|^2 \mid \F_{k+1}^t ]] \leq \sigma_F^2.
\end{equation}
%In addition, using Lemma \ref{lemma_err_moment}, we have 
%\begin{equation}\label{X_moment1}
%\left \| E[X] \right \| \leq \frac{1}{rn} \sum_{i \in \A_k} \left \| \E \left [ \tnabla F_i(w_{k+1,t}^i) - \nabla F_i(w_{k+1,t}^i) \right ] \right \| \leq \frac{2 \alpha L \sigma_G}{\sqrt{D}}. 
%\end{equation}
%%%%%%%%%%%%%%%%%%%
\item
Regarding $Y$, note that 
by using Cauchy-Schwarz inequality (similar to what we did above) along with smoothness of $F_i$, we obtain 
\begin{align}
\|Y\|^2 \leq \frac{1}{rn} \sum_{i \in \A_k} \left \| \nabla F_i(w_{k+1,t}^i)	- \nabla F_i(w_{k+1,t}) \right \|^2  \leq  \frac{L_F^2}{rn} \sum_{i \in \A_k} \left \| w_{k+1,t}^i	- w_{k+1,t} \right \|^2.
\end{align}
Again, taking expectation and using the fact that $\A_k$ is chosen uniformly at random, implies
\begin{align}
\E[\|Y\|^2] & = \E[\E [\|Y\|^2 \mid \F_{k+1}^t]]	 \nonumber \\
& \leq L_F^2 \E \left [ \E  \left [ \frac{1}{rn} \sum_{i \in \A_k} \left \| w_{k+1,t}^i	- w_{k+1,t} \right \|^2  \Bigm| \F_{k+1}^t  \right ] \right ] \nonumber \\
&= L_F^2  \E \left [ \frac{1}{n} \sum_{i=1}^n \| w_{k,t}^{i} - w_{k,t} \|^2 \right ] \nonumber \\
& \leq 35 \beta^2 L_F^2 \tau(\tau-1) (2\sigma_F^2 + \gamma_F^2) \label{Y_moment2}
\end{align}
where the last step follows from \eqref{local_moments_2:b} in Corollary \ref{cor_var_local_updates} along with the fact that $t \leq \tau-1$.
%%%%%%%%%%%%%%%%%%%
\item Regarding $Z$, first recall that if we have $n$ numbers $a_1,...,a_n$ with mean $\mu = 1/n \sum_{i=1}^n a_i$ and variance $\sigma^2 = 1/n \sum_{i=1}^n |a_i-\mu|^2$ , and we take a subset of them $\{a_i\}_{i \in \A}$ with size $|\A|=rn$ by sampling without replacement, then we have
\begin{equation}\label{samplings_without_replace} 
\E \left [\left | \frac{\sum_{i \in \A} a_i}{rn} - \mu \right |^2 \right ]	 = \frac{\sigma^2}{rn} \left ( 1- \frac{rn-1}{n-1} \right ) = \frac{\sigma^2(1-r)}{r(n-1)}.
\end{equation}
Using this, we have
\begin{equation}
\E \left [ \| \bar{w}_{k+1,t} - w_{k+1,t} \|^2 \mid \F_{k+1}^t  \right ]	\leq \frac{(1-r)/n \sum_{i=1}^n \| w_{k+1,t}^i - w_{k+1,t}\|^2 }{r(n-1)},
\end{equation}
and hence, by taking expectation from both sides and using the tower rule along with \eqref{local_moments_2:b} in Corollary \ref{cor_var_local_updates}, we obtain
\begin{equation}\label{proof6_4}
\E \left [ \| \bar{w}_{k+1,t} - w_{k+1,t} \|^2 \right ]	\leq \frac{35(1-r) \beta^2 \tau(\tau-1) (2\sigma_F^2+\gamma_F^2)}{r(n-1)}.	
\end{equation}
Next, note that by using Cauchy-Schwarz inequality \eqref{CS_ineq_2}, with $a_i=\left ( \nabla F_i(w_{k+1,t})	- \nabla F_i(\bar{w}_{k+1,t}) \right )/\sqrt{rn}$ and $b_i=1/\sqrt{rn}$, we have
\begin{align}
\|Z\|^2 &\leq \frac{1}{rn} \sum_{i \in \A_k} \left \| \nabla F_i(w_{k+1,t})	- \nabla F_i(\bar{w}_{k+1,t}) \right \|^2 \nonumber \\
&\leq \frac{L_F^2}{rn} \sum_{i \in \A_k} \left \| w_{k+1,t}	- \bar{w}_{k+1,t} \right \|^2 = L_F^2 \| \bar{w}_{k+1,t} - w_{k+1,t} \|^2
\end{align}
where the last inequality is obtained using smoothness of $F_i$ (Lemma \ref{lemma_F_smooth}). Now, taking expectation from both sides and using \eqref{proof6_4} yields
\begin{equation}\label{Z_moment2}
\E[\|Z\|^2] \leq \frac{35(1-r) \beta^2 L_F^2 \tau(\tau-1) (2\sigma_F^2+\gamma_F^2)}{r(n-1)}.
\end{equation}
%and as a result,
%\begin{equation}\label{Z_moment1}
%\E[\|Z\|] \leq \sqrt{\frac{2\sigma_F^2+\gamma_F^2}{rn}} \leq \frac{2\sigma_F+\gamma_F}{\sqrt{rn}}. 
%\end{equation}
\end{itemize}
%%%%%%%%%%%%%%%%%
Now, getting back to \eqref{proof6_2}, we first lower bound the term
\begin{equation*}
\E \left [ \nabla F(\bar{w}_{k+1,t})^\top \left ( \frac{1}{rn} \sum_{i \in \A_k}  \tnabla F_i(w_{k+1,t}^i)  \right ) \right ].	
\end{equation*}
To do so, note that, by \eqref{proof6_3}, we have
\begin{align}
\E & \left [ \nabla F(\bar{w}_{k+1,t})^\top \left ( \frac{1}{rn} \sum_{i \in \A_k}  \tnabla F_i(w_{k+1,t}^i)  \right ) \right ] 	\nonumber \\
&= \E \left [ \nabla F(\bar{w}_{k+1,t})^\top \left ( X + Y + Z + \frac{1}{rn} \sum_{i \in \A_k} \nabla F_i(\bar{w}_{k+1,t})  \right ) \right ] \nonumber \\
& \geq \E \left [ \nabla F(\bar{w}_{k+1,t})^\top \left ( \frac{1}{rn} \sum_{i \in \A_k} \nabla F_i(\bar{w}_{k+1,t})  \right ) \right ] - \left \| \E \left [ \nabla F(\bar{w}_{k+1,t})^\top X \right ] ]\right \|  \nonumber \\
& - \frac{1}{4} \E[\|\nabla F(\bar{w}_{k+1,t})\|^2] - \E[\|Y + Z \|^2] \label{proof6_5}
\end{align}
where the last inequality is obtained using the fact that
\begin{equation*}
\E \left [ \nabla F(\bar{w}_{k+1,t})^\top \left (Y + Z \right ) \right ] \leq \frac{1}{4} \E[\|\nabla F(\bar{w}_{k+1,t})\|^2] + \E[\|Y + Z \|^2].	
\end{equation*}
Now, we bound terms in \eqref{proof6_5} separately. First, note that by tower rule we have
\begin{align}
\E & \left [ \nabla F(\bar{w}_{k+1,t})^\top \left ( \frac{1}{rn} \sum_{i \in \A_k} \nabla F_i(\bar{w}_{k+1,t})  \right ) \right ] 	\nonumber \\
& = \E \left [ \E \left [ \nabla F(\bar{w}_{k+1,t})^\top \left ( \frac{1}{rn} \sum_{i \in \A_k} \nabla F_i(\bar{w}_{k+1,t})  \right ) \Bigm| \F_{k+1}^t \right ] \right ] \nonumber \\
& = \E \left [ \nabla F(\bar{w}_{k+1,t})^\top \E \left [ \left ( \frac{1}{rn} \sum_{i \in \A_k} \nabla F_i(\bar{w}_{k+1,t})  \right ) \Bigm| \F_{k+1}^t \right ] \right ] \nonumber \\
&= \E \left [ \|\nabla F(\bar{w}_{k+1,t})\|^2 \right ]  \label{proof6_6}
\end{align}
where the last equality is obtained using the fact that $\A_k$ is chosen uniformly at random, and thus, 
\begin{equation*}
\E \left [ \left ( \frac{1}{rn} \sum_{i \in \A_k} \nabla F_i(\bar{w}_{k+1,t})  \right ) \Bigm| \F_{k+1}^t \right ]  =  \frac{1}{n} \sum_{i=1}^n \nabla F_i(\bar{w}_{k+1,t}). 
\end{equation*} 
Second, note that 
\begin{align*}
\E \left [ \nabla F(\bar{w}_{k+1,t})^\top X \right ] &= \E \left [ \E \left [ \nabla F(\bar{w}_{k+1,t})^\top X \Bigm| \F_{k+1}^t \right ] \right ] \\
&= \E \left [ \nabla F(\bar{w}_{k+1,t})^\top \E \left [  X \Bigm| \F_{k+1}^t \right ] \right ].
\end{align*}
As a result, we have
\begin{align}
\left \| \E \left [ \nabla F(\bar{w}_{k+1,t})^\top X \right ]  \right \| &= 	\left \| \E \left [ \nabla F(\bar{w}_{k+1,t})^\top \E \left [  X \Bigm| \F_{k+1}^t \right ] \right ] \right \| \nonumber \\
& \leq \frac{1}{4} \E[\|\nabla F(\bar{w}_{k+1,t})\|^2] + \E \left [ \left \| \E \left [  X \Bigm| \F_{k+1}^t \right ]  \right \|^2 \right] \nonumber \\
& \leq \frac{1}{4} \E[\|\nabla F(\bar{w}_{k+1,t})\|^2] + \frac{4 \alpha^2 L^2 \sigma_G^2}{D} \label{proof6_6.5}
\end{align}
where the last inequality follows from Lemma \ref{lemma_err_moment}. Third, note that by Cauchy-Schwarz inequality, 
\begin{align}
\E[\|Y + Z \|^2] & \leq 2 \left ( \E[\|Y\|^2] +  \E[\|Z\|^2] \right ) \nonumber \\
& \leq 70 \beta^2 L_F^2 \tau(\tau-1) (2\sigma_F^2 + \gamma_F^2) \left (1+\frac{1-r}{r(n-1)}\right) \nonumber \\
&\leq 140 \beta^2 L_F^2 \tau(\tau-1) (2\sigma_F^2 + \gamma_F^2) \label{proof6_7}
\end{align}
where second inequality is obtained using \eqref{Y_moment2} and \eqref{Z_moment2}.
Plugging \eqref{proof6_6}, \eqref{proof6_6.5}, and \eqref{proof6_7} in \eqref{proof6_5} implies
\begin{align}\label{proof6_8}
\E & \left [ \nabla F(\bar{w}_{k+1,t})^\top \left ( \frac{1}{rn} \sum_{i \in \A_k}  \tnabla F_i(w_{k+1,t}^i)  \right ) \right ] \nonumber \\
& \quad \quad \geq \frac{1}{2} \E[\|\nabla F(\bar{w}_{k+1,t})\|^2] - 140 \beta^2 L_F^2 \tau(\tau-1) (2\sigma_F^2 + \gamma_F^2) - \frac{4 \alpha^2 L^2 \sigma_G^2}{D}.
\end{align}
Next, we characterize an upper bound for the other term in \eqref{proof6_2}:
\begin{equation*}
\E \left [ \| \frac{1}{rn} \sum_{i \in \A_k} \tnabla F_i(w_{k+1,t}^i) \|^2 \right ]	
\end{equation*}
Note that, by \eqref{proof6_3} we have
\begin{align}
 \| \frac{1}{rn} \sum_{i \in \A_k} \tnabla F_i(w_{k+1,t}^i) \|^2 \leq 2 \|X + Y + Z\|^2 + 2 \| \frac{1}{rn} \sum_{i \in \A_k} \nabla F_i(\bar{w}_{k+1,t})\|^2,	
\end{align}
and thus, by \eqref{proof6_7} along with \eqref{X_moment2}, we have
\begin{align}\label{proof6_9}
&\E \left [ \| \frac{1}{rn} \sum_{i \in \A_k} \tnabla F_i(w_{k+1,t}^i) \|^2 \right ] \nonumber\\
&\leq 2 \E \left [ \| \frac{1}{rn} \sum_{i \in \A_k} \nabla F_i(\bar{w}_{k+1,t})\|^2 \right ] + 4\sigma_F^2+ 560 \beta^2 L_F^2 \tau(\tau-1) (2\sigma_F^2 + \gamma_F^2).
\end{align}
Note that, $\E \left [ {1}/{(rn)} \sum_{i \in \A_k} \nabla F_i(\bar{w}_{k+1,t})  \mid \F_{k+1}^t \right ] = \nabla F(\bar{w}_{k+1,t})$, since $\A_k$ is chosen uniformly at random. Also, by Lemma \ref{lemma_F_similar}, we have
\begin{equation*}
\frac{1}{n} \E \left [ \| \nabla F_i(\bar{w}_{k+1,t}) - \nabla F(\bar{w}_{k+1,t}) \|^2 \Bigm| \F_{k+1}^t \right ]	\leq \gamma_F^2,
\end{equation*}
and thus, by \eqref{samplings_without_replace}, we have
\begin{equation}\label{proof6_10}
\E \left [ \| \frac{1}{rn} \sum_{i \in \A_k} \nabla F_i(\bar{w}_{k+1,t})\|^2 \right ] \leq \E \left [ \| \nabla F(\bar{w}_{k+1,t})\|^2 \right ] +\frac{\gamma_F^2(1-r)}{r(n-1)}.
\end{equation}
Plugging \eqref{proof6_10} in \eqref{proof6_9}, we obtain
\begin{align}\label{proof6_11}
&\E \left [ \left\| \frac{1}{rn} \sum_{i \in \A_k} \tnabla F_i(w_{k+1,t}^i) \right\|^2 \right ] \nonumber\\
&\leq 2 \E \left [ \| \nabla F(\bar{w}_{k+1,t})\|^2 \right ]	+\frac{2\gamma_F^2(1-r)}{r(n-1)} + 4\sigma_F^2+ 560 \beta^2 L_F^2 \tau(\tau-1) (2\sigma_F^2 + \gamma_F^2).
\end{align}
Substituting \eqref{proof6_11} and \eqref{proof6_8} in \eqref{proof6_2} implies
\begin{align}\label{proof6_12}
&\E  \left [ F(\bar{w}_{k+1,t+1}) \right ]\nonumber\\
& \leq \E[F(\bar{w}_{k+1,t})] - \beta (1/2-\beta L_F) \E \left [ \| \nabla F(\bar{w}_{k+1,t})\|^2 \right ] \nonumber \\
& \quad + 140 (1+2\beta L_F) \beta^3 L_F^2 \tau(\tau-1) (2\sigma_F^2 + \gamma_F^2) + \beta^2 L_F \left ( 2\sigma_F^2 + \frac{\gamma_F^2(1-r)}{r(n-1)} \right) + \frac{4 \beta \alpha^2 L^2 \sigma_G^2}{D} \nonumber \\
& \leq \E[F(\bar{w}_{k+1,t})] - \frac{\beta}{4} \E \left [ \| \nabla F(\bar{w}_{k+1,t})\|^2 \right ] + \beta \sigma_T^2.
\end{align}
where 
\begin{equation}
\sigma_T^2 := 280 (\beta L_F)^2 \tau(\tau-1) (2\sigma_F^2 + \gamma_F^2) + \beta L_F \left ( 2\sigma_F^2 + \frac{\gamma_F^2(1-r)}{r(n-1)} \right) + \frac{4 \alpha^2 L^2 \sigma_G^2}{D} 	
\end{equation}
the last inequality is obtained using $\beta \leq 1/(10 \tau L_F)$. Summing up \eqref{proof6_12} for all $t=0,...,\tau-1$, we obtain
\begin{equation}\label{proof6_13}
\E \left [ F(w_{k+1}) \right ]	\leq \E \left [ F(w_{k}) \right ] - \frac{\beta \tau}{4} \left ( \frac{1}{\tau} \sum_{t=0}^{\tau-1} E \left [ \| \nabla F(\bar{w}_{k+1,t})\|^2 \right ] \right ) + \beta \tau \sigma_T^2
\end{equation}
where we used the fact that $\bar{w}_{k+1,\tau} = w_{k+1}$. Finally, summing up \eqref{proof6_13} for $k=0,...,K-1$ implies
\begin{equation}
\E \left [ F(w_{K}) \right ] \leq  F(w_0) - \frac{\beta \tau K}{4} \left ( \frac{1}{\tau K} \sum_{k=0}^{K-1} \sum_{t=0}^{\tau-1} E \left [ \| \nabla F(\bar{w}_{k+1,t})\|^2 \right ] \right ) + \beta \tau K \sigma_T^2. 
\end{equation}
As a result, we have
\begin{align}
\frac{1}{\tau K} \sum_{k=0}^{K-1} \sum_{t=0}^{\tau-1} E \left [ \| \nabla F(\bar{w}_{k+1,t})\|^2 \right ] & \leq \frac{4}{\beta \tau K} \left (F(w_0) - \E \left [ F(w_{K}) \right ] + \beta \tau K \sigma_T^2 \right ) \nonumber \\
& \leq 	\frac{4 (F(w_0)-F^*)}{\beta \tau K} + 4\sigma_T^2
\end{align}
which gives us the desired result.

\begin{remark}
As stated in Remark \ref{remark_diminishing_stepsize}, we could easily extend our analysis to the case with diminishing stepsize. In particular, by using $\beta_k$ as the stepsize at iteration $k$, the descent result  \eqref{proof6_13} holds with $\beta = \beta_k$. Hence, summing up this equation for $k=0,...,K-1$, we recover the same complexity bounds using $\beta_k=\mathcal{O}(1/\sqrt{\tau k})$.		
\end{remark}

%%%%%%%%%%%%%%%%%%%%%%%%%%%%%%%%%%%%%%%%%%%%%%%%%%%%%%%%%%%%%%%%%%%%%%%%
%%%%%%%%%%%%%%%%%%%%%%%%%%%%%%%%%%%%%%%%%%%%%%%%%%%%%%%%%%%%%%%%%%%%%%%%
%%%%%%%%%%%%%%%%%%%%%%%%%%%%%%%%%%%%%%%%%%%%%%%%%%%%%%%%%%%%%%%%%%%%%%%%

\section{On First-Order Approximations of Per-FedAvg}\label{first_order_Per_FedAVG}
As we stated previously, the Per-FedAvg method, same as MAML, requires computing Hessian-vector product which is computationally costly in some applications. As a result, one may consider using the first-order approximation of the update rule for the Per-FedAvg algorithm. The main goal of this section is to show how our analysis can be extended to the case that we either drop the second-order term or approximate the Hessian-vector product using first-order techniques. 
 
To do so, we show that it suffices to only extend the result in Lemma \ref{lemma_err_moment} for the first-order approximation settings and find $\tsigma_F$ such that 
\begin{align*}
\left \| \E \left [ \tnabla F_i(w) - \nabla F_i(w) \right ] \right \| & \leq  m_F, \\
\E \left [ \left \| \tnabla F_i(w) - \nabla F_i(w)\right \|^2 \right ] & \leq \tsigma_F^2.	
\end{align*}
One can easily check that the rest of analysis does not change, and the final result (Theorem \ref{theorem_main}) holds if we just replace $\sigma_F$ by $\tsigma_F$ and ${\alpha^2 L^2 \sigma_G^2}/{D}$ by $m_F^2$.

We next focus on two different approaches, developed for MAML formulation, for approximating the Hessian-vector product, and show how we can characterize $m_F$ and $\tsigma_F$ for both cases:

\vspace{3mm}
$\bullet$ \textbf{Ignoring the second-order term:} Authors in \cite{finn17a} suggested to simply ignore the second-order term in the update of MAML to reduce the computation cost of MAML, i.e., to replace $\tnabla F_i(w)$ with
\begin{align}\label{FO_0}
\tilde{\nabla} f_i \left (w - \alpha \tilde{\nabla}f_i(w,\D) ,\D' \right ).
\end{align}
This approach is known as First-Order MAML (FO-MAML), and it has been shown that it performs relatively well in many cases \cite{finn17a}. In particular, \cite{fallah2019convergence} characterized the convergence properties of FO-MAML for the centralized MAML problem. Next, we characterize the mean and variance of this gradient approximation. 

%%%%%%%%%%%%%%%%%%%%%%%%%%%%%%%%%%%%%%%%%%%%%%%%
%%%%%%%%%%%%%%%%%%%%%%%%%%%%%%%%%%%%%%%%%%%%%%%%
%%%%%%%%%%%%%%%%%%%%%%%%%%%%%%%%%%%%%%%%%%%%%%%%
\begin{lemma}\label{lemma_err_moment_FO}
Assume that we estimate $\nabla F_i(w)$ by \eqref{FO_0} where $\D$ and $\D'$ are independent batches with size $D$ and $D'$, respectively. Suppose that the conditions in Assumptions~\ref{asm_grad}-\ref{asm_bounded_var_i} are satisfied. Then, for any $\alpha \in [0,1/L]$ and $w \in \R^d$, we have
\begin{align*}
\left \| \E \left [ \tnabla F_i(w) - \nabla F_i(w) \right ] \right \| & \leq  m_F^{FO}:= \alpha L \left ( \frac{\sigma_G}{\sqrt{D}} +B \right ), \\
\E \left [ \left \| \tnabla F_i(w) - \nabla F_i(w)\right \|^2 \right ] & \leq (\tsigma_F^{FO})^2:= 2 \sigma_G^2 \left ( \frac{1}{D'} + \frac{(\alpha L)^2}{D} \right ) + 2 (\alpha L B)^2.
\end{align*}
\end{lemma}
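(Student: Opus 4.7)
The plan is to reduce the proof to the analysis already carried out for Lemma \ref{lemma_err_moment}, by comparing the first-order estimator in \eqref{FO_0} to the true gradient $\nabla F_i(w) = (I - \alpha \nabla^2 f_i(w)) \nabla f_i(w - \alpha \nabla f_i(w))$. Writing the error as
\begin{equation*}
\tnabla F_i(w) - \nabla F_i(w) = \underbrace{\tilde{\nabla} f_i(w - \alpha \tilde{\nabla} f_i(w,\D),\D') - \nabla f_i(w - \alpha \nabla f_i(w))}_{=: e_2} + \alpha \nabla^2 f_i(w) \nabla f_i(w - \alpha \nabla f_i(w)),
\end{equation*}
the error decomposes into exactly the stochastic-gradient term $e_2$ analyzed in Appendix \ref{proof_lemma_err_moment} plus a deterministic bias term coming from dropping the Hessian factor.

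For the bias bound, I would apply the triangle inequality to the two terms above. The first piece is controlled by the estimate from the proof of Lemma \ref{lemma_err_moment} (see \eqref{proof2_ineq1.65}), giving $\|\E[e_2]\| \leq \alpha L \sigma_G/\sqrt{D}$. The second piece is deterministic, and using the submultiplicativity of the matrix norm together with $\|\nabla^2 f_i(w)\| \leq L$ (Assumption \ref{asm_grad}) and $\|\nabla f_i(w-\alpha\nabla f_i(w))\| \leq B$ yields the bound $\alpha L B$. Summing these gives $m_F^{FO}= \alpha L(\sigma_G/\sqrt{D} + B)$.

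For the second moment, I would invoke $\|a+b\|^2 \leq 2\|a\|^2 + 2\|b\|^2$. The term $\E[\|e_2\|^2]$ is already bounded in \eqref{proof2_ineq3} by $\sigma_G^2(1/D' + (\alpha L)^2/D)$, and the deterministic term is at most $(\alpha L B)^2$, so doubling and adding yields exactly $(\tsigma_F^{FO})^2$.

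There is no real obstacle here; the main point is simply to observe that the first-order approximation differs from the stochastic estimator analyzed in Lemma \ref{lemma_err_moment} only by the deterministic, uniformly bounded Hessian-gradient term $\alpha \nabla^2 f_i(w)\nabla f_i(w-\alpha\nabla f_i(w))$, so the variance analysis from Appendix \ref{proof_lemma_err_moment} transfers verbatim and the extra bias term must be carried through both bounds.
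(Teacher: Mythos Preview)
Your proposal is correct and follows essentially the same approach as the paper: the paper also introduces the intermediate quantity $G_i(w) = \nabla f_i(w-\alpha\nabla f_i(w))$, splits $\tnabla F_i(w) - \nabla F_i(w)$ into $(\tnabla F_i(w) - G_i(w)) + (G_i(w) - \nabla F_i(w))$, reuses the bounds \eqref{proof2_ineq1.65} and \eqref{proof2_ineq3} from the proof of Lemma~\ref{lemma_err_moment} for the first (stochastic) piece, and bounds the second (deterministic) piece by $\alpha L B$ exactly as you do. The only cosmetic difference is that you name the stochastic piece $e_2$ while the paper calls it $\tnabla F_i(w) - G_i(w)$; the arguments are otherwise identical.
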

%%%%%%%%%%%%%%%%%%%%%%%%%%%%%%%%%%%%%%%%%%%%%%%%
%%%%%%%%%%%%%%%%%%%%%%%%%%%%%%%%%%%%%%%%%%%%%%%%
\begin{proof}
In fact, in this case, $\tnabla F_i(w)$ is approximating
\begin{equation}
G_i(w) := \nabla f_i \left (w - \alpha \nabla f_i(w) \right ).	
\end{equation}
To bound $m_F^{FO}$, note that
\begin{align}
\left \| \E \left [ \tnabla F_i(w) - \nabla F_i(w) \right ] \right \| & \leq \left \| \E \left [  \tnabla F_i(w) -  G_i(w) \right ] \right \| + \left \| \E \left [ G_i(w) - \nabla F_i(w) \right ] \right \| \\
&\leq \frac{\alpha L \sigma_G}{\sqrt{D}} + \alpha LB 	
\end{align}
where the first term follows from \eqref{proof2_ineq1.65} in the proof of Lemma \ref{lemma_err_moment} in Appendix \ref{proof_lemma_err_moment}, and the second term is obtained using
\begin{align}
\left \| G_i(w) - \nabla F_i(w)\right \| &= \alpha \left \| \nabla^2 f_i(w) \nabla f_i \left (w - \alpha \nabla f_i(w) \right ) \right \| \nonumber \\
& \leq \alpha \| \nabla^2 f_i(w) \| \cdot \| \nabla f_i \left (w - \alpha \nabla f_i(w) \right ) \| \leq \alpha L B \label{FO_3}
\end{align}
where the first inequality follows from the matrix norm definition and the last inequality is obtained using Assumption  \ref{asm_grad}.

To characterize $\tsigma_F^{FO}$, note that
\begin{equation}\label{FO_1}
\E \left [ \left \| \tnabla F_i(w) - \nabla F_i(w)\right \|^2 \right ] \leq 2 \E \left [ \left \| \tnabla F_i(w) -  G_i(w)\right \|^2 \right ] + 2 \E \left [ \left \| G_i(w) - \nabla F_i(w)\right \|^2 \right ].
\end{equation}
We bound these two terms separately. Note that we have already bounded the first term in Appendix \ref{proof_lemma_err_moment} (see \eqref{proof2_ineq3}), and we have
\begin{equation}\label{FO_2}
\E \left [ \left \| \tnabla F_i(w) -  G_i(w)\right \|^2 \right ] \leq \sigma_G^2 \left ( \frac{1}{D'} + \frac{(\alpha L)^2}{D} \right ).
\end{equation}
Plugging \eqref{FO_2} and \eqref{FO_3} into \eqref{FO_1}, we obtain the desired result. 
\end{proof}
%%%%%%%%%%%%%%%%%%%%%%%%%%%%%%%%%%%%%%%%%%%%%%%%
Note that while the first term in $\tsigma_F^{FO}$ can be made arbitrary small by choosing $D$ and $D'$ large enough, this is not the case for the second term. However, the second term is also negligible if $\alpha$ is small enough. Yet this bound suggests that this approximation introduces a non-vanishing error term which is directly carried to the final result (Theorem \ref{theorem_main}). 

\vspace{3mm}

$\bullet$ \textbf{Estimating Hessian-vector product using gradient differences:} In the context of MAML problem, it has been shown that the update of FO-MAML leads to an additive error that does not vanish as time progresses. To resolve this matter, \cite{fallah2019convergence} introduced another variant of MAML, called HF-MAML, which approximates the Hessian-vector product by gradient differences. More formally, the idea behind their method is that for any function $g$, the product of the Hessian $\nabla^2 g (w) $ by any vector $v$ can be approximated by 
%%%%
\begin{equation}
\frac{\nabla g(w+\delta v)-\nabla g(w-\delta v)}{2\delta}
\end{equation}
%%%
with an error of at most $\rho \delta \|v\|^2$, where $\rho $ is the parameter for Lipschitz continuity of the Hessian of $g$. Building on this idea, in Per-FedAvg update rule, we can replace $\tnabla F_i(w)$ by 
\begin{align}\label{HF_0}
\tilde{\nabla} f_i \left (w - \alpha \tilde{\nabla}f_i(w,\D) ,\D' \right ) - \alpha \tilde{d}_i(w)
\end{align}
where
\begin{align}\label{d_HF_MAML}
\tilde{d}_i(w) :=  \frac{\tilde{\nabla} f_i \!\left(w\!+\!\delta \tilde{\nabla} f_i(w\!-\!\al \tilde{\nabla} f_i(w,\D),\D'), \D''\right)\!-\!\tilde{\nabla} f_i\!\left(w\!-\!\delta \tilde{\nabla} f_i(w\!-\!\al \tilde{\nabla} f_i(w,\D),\D'),\D''\right)}{2\delta}.
\end{align}
For this approximation, we have the following result, which shows that we have an additional degree of freedom ($\delta$) to control the error term that does not decreased with increasing batch sizes.
%%%%%%%%%%%%%%%%%%%%%%%%%%%%%%%%%%%%%%%%%%%%%%%%
%%%%%%%%%%%%%%%%%%%%%%%%%%%%%%%%%%%%%%%%%%%%%%%%
%%%%%%%%%%%%%%%%%%%%%%%%%%%%%%%%%%%%%%%%%%%%%%%%
\begin{lemma}\label{lemma_err_moment_HF}
Assume that we estimate $\nabla F_i(w)$ by \eqref{HF_0} where $\D$, $\D'$, and $D''$ are independent batches with size $D$, $D'$, and $D''$, respectively. Suppose that the conditions in Assumptions~\ref{asm_grad}-\ref{asm_bounded_var_i} are satisfied. Then, for any $\alpha \in [0,1/L]$ and $w \in \R^d$, we have 
\begin{align*}
\left \| \E \left [ \tnabla F_i(w) - \nabla F_i(w) \right ] \right \| & \leq  m_F^{HF}:= \alpha  \left ( \frac{2L\sigma_G}{\sqrt{D}} +\frac{L\sigma_G}{\sqrt{D'}} + \rho \delta B^2 \right ), \\
\E \left [ \left \| \tnabla F_i(w) - \nabla F_i(w)\right \|^2 \right ] & \leq (\tsigma_F^{HF})^2:= 6 \sigma_G^2 \left (\frac{2(\alpha L)^2}{D} + \frac{2}{D'} + \frac{\alpha^2}{2 \delta^2 D''} \right ) + 2 (\alpha \rho \delta)^2 B^4.
\end{align*}
\end{lemma}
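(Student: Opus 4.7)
}
The plan is to decompose the estimator error $\tilde{\nabla} F_i(w) - \nabla F_i(w)$ into two pieces,
$$
\tilde{\nabla} F_i(w) - \nabla F_i(w) \;=\; A \;-\; \alpha\, B,
$$
where
$A := \tilde{\nabla} f_i(w-\alpha\tilde{\nabla}f_i(w,\D),\D') - \nabla f_i(w-\alpha\nabla f_i(w))$
collects the error coming from the ``outer'' stochastic gradient (exactly the quantity $e_2$ analyzed in Appendix \ref{proof_lemma_err_moment}), and
$B := \tilde{d}_i(w) - \nabla^2 f_i(w)\,\nabla f_i(w-\alpha\nabla f_i(w))$
collects the error coming from the Hessian--vector-product approximation. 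The bounds on $A$ in both bias ($\|\E[A]\|\le \alpha L\sigma_G/\sqrt{D}$) and second moment ($\E[\|A\|^2]\le \sigma_G^2(1/D'+(\alpha L)^2/D)$) are lifted verbatim from the proof of Lemma \ref{lemma_err_moment}, so the real work is to bound $B$.

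For $B$, I would further split it into three conceptually distinct pieces. Writing $v := \nabla f_i(w-\alpha\nabla f_i(w))$ and $v_1 := \tilde{\nabla}f_i(w-\alpha\tilde{\nabla}f_i(w,\D),\D')$, set
$$
B \;=\; \underbrace{\Big(\tilde{d}_i(w) - \tfrac{\nabla f_i(w+\delta v_1)-\nabla f_i(w-\delta v_1)}{2\delta}\Big)}_{B_1}
\;+\; \underbrace{\Big(\tfrac{\nabla f_i(w+\delta v_1)-\nabla f_i(w-\delta v_1)}{2\delta} - \nabla^2 f_i(w)\,v_1\Big)}_{B_2}
\;+\; \underbrace{\nabla^2 f_i(w)\,(v_1 - v)}_{B_3}.
$$
Condition on the $\sigma$-field generated by $(\D,\D')$, so that $v_1$ becomes deterministic. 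Then $B_1$ is zero-mean (since $\D''$ is independent of $(\D,\D')$ and each of the two gradient evaluations in $\tilde{d}_i$ is unbiased), and its conditional second moment is controlled by $\sigma_G^2/(D'')$ absorbed by $1/(4\delta^2)$ from the finite-difference denominator, giving $\E[\|B_1\|^2\mid\D,\D']\le \sigma_G^2/(2\delta^2 D'')$. For $B_2$, applying the integral form of Taylor's theorem to $\nabla f_i$ and invoking the $\rho$-Lipschitz Hessian (Assumption \ref{asm_Hesian_Lip}) yields the deterministic bound $\|B_2\|\le \rho\delta\|v_1\|^2/2$. For $B_3$, the $L$-smoothness bound $\|\nabla^2 f_i\|\le L$ and the already-obtained mean/variance bounds on $v_1 - v$ (as in the analysis of $A$) give $\|\E[B_3]\|\le \alpha L^2 \sigma_G/\sqrt{D}$ and $\E[\|B_3\|^2]\le L^2 \sigma_G^2(1/D'+(\alpha L)^2/D)$.

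To assemble the bias bound $m_F^{HF}$, I would take expectations of $A - \alpha B_1 - \alpha B_2 - \alpha B_3$, use $\E[B_1]=0$ by the tower rule, bound $\|\E[B_2]\|$ by $\tfrac{\rho\delta}{2}\E[\|v_1\|^2]$, and use $\E[\|v_1\|^2]\le B^2+\sigma_G^2/D'$ (from $\|\nabla f_i\|\le B$ and the $\sigma_G^2/D'$ variance of the $\D'$-batch). Then collect terms, using $\alpha L\le 1$, to match the claimed structure $\alpha(2L\sigma_G/\sqrt{D}+L\sigma_G/\sqrt{D'}+\rho\delta B^2)$. For the second-moment bound, apply $\|A-\alpha B\|^2\le C(\|A\|^2 + \alpha^2\|B_1\|^2 + \alpha^2\|B_2\|^2 + \alpha^2\|B_3\|^2)$ via Cauchy--Schwarz, substitute the individual bounds, and absorb lower-order $\alpha L$ factors to reach the claimed $(\tilde\sigma_F^{HF})^2$.

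The main obstacle is the careful conditional-expectation bookkeeping in the term $B_2$: because the finite-difference direction $v_1$ is itself a stochastic quantity that is correlated with the perturbed gradients inside $\tilde d_i$, one cannot simply pull expectations through. The use of the three-way independence of the batches $\D$, $\D'$, $\D''$, together with the crucial observation that $B_2$ is deterministic conditional on $(\D,\D')$, is what makes the clean additive decomposition possible. A secondary subtlety is that the Hessian-approximation error scales with $\|v_1\|^2$ rather than $\|v_1\|$, so the bound on $\E[\|v_1\|^2]$ (rather than $\|\E[v_1]\|^2$) must be used, which is where the $B^2$ factor arises.
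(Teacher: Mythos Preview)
Your decomposition is conceptually clean but differs from the paper's in one crucial respect, and that difference creates a real gap in the second-moment bound.

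The paper does \emph{not} split the Hessian--vector-product error at the stochastic direction $v_1$. Instead it introduces the intermediate quantity
\[
G_i'(w) \;=\; \nabla f_i\big(w-\alpha\nabla f_i(w)\big) \;-\; \alpha\,d_i(w),
\]
where $d_i(w)$ is the finite difference evaluated at the \emph{deterministic} direction $v=g_i(w)=\nabla f_i(w-\alpha\nabla f_i(w))$. The split is then $\tilde\nabla F_i - \nabla F_i = [\tilde\nabla F_i - G_i'] + [G_i' - \nabla F_i]$. The second bracket is the pure Taylor error $\alpha\|d_i(w)-\nabla^2 f_i(w)g_i(w)\|\le \alpha\rho\delta\|g_i(w)\|^2\le \alpha\rho\delta B^2$, which is \emph{deterministic}. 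The first bracket is then handled by comparing $\tilde\nabla f_i(w\pm\delta\tilde g_i,\D'')$ to $\nabla f_i(w\pm\delta g_i)$ via the $\D''$-variance and the $L$-smoothness drift $\|\tilde g_i-g_i\|$; this is where the $L\sigma_G/\sqrt{D'}$ term in $m_F^{HF}$ actually arises.

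By contrast, your $B_2$ is the Taylor error at the \emph{stochastic} direction $v_1$, so $\|B_2\|\le \tfrac{\rho\delta}{2}\|v_1\|^2$. For the bias this already produces $\alpha\rho\delta(B^2+\sigma_G^2/D')/2$, and the extra $\rho\delta\sigma_G^2/D'$ does not collapse into the claimed $L\sigma_G/\sqrt{D'}$ without an unassumed relation between $\rho\delta\sigma_G$ and $L\sqrt{D'}$. More seriously, for the second moment you need $\E[\|B_2\|^2]\le \tfrac{(\rho\delta)^2}{4}\E[\|v_1\|^4]$, and Assumptions~\ref{asm_grad}--\ref{asm_bounded_var_i} give only a \emph{second}-moment bound on the stochastic gradient, not a fourth-moment bound; $\E[\|v_1\|^4]$ is simply uncontrolled. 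So the step ``substitute the individual bounds'' for $\alpha^2\|B_2\|^2$ fails.

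The fix is exactly the paper's maneuver: route the finite-difference approximation through the exact direction $v=g_i(w)$, so the Taylor remainder is bounded by the deterministic $B$, and push all stochasticity into terms that only require first and second moments of $\tilde g_i-g_i$. (A smaller point: your $\E[\|B_1\|^2\mid\D,\D']\le \sigma_G^2/(2\delta^2 D'')$ implicitly treats the two $\D''$-evaluations as independent; since they share the same batch the crude bound is $\sigma_G^2/(\delta^2 D'')$.)
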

%%%%%%%%%%%%%%%%%%%%%%%%%%%%%%%%%%%%%%%%%%%%%%%%
%%%%%%%%%%%%%%%%%%%%%%%%%%%%%%%%%%%%%%%%%%%%%%%%
\begin{proof}
Note that, this time $\tnabla F_i(w)$ is approximating
\begin{equation}
G_i^{'}(w) := \nabla f_i \left (w - \alpha \nabla f_i(w) \right ) - \alpha d_i(w)	
\end{equation}
where 
\begin{equation}
d_i(w) := \frac{\nabla f_i \left ( w + \delta \nabla f_i \left (w - \alpha \nabla f_i(w) \right )\right ) - \nabla f_i \left ( w - \delta \nabla f_i \left (w - \alpha \nabla f_i(w) \right )\right )}{2 \delta}
\end{equation}
is the term approximating $\nabla^2 f_i(w) \nabla f_i \left (w - \alpha \nabla f_i(w) \right )$. Below, we characterize $\tsigma_F^{HF}$, and $m_F^{HF}$ can be done similarly. 

Similar to \eqref{FO_1}, we have
\begin{equation}\label{HF_1}
\E \left [ \left \| \tnabla F_i(w) - \nabla F_i(w)\right \|^2 \right ] \leq 2 \E \left [ \left \| \tnabla F_i(w) -  G_i^{'}(w)\right \|^2 \right ] + 2 \E \left [ \left \| G_i^{'}(w) - \nabla F_i(w)\right \|^2 \right ].
\end{equation}
We again bound both terms separately. To simplify the notation, let us define
\begin{align}
g_i(w) := \nabla f_i \left (w - \alpha \nabla f_i(w) \right ), \quad \tilde{g}_i(w) := \tilde{\nabla} f_i \left (w - \alpha \tilde{\nabla}f_i(w,\D) ,\D' \right ).	
\end{align}
First, note that, using $(a+b+c)^2 \leq 3(a^2+b^2+c^2)$ for $a,b,c \geq 0$, we have
\begin{align}
&\left \| \tnabla F_i(w) -  G_i^{'}(w)\right \|^2 \nonumber\\
& \leq 3  \| \tilde{g}_i(w) - g_i(w) \|^2 + \frac{3\alpha^2}{4 \delta^2} \left \| \tnabla f_i \left (w + \delta \tilde{g}_i(w), \D''\right ) - \nabla f_i(w+\delta g_i(w)) \right \|^2 \nonumber \\
& \quad + \frac{3\alpha^2}{4 \delta^2} \left \| \tnabla f_i \left (w - \delta \tilde{g}_i(w), \D''\right ) - \nabla f_i(w-\delta g_i(w)) \right \|^2.  	
\end{align}
Taking expectation from both sides, along with using \eqref{FO_2}, we have
\begin{align}
\E & \left [ \left \| \tnabla F_i(w) -  G_i^{'}(w)\right \|^2 \right ]	\nonumber \\
& \leq 3 \sigma_G^2 \left ( \frac{1}{D'} + \frac{(\alpha L)^2}{D} \right ) + \frac{3\alpha^2}{4 \delta^2}  \left ( \E \left [ \left \| \tnabla f_i \left (w + \delta \tilde{g}_i(w), \D''\right ) - \nabla f_i(w+\delta g_i(w)) \right \|^2 \right ] \right. \nonumber \\
& \left. \quad + \E \left [ \left \| \tnabla f_i \left (w - \delta \tilde{g}_i(w), \D''\right ) - \nabla f_i(w-\delta g_i(w)) \right \|^2 \right ] \right ) \nonumber \\
& \leq 3 \sigma_G^2 \left (\frac{\alpha^2}{2 \delta^2 D''} + \frac{1}{D'} + \frac{(\alpha L)^2}{D} \right ) + \frac{3\alpha^2}{4 \delta^2}  \left ( \E \left [ \left \| \nabla f_i \left (w + \delta \tilde{g}_i(w)\right ) - \nabla f_i(w+\delta g_i(w)) \right \|^2 \right ] \right. \nonumber \\
& \left. \quad + \E \left [ \left \| \nabla f_i \left (w - \delta \tilde{g}_i(w)\right ) - \nabla f_i(w-\delta g_i(w)) \right \|^2 \right ] \right ) \label{HF_2}
\end{align}
where \eqref{HF_2} is obtained using the fact that $\D''$ is independent from $\D$ and $\D'$ which implies
\begin{align*}
\E & \left [ \left \| \tnabla f_i \left (w \pm \delta \tilde{g}_i(w), \D''\right ) - \nabla f_i(w  \pm \delta g_i(w)) \right \|^2 \right ] \leq \frac{\sigma_G^2}{D''} \nonumber \\
& \quad \quad \quad + \E \left [ \left \| \nabla f_i \left (w  \pm \delta \tilde{g}_i(w)\right ) - \nabla f_i(w \pm \delta g_i(w)) \right \|^2 \right ].
\end{align*}
Next, note that Assumption \ref{asm_grad} yields
\begin{equation*}
\left \| \nabla f_i \left (w \pm \delta \tilde{g}_i(w)\right ) - \nabla f_i(w \pm \delta g_i(w)) \right \| \leq \delta L \| \tilde{g}_i(w) - g_i(w) \|.
\end{equation*}
Plugging this bound into \eqref{HF_2} and using \eqref{FO_1} implies
\begin{align}
\E  \left [ \left \| \tnabla F_i(w) -  G_i^{'}(w)\right \|^2 \right ] & \leq 	3 \sigma_G^2 \left (\frac{\alpha^2}{2 \delta^2 D''} + (1+ \frac{(\alpha L)^2}{2}) \left ( \frac{1}{D'} + \frac{(\alpha L)^2}{D} \right ) \right ) \nonumber \\
& \leq 3 \sigma_G^2 \left (\frac{2(\alpha L)^2}{D} + \frac{2}{D'} + \frac{\alpha^2}{2 \delta^2 D''} \right ) \label{HF_3}
\end{align}
where the last inequality is obtained using $\alpha L \leq 1$.
 
Bounding the second term in \eqref{HF_1} is more straightforward as we have
\begin{align}\label{HF_4}
\left \| G_i^{'}(w) - \nabla F_i(w)\right \| & = \alpha 	\left \| d_i(w) - \nabla^2 f_i(w) \nabla f_i \left (w - \alpha \nabla f_i(w) \right ) \right \| \leq \alpha \rho \delta \|g_i(w)\|^2 \leq \alpha \rho \delta B^2.
\end{align}
Plugging \eqref{HF_3} and \eqref{HF_4} into \eqref{HF_1} gives us the desired result. 
\end{proof}

%%%%%%%%%%%%%%%%%%%%%%%%%%%%%%%%%%%%%%%%%%%%%%%%%%%%%%%%%%%%%%%%%%%%%%%%
%%%%%%%%%%%%%%%%%%%%%%%%%%%%%%%%%%%%%%%%%%%%%%%%%%%%%%%%%%%%%%%%%%%%%%%%
%%%%%%%%%%%%%%%%%%%%%%%%%%%%%%%%%%%%%%%%%%%%%%%%%%%%%%%%%%%%%%%%%%%%%%%%
\section{More on Numerical Experiments}\label{more_experiments}
In this section, we discuss our further results on numerical experiments. We thank the anonymous reviewers for their suggestions on adding this results, and we are looking forward to further explore our method from numerical point of view in future works.

First, in Table \ref{Table2}. we provide an illustration of the numerical setting in Section \ref{sec:experiments}. 

\begin{table}
\begin{center}
\caption{Illustration of the our experiment's setting}
  \label{Table2}
\begin{tabular}{l l l|l|l|l  l|l|l|l}
\multicolumn{4}{l}{} & \multicolumn{6}{l}{Image Classes} \\
\cmidrule[2pt]{2-10}
 &  & \textbf{1} & \textbf{2} & \textbf{$\cdots$} & \textbf{5} & \textbf{6} & \textbf{7} & \textbf{$\cdots$} & \textbf{10} \\ \cmidrule[2pt]{2-10} \multirow{9}{*}{\rotatebox[origin=c]{90}{Groups of Users}} 
& \textbf{1}  & { a}   & { a}   & $\cdots$ & { a}   & 0  & 0  & $\cdots$ & 0  \\ \cline{2-10}
& \textbf{2}  & { a}   & { a}   & $\cdots$ & { a}   & 0  & 0  & $\cdots$ & 0  \\ \cline{2-10}
& \textbf{$\vdots$} & $\vdots$  & $\vdots$  & $\ddots$ & $\vdots$  & $\vdots$ & $\vdots$ & $\ddots$ & $\vdots$ \\ \cline{2-10}
& \textbf{5}  & { a}   & { a}   & $\cdots$ & { a}   & 0  & 0  & $\cdots$ & 0  \\ \cmidrule[2pt]{2-10}
& \textbf{6}  & { a/2} & 0   & $\cdots$ & 0   & { 2a} & 0  & $\cdots$ & 0  \\ \cline{2-10}
& \textbf{7}  & 0   & { a/2} & $\cdots$ & 0   & 0  & { 2a} & $\cdots$ & 0  \\ \cline{2-10}
& \textbf{$\vdots$} & $\vdots$  & $\vdots$  & $\ddots$ & $\vdots$  & $\vdots$ & $\vdots$ & $\ddots$ & $\vdots$ \\ \cline{2-10}
& \textbf{10} & 0   & 0   & $\cdots$ & { a/2} & 0  & 0  & $\cdots$ & { 2a} \\ \cmidrule[2pt]{2-10}
\end{tabular}
\end{center}
\end{table}

\begin{figure}
\centering
\begin{subfigure}{.5\textwidth}
  \centering
  \includegraphics[width=.9\linewidth]{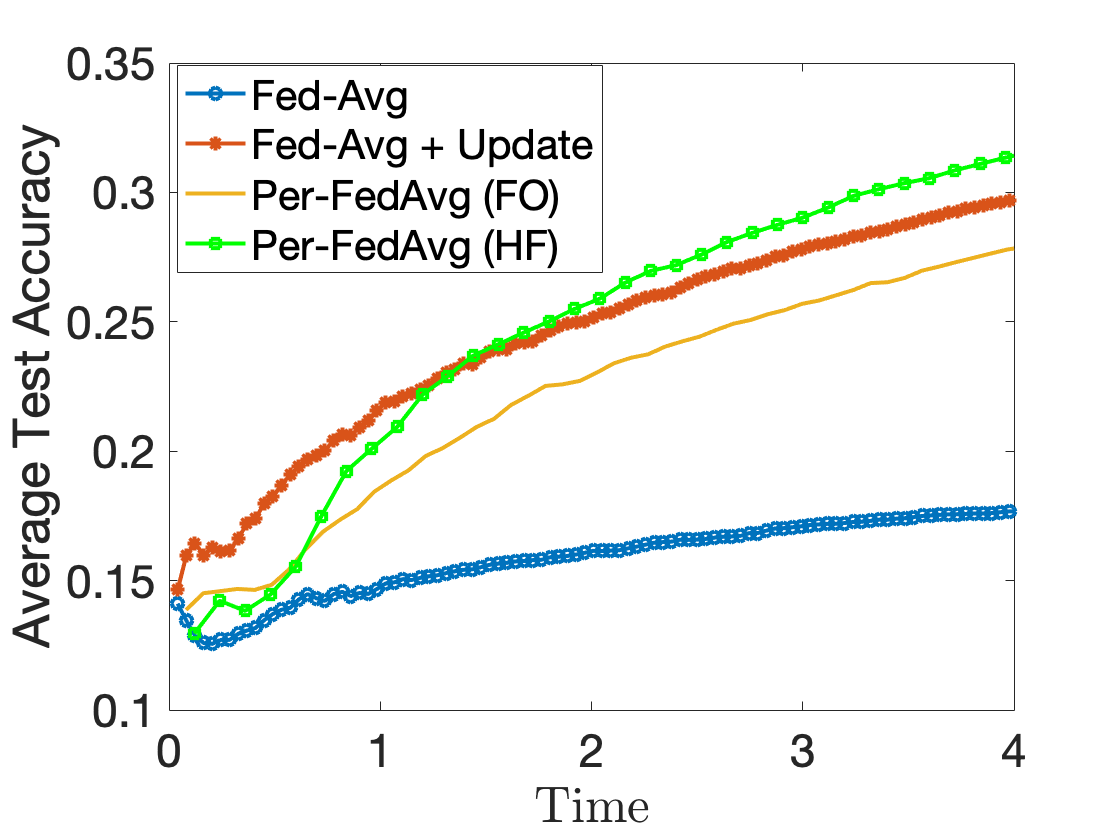}
  \caption{Comparison in terms of  runtime}
  \label{fig:sub1}
\end{subfigure}%
\begin{subfigure}{.5\textwidth}
  \centering
  \includegraphics[width=.9\linewidth]{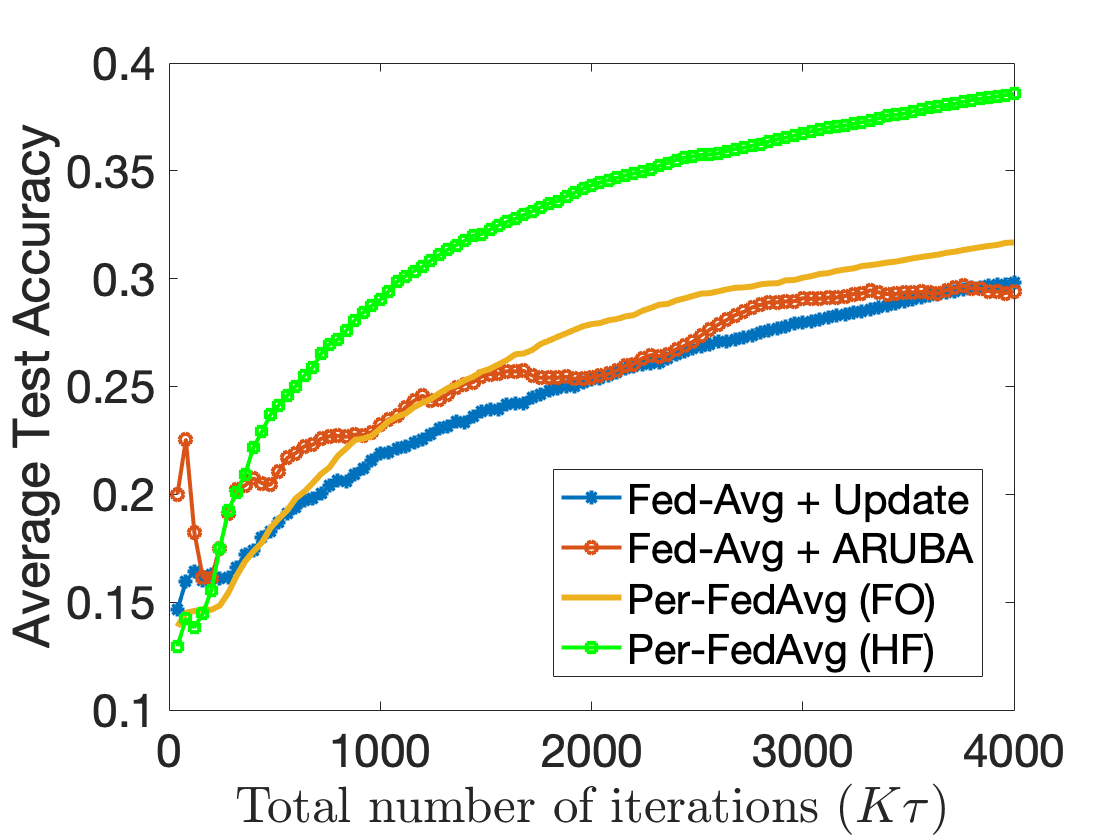}
  \caption{Comparison in terms of number of iterations}
  \label{fig:sub2}
\end{subfigure}
%\caption{A figure with two subfigures}
\label{fig:test}
\end{figure}

Second, in Figure \ref{fig:sub1}, we illustrate the average test accuracy of all studied algorithms with respect to time. As this figure shows, Per-FedAvg (HF) achieves higher level of accuracy compared to the regular Fed-Avg with local updates within the same computation time.

Third, we also compare our method with ARUBA \cite{khodak2019adaptive}. To do so, we also report the output of FedAvg+ARUBA after refinement for each user. In particular, we consider $\tau=4$ and $K=1000$, and also tune hyper-parameters of ARUBA for a fair comparison. The final accuracy of all algorithms is as follows:
Per-FedAvg(FO): $34.04 \pm 0.08$, Fed-Avg+ARUBA (with refinement): $36.74 \pm 0.1$, Per-FedAvg(HF): $43.73 \pm 0.11$. In Figure \ref{fig:sub2}, we have also depicted \textit{one realization} of training path, just to provide intuition on the convergence speed of these methods.

%%%%%%%%%%

\end{document}